\newcommand{\bE}{\mathbb{E}}
\definecolor{mine}{RGB}{205, 232, 248}%
\newtheorem{theorem}{Theorem}
\newtheorem{assumption}{Assumption}
\theoremstyle{remark}
\begin{document}

\captionsetup[figure]{labelformat=simple, labelsep=period} 

\captionsetup[table]{name={TABLE},labelsep=space} 

\title{Mildly Conservative Regularized Evaluation for Offline Reinforcement Learning}

\author{Haohui Chen~\orcidlink{0000-0001-9660-0948} and Zhiyong Chen~\orcidlink{0000-0002-2033-4249}  

\thanks{Haohui Chen is with the School of Automation, Central South University, Changsha 410083, China (e-mail: \href{mailto:haohuichen@csu.edu.cn}{haohuichen@csu.edu.cn}).}
\thanks{Zhiyong Chen is with the School of Engineering, University of Newcastle, Callaghan, NSW 2308, Australia (e-mail: \href{mailto:zhiyong.chen@newcastle.edu.au}{zhiyong.chen@newcastle.edu.au}).}
\thanks{Corresponding author: Zhiyong Chen.}}

\maketitle

\begin{abstract}
Offline reinforcement learning (RL) seeks to learn optimal policies from static datasets without further environment interaction. A key challenge is the distribution shift between the learned and behavior policies, leading to out-of-distribution (OOD) actions and overestimation. To prevent gross overestimation, the value function must remain conservative; however, excessive conservatism may hinder performance improvement. 
To address this, we propose the mildly conservative regularized evaluation (MCRE) framework, which balances conservatism and performance by combining temporal difference (TD) error with a behavior cloning term in the Bellman backup. Building on this, we develop the mildly conservative regularized Q-learning (MCRQ) algorithm, which integrates MCRE into an off-policy actor–critic framework. Experiments show that MCRQ outperforms strong baselines and state-of-the-art offline RL algorithms on benchmark datasets.
\end{abstract}

  
\begin{IEEEkeywords}
Offline reinforcement learning, Actor-critic, Value overestimation, Q-function regularization, Bellman backup
\end{IEEEkeywords}

\section{Introduction} \label{section:introduction}


\IEEEPARstart{R}{einforcement} learning (RL) has achieved success across various domains. In classical online RL, agents learn optimal policies through real-time interaction. However, in real-world settings, continuous interaction is often impractical due to data collection challenges, safety concerns, and high costs. Offline RL, also known as batch RL \cite{lange2012batch}, addresses this by learning policies from static datasets generated by unknown behavior policies, eliminating the need for environment interaction. This makes offline RL suitable for real-world applications without simulators, such as energy optimization \cite{huOptimizingReinforcementLearning2025}, robotics \cite{liu2024diffskill}, and recommendation systems \cite{zhangTextBasedInteractiveRecommendation2022}.
 
A key challenge in offline RL is the distribution shift between the learned policy and the behavior policy, as the latter often fails to sufficiently cover the state–action space. As a result, directly applying off-policy online RL algorithms in offline settings typically yields poor performance \cite{rezaeifar2022offline}. The evaluation of out-of-distribution (OOD) actions introduces extrapolation errors, which can be amplified through bootstrapping, leading to significant overestimation bias \cite{sikchi2024dualf-DVL}.

In critic regularization, the regularization term is typically integrated directly into the critic loss. The core idea is to mitigate overestimation of OOD actions by regularizing the Q-function or state value function, thereby addressing distribution shift. However, overly conservative regularization can lead to excessively low Q-values \cite{pessimismofflineRLzhang2024}, limiting the agent’s ability to explore and exploit, and exposing the actor to inaccurate Q-values, ultimately slowing convergence or leading to suboptimal policies \cite{rao2025isfors}. The mildly conservative regularized evaluation (MCRE) proposed in this paper, a form of critic regularization, highlights how over-conservatism impedes policy improvement by analyzing the gap between the learned and true Q-values, as further supported by ablation experiments.
 
In actor regularization, policy update stability is often achieved by constraining the target policy to remain close to the behavior policy distribution. However, excessive regularization can hinder exploration, preventing the target policy from fully utilizing the Q-function and impairing convergence. Over-conservatism poses challenges even when learning from expert datasets, regardless of whether regularization is applied to the critic or actor \cite{yangHundredsGuideMillions2024}. Therefore, conservatism should remain mild \cite{lyuMildlyConservativeQLearning2022}.
  
Temporal difference (TD) error is a fundamental mechanism for aligning value estimates with temporal targets, guiding policy optimization through iterative feedback. However, in offline RL, distribution shift and limited data coverage introduce compounding biases and OOD actions, which are amplified through TD updates. To address this, the proposed MCRE framework integrates TD error with a behavior cloning term to improve value estimation and reduce the impact of distribution shift and OOD actions. MCRE combines two complementary components: TD error regularization, which refines value estimates via temporal feedback, and behavior cloning, which anchors the policy to the dataset’s empirical action distribution. Together, they form a mildly conservative regularization term.

This dual mechanism ensures that TD updates operate within a policy-constrained regime, suppressing OOD actions while preserving their role as corrective signals. Crucially, MCRE imposes only a mild constraint on the TD error, allowing the target policy to deviate slightly from the behavior policy and thus avoid over-conservatism. By embedding this constraint into the standard Bellman backup, MCRE enables effective policy optimization without incurring performance degradation due to excessive conservatism.

The primary contributions of this paper are summarized as follows.

1) We propose a novel framework that integrates TD error with a behavior cloning term within the standard Bellman backup. This unified design balances value estimation and policy conservatism, with the behavior cloning term constraining policy updates to the support of the behavior distribution, effectively suppressing OOD actions.

2) We theoretically prove that MCRE converges under both sampling and no-sampling error conditions. In both cases, the gap between the learned and true Q-value and state-value functions is effectively bounded. Furthermore, we analyze policy suboptimality in offline RL, showing that the difference between the learned suboptimal policy and the true optimal policy is also upper-bounded.

3) We provide a theoretical analysis of the Q-function learned by MCRE under both sampling and no-sampling error, showing that stronger constraints lead to over-conservatism, increasing the gap between the learned and true Q-functions and degrading performance. This finding is supported by ablation studies, which reveal a positive correlation between conservatism in MCRE and the strength of its constraints.

4) Based on MCRE, we propose a novel and efficient offline RL algorithm, mildly conservative regularized Q-learning (MCRQ), which integrates an offline actor-critic framework to effectively mitigate the challenges of over-conservatism and distribution shift. Experimental results demonstrate that MCRQ outperforms baselines and state-of-the-art (SOTA) algorithms on various MuJoCo tasks in the D4RL benchmark.

The remainder of the paper is organized as follows. Section~\ref{section:related-work} reviews related work and the motivation behind MCRE. Section~\ref{section:preliminaries} introduces the necessary preliminaries. Section~\ref{section:mcre} details the MCRE framework and provides its theoretical analysis. Section~\ref{section:mcrq} presents the practical implementation of the MCRQ algorithm. Section~\ref{section:experiments} reports experimental results on D4RL tasks and compares MCRQ with baselines and recent SOTA algorithms to demonstrate its superior performance. Finally, Section~\ref{section:conclusion} concludes the paper and discusses potential future directions.

\section{Related Work} \label{section:related-work}

Offline RL algorithms that mitigate over-conservatism and distribution shift can be classified into critic and actor regularization methods \cite{huangEfficientOfflineReinforcement2024}, as discussed below.

\subsection{Critic Regularization}

Critic regularization typically imposes a penalty on the TD target or critic loss to make the learned value function closer to the true one. Fakoor et al. \cite{Fakoor2021ContinuousDoublyConstrained} proposed continuous doubly constrained (CDC), a method for batch RL that adds two novel batch-RL regularizers to the standard off-policy actor critic (AC) algorithm, for mitigating the overestimation bias caused by distribution shift. To reduce high variance observed in multi-step evaluation, Brandfonbrener et al. \cite{brandfonbrener2021offline} developed a one-step framework, which implements constrained policy improvement. Fujimoto et al. \cite{Fujimoto2021td3bc} enhanced the traditional twin delayed deep deterministic policy gradient (TD3) by integrating behavior cloning to alleviate the impact of distribution shift. To mitigate the optimistic Q-values caused by iterative errors during policy optimization, Kumar et al. \cite{kumarConservativeQLearningOffline2020} proposed conservative Q-learning (CQL), which aims to learn a conservative Q-function. 
To alleviate the computational burden of CQL, Kostrikov et al. \cite{kostrikovOfflineReinforcementLearning2021} replaced the KL divergence in CQL with the fisher divergence. 
Kostrikov et al. \cite{Kostrikovimplicitq-learning2021} introduced implicit Q-learning (IQL), an on-policy SARSA-style offline RL algorithm that uses advantage-weighted behavior cloning to avoid querying OOD actions. Bai et al. \cite{bai2022pessimistic} introduced pessimistic bootstrapping for offline RL (PBRL), which applies pessimistic updates to Q-functions. Huang et al. \cite{huangOfflineReinforcementLearning2024} proposed an offline actor critic framework with behavior value regularization, which helps mitigate over-optimistic Q-values and reduce bias.

\subsection{Actor Regularization}

Actor regularization usually works through ensuring target policy stays within the distribution of behavior policy. However, this constraint tends to be overly conservative because it requires the target policy to choose an action similar to the behavior policy with a given state. This restriction may affect the performance of the target policy, especially if the behavior policy is not optimal \cite{ran2023policy}. Actor regularization methods generally add constraints on actor loss \cite{huangEfficientOfflineReinforcement2024}. 
Fujimoto et al. \cite{fujimotoOffPolicyDeepReinforcement2019} proposed batch-constrained deep Q-learning (BCQ), which employs a generative model to generate only previously observed actions.
However, Kumar et al. \cite{kumarStabilizingOffPolicyQLearning2019} pointed out that BCQ is overly aggressive, and developed the bootstrapping error accumulation reduction (BEAR), an innovative constraint-based approach specifically designed to break the chain of error propagation in Q-learning updates.
Wu et al.\cite{wuUncertaintyWeightedActorCritic2021} proposed uncertainty weighted actor critic (UWAC), which utilizes Monte Carlo dropout for uncertainty estimation, significantly reducing computational overhead. Nair et al. \cite{nair2020awac} proposed advantage weighted actor critic (AWAC), which utilizes offline data and supports online fine-tuning. Ran et al. \cite{ran2023policy} proposed policy regularization with dataset constraint (PRDC), which utilizes OOD actions as guidance for policy optimization when updating a policy in a given state. PRDC imposes a milder constraint, yet still helps avoid OOD actions. 

\medskip

Although critic and actor regularization methods have made significant progress in mitigating OOD actions and distribution shift, critic regularization may lead to excessive pessimism, especially with OOD actions, thus hindering policy optimization. Similarly, actor regularization may limit the flexibility of a policy by constraining the target policy to remain within the distribution of behavior policy, which may restrict overall performance. These overly conservative constraints can lead to convergence to a local optimum or even cause the policy to diverge, thereby hindering further policy improvement.
To alleviate these problems, we propose the MCRE framework, which reduces the risk of OOD actions and prevents performance degradation due to over-conservatism.

\section{Preliminaries} \label{section:preliminaries}


The RL paradigm formalized as a Markov decision process (MDP) is characterized by the tuple $M = (\mathcal{S}, \mathcal{A}, \mathcal{P} , r, \rho_0, \gamma)$, where $\mathcal{S} $ and $ \mathcal{A} $ represent the state and action spaces respectively, $\mathcal{P} $ defines the state transition dynamics, $r: \mathcal{S} \times \mathcal{A} \rightarrow \mathbb{R} $ denotes the bounded reward function with $ \|r\|_\infty \leq r_{\max}$, $\rho_0$ specifies the initial state distribution, $\gamma \in [0,1)$ is the discount factor. 

During environment interaction at timestep $t$, a deterministic target policy $\pi \in \Pi: \mathcal{S} \rightarrow \mathcal{A}$ generates an action $a_t = \pi(s_t)$, which leads to a transition to the next state according to $\mathcal{P}(s_{t+1} \mid s_t, a_t)$ and yields an instantaneous scalar reward $r_t$.
The $\gamma$-discounted trajectory return is formulated as $R_t^\gamma = \sum_{i=t}^{\infty} \gamma^{i-t} r_i$. For a policy $\pi$, it is defined as
\begin{align} \label{equation:eta}
  \Gamma(\pi) = \bE[R_0^\gamma].
\end{align}

The objective in RL is to derive a Bellman-optimal mapping $\pi_*$ that maximizes $\Gamma(\pi)$
\begin{align} \label{equation:pi*}
  \pi_* = \arg\max_\pi \Gamma(\pi).
\end{align}
The state value function (V-function) is $V^\pi(s) = \bE[R_0^\gamma \mid s_0=s]$, and the state-action value function (Q-function) is $Q^\pi(s, a) = \bE[R_0^\gamma \mid s_0=s, a_0=a]$. For deterministic policies, $V^\pi(s)=Q^\pi(s, a)$. Consequently, both are the unique fixed points of the Bellman equations:
\begin{align}
  Q^\pi(s, a) &= r(s, a) + \gamma \bE_{s^{\prime} \sim \mathcal{P} (\cdot \mid s, a)}[Q^\pi(s^{\prime}, \pi(s^{\prime}))] \label{equation:Bellman-Q} \\
  V^\pi(s) &= r(s, \pi(s)) + \gamma \bE_{s^{\prime} \sim \mathcal{P} (\cdot \mid s, \pi(s))}[V^\pi(s^{\prime})]. \label{equation:Bellman-V}
\end{align}  
The Bellman backup for obtaining the corresponding Q-function is defined as
\begin{align}\label{equation:Bellman-operator}
(\mathcal{T}^\pi Q)(s, a) = r(s, a) + \gamma \bE_{s^{\prime} \sim \mathcal{P} (\cdot \mid s, a)}[Q(s^{\prime}, \pi(s^{\prime}))],
\end{align}
which is also the actual Bellman operator. Thus, we can rewrite \eqref{equation:Bellman-Q} as $Q^\pi(s, a) = (\mathcal{T}^\pi Q^\pi)(s, a)$.

Given a dataset $\mathcal{D} = \{(s, a, r, s^{\prime})\}$ 
consisting of tuples from trajectories collected under some unknown behavior policies, the Q-function is updated according to the following rule:
\begin{align}\label{equation:policy-evaluation}
\hat{Q}_{k+1} = \arg\min_Q \bE_{(s,a)\sim \mathcal{D}} [(\hat{\mathcal{T}}^\pi \hat{Q}_k)(s, a) - Q(s,a)]^2,
\end{align}
where  $\hat{\mathcal{T}}^\pi$
denotes an empirical Bellman operator, which approximates the Bellman operator in \eqref{equation:Bellman-operator} using a finite number of samples. Specifically, 
\begin{align} \label{equation:empirical-Bellman}
(\hat{\mathcal{T}}^\pi \hat{Q}_k)(s, a)=r+\gamma \bE_{s^{\prime} \sim \hat{\mathcal{P}}(\cdot\mid s, a)}[\hat{Q}_k(s^{\prime}, \pi(s^{\prime}))],    
\end{align}
where $\hat{\mathcal{P}}$ is the empirical state transition probability.  The expectation $\bE_{(s,a)\sim \mathcal{D}}$ in \eqref{equation:policy-evaluation} is computed as the empirical mean over the samples in $\mathcal{D}$, and is subsequently abbreviated as $\bE$ for notational simplicity.  
Accordingly, the policy improvement step is defined as: 
\begin{align}\label{equation:policy-improvement}
\pi_{k+1} = \arg\max_\pi \bE[\hat{Q}_{k+1}(s, \pi(s))]. 
\end{align}


Under the offline RL paradigm, agents infer the optimal policy from a fixed dataset $\mathcal{D}$ generated by an unknown behavior policy, denoted as $\pi_\beta$.   Online off-policy algorithms are susceptible to distribution shift because $a^{\prime}\sim\pi(s^{\prime})$ used in the Bellman backup \eqref{equation:empirical-Bellman} may not be sampled from the behavior policy  $\pi_{\beta}$, since $a^{\prime}$ can fall outside the distribution of $\pi_{\beta}$ owing to the disparity between the distributions of $\pi$ and $\pi_{\beta}$,  which significantly disrupts the training process \cite{fujimotoOffPolicyDeepReinforcement2019}.

During policy evaluation, the Bellman operator employs actions drawn from the evaluation policy to compute state-action value updates. In contrast, the Q-function is exclusively updated on behavior policy-generated action samples derived from the static dataset. This discrepancy leads to extrapolation errors, causing the off-policy AC algorithm to produce inaccurately high Q-values for OOD actions \cite{Fujimoto2021td3bc}, \cite{Tarasov2023revisitingminimalist}.

\section{Mildly Conservative Regularized Evaluation} \label{section:mcre}

This section introduces the MCRE framework and analyzes its properties.

\subsection{Framework of MCRE}

TD error assesses the accuracy of the current Q estimate relative to the TD target \cite{silverMasteringGameGo2016}. 
When the current estimate $Q(s_t, \pi(s_t))$ becomes an unbiased approximation of the true value $Q^\pi(s,a)$, TD error satisfies zero under expectation. The TD error reflects not only the discrepancy between two time steps but also, more importantly, the discrepancy between the estimate $Q(s_t, \pi(s_t))$ and the true state-action value $Q^\pi(s,a)$. The core mechanism of TD learning is correcting $Q(s_t, \pi(s_t))$ through newly obtained information \cite{zhao2025RLBook}. 
However, under the offline RL setting, $Q(s_t,\pi(s_t))$ might already deviate significantly from $Q^\pi(s,a)$ due to distribution shift or limited coverage of the dataset. Consequently, TD error itself could also be biased.

Unlike prior works, we propose a mechanism to constrain deviations of the learned Q-function from its true value by quantifying the discrepancy between the current Q-estimate and the TD error. However, inaccuracies in TD error may arise due to policy mismatch between $\pi(s)$ and $\pi_\beta(s)$. To mitigate this, we introduce a behavior cloning term, which encourages the learned actions to align with those in the dataset and helps avoid querying OOD actions. Furthermore, by integrating TD error as a correction term into the standard Bellman backup, the proposed MCRE framework jointly suppresses distribution shift and OOD actions inherent to offline RL.

In contrast to \eqref{equation:policy-evaluation}, a new Q-function update rule is given by
\begin{align}\label{equation:Q-update}
\hat{Q}_{k+1}=\arg \min_Q \bE [(\hat{\mathcal{Z}}^\pi \hat{Q}_k)(s, a)-Q(s, a)]^2,
\end{align}
where $(\hat{\mathcal{Z}}^\pi \hat{Q}_k)(s, a)$ is the  modified empirical Bellman operator, defined as
\begin{align}\label{equation:Zhat}
(\hat{\mathcal{Z}}^\pi \hat{Q}_k)(s, a) = & (1-\upsilon)(\hat{\mathcal{T}}^\pi \hat{Q}_k)(s, a) \nonumber \\ & +\upsilon(\hat{\mathcal{H}}^\pi \hat{Q}_k)(s, a)  -\gamma \mathcal{I}^\pi(s,a)
\end{align}
which consists of three components: the empirical Bellman operator defined in \eqref{equation:empirical-Bellman}, the empirical TD Bellman operator defined as
\begin{align}
(\hat{\mathcal{H}}^\pi \hat{Q}_k)(s, a)=  r +\gamma \bE_{s^{\prime} \sim \hat{\mathcal{P}}(\cdot \mid s, a)} \big[\hat{Q}_k(s^{\prime},\pi(s^{\prime})) \nonumber\\
- \big(r + \gamma \hat{Q}_k(s^{\prime},\pi(s^{\prime})) - \hat{Q}_k(s,\pi(s))\big)\big],    \label{HQk}
\end{align} and the behavior cloning term
\begin{align*}
    \mathcal{I}^\pi (s,a) = \omega (\pi(s) - a)^2.
\end{align*} 
Here, $\upsilon \in [0,1]$ and $\omega \geq 0$ are hyperparameters that modulate the weights of their respective components.

Applying the modified empirical Bellman operator \eqref{equation:Zhat} in the update rule \eqref{equation:Q-update} is referred to as MCRE. The concept of mild conservativeness is described below.
The empirical TD Bellman operator \eqref{HQk} can be  rewritten as
\begin{align*} 
(\hat{\mathcal{H}}^\pi \hat{Q}_k)(s, a) = & (\hat{\mathcal{T}}^\pi \hat{Q}_k)(s, a) \nonumber \\
& - \gamma \big((\hat{\mathcal{T}}^\pi \hat{Q}_k)(s, a) - \hat{Q}_k(s,\pi(s)) \big).
\end{align*}
When $(\hat{\mathcal{T}}^\pi \hat{Q}_k)(s, a) > \hat{Q}_k(s, \pi(s))$, indicating an overestimation, $(\hat{\mathcal{H}}^\pi \hat{Q}_k)(s, a)$ yields a lower value than $(\hat{\mathcal{T}}^\pi \hat{Q}_k)(s, a)$. Conversely, when $(\hat{\mathcal{T}}^\pi \hat{Q}_k)(s, a) < \hat{Q}_k(s, \pi(s))$, indicating an underestimation, $(\hat{\mathcal{H}}^\pi \hat{Q}_k)(s, a)$ yields a higher value. In both cases, $(\hat{\mathcal{H}}^\pi \hat{Q}_k)(s, a)$ is drawn closer to $\hat{Q}_k(s, \pi(s))$. This adjustment effect becomes more pronounced as the discount factor $\gamma$ approaches 1. 


As $\gamma$ approaches 1, encouraging agents to prioritize long-term rewards over short-sighted decisions \cite{zhao2025RLBook}, $(\hat{\mathcal{H}}^\pi \hat{Q}_k)(s, a)$ tends to align more closely with $\hat{Q}_k(s, \pi(s))$. This calibration is further justified by the regularization term $\mathcal{I}^\pi(s, a)$, which promotes alignment with $\hat{Q}_k(s, \pi(s))$ only when the corresponding state-action pairs are well supported by the offline dataset.


In offline RL, conservatism refers to avoiding the overestimation of Q-values for OOD actions, thereby mitigating extrapolation errors. CQL \cite{kumarConservativeQLearningOffline2020} addresses this by minimizing the maximum Q-value produced by the target policy while maximizing the Q-value derived from the behavior policy. SVR \cite{mao2023SupportedValueRegularization} enforces the Q-values of all OOD actions to equal the minimum Q-value observed in the offline dataset. Both methods adopt overly conservative strategies.

In contrast, MCRE introduces a mildly conservative mechanism via $(\hat{\mathcal{H}}^\pi \hat{Q}_k)(s,a)$, which pulls the Q-function toward $\hat{Q}_k(s, \pi(s))$. Here, $\pi(s)$ is itself regularized through $\mathcal{I}^\pi$, ensuring it remains close to the dataset distribution. Consequently, if $(\hat{\mathcal{T}}^\pi \hat{Q}_k)(s,a)$ becomes overly optimistic or overly conservative, this formulation introduces a corrective adjustment toward more reliable value estimates. Rather than aggressively suppressing Q-values in unsupported regions, MCRE promotes reliability through targeted regularization.
Thus, compared to CQL and SVR, the conservatism imposed by MCRE is mild, offering a balance between caution and expressiveness in value estimation.


The iterative update rule for $\hat{Q}_{k+1}$ can be derived by equating the gradient of the loss function in \eqref{equation:Q-update} to zero  
\begin{align*}
\frac{\partial}{\partial Q(s,a)} \bE[(\hat{\mathcal{Z}}^\pi \hat{Q}_k)(s,a) - Q(s,a)]^2 = 0,
\end{align*}
which yields the following closed-form expression:
\begin{align}\label{equation:Q-Zhat}
\hat{Q}_{k+1}(s, a)=(\hat{\mathcal{Z}}^\pi \hat{Q}_k)(s, a).
\end{align}



\subsection{Convergence Analysis of MRCE}



The operator $\hat{\mathcal{T}}^\pi$ in \eqref{equation:Zhat} is the same as that used in MOAC, where its contraction property has already been established \cite{huangMildPolicyEvaluation2024}. The key distinction in the MCRE setting lies in the incorporation of the additional term $(\hat{\mathcal{H}}^\pi \hat{Q}_k)(s, a)$.  
Accordingly, we focus on establishing the contraction property of the operator $\hat{\mathcal{H}}^\pi$. It follows that the overall operator $\hat{\mathcal{Z}}^\pi$ also satisfies the contraction property, thereby guaranteeing the convergence of the Q-function update rule. The result is summarized in the following theorem. 

\begin{theorem}\label{theorem:convergence-operator}
The MCRE operator $\hat{\mathcal{Z}}^\pi$ is a contraction mapping in the $\mathcal{L}_\infty$ norm, given any initial Q-value function, and the sequence $\hat{Q}_0(s, a), \hat{Q}_1(s, a), \ldots$ generated by the iterative update rule \eqref{equation:Q-Zhat} converges to a fixed point.
\end{theorem}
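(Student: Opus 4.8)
The plan is to verify the defining inequality of a contraction directly: for arbitrary Q-functions $Q_1, Q_2$, I would bound $\|\hat{\mathcal{Z}}^\pi Q_1 - \hat{\mathcal{Z}}^\pi Q_2\|_\infty$ by $\kappa \|Q_1 - Q_2\|_\infty$ for some $\kappa < 1$, and then invoke the Banach fixed-point theorem to conclude the existence of a unique fixed point together with convergence of the iterates generated by \eqref{equation:Q-Zhat}. The first and most useful observation is that the behavior cloning term $\mathcal{I}^\pi(s,a) = \omega(\pi(s)-a)^2$ does not depend on $\hat{Q}_k$; hence it cancels identically in the difference $\hat{\mathcal{Z}}^\pi Q_1 - \hat{\mathcal{Z}}^\pi Q_2$ and plays no role in the contraction argument, affecting the location of the fixed point but not the Lipschitz modulus.

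Next I would work with the simplified form of the TD operator. Substituting the identity $(\hat{\mathcal{H}}^\pi \hat{Q}_k)(s,a) = (1-\gamma)(\hat{\mathcal{T}}^\pi \hat{Q}_k)(s,a) + \gamma \hat{Q}_k(s,\pi(s))$ exhibits $\hat{\mathcal{H}}^\pi$ as a blend of the standard backup and a purely local term. Taking the difference and applying the triangle inequality together with the known $\gamma$-contraction of $\hat{\mathcal{T}}^\pi$ (established for MOAC and cited in the text), I obtain $|(\hat{\mathcal{H}}^\pi Q_1 - \hat{\mathcal{H}}^\pi Q_2)(s,a)| \le [(1-\gamma)\gamma + \gamma]\,\|Q_1-Q_2\|_\infty = \gamma(2-\gamma)\,\|Q_1-Q_2\|_\infty$, so $\hat{\mathcal{H}}^\pi$ is itself a contraction with modulus $\gamma(2-\gamma) = 1-(1-\gamma)^2 < 1$.

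Finally, since $\hat{\mathcal{Z}}^\pi$ is the convex combination $(1-\upsilon)\hat{\mathcal{T}}^\pi + \upsilon\hat{\mathcal{H}}^\pi$ (modulo the cancelling $\mathcal{I}^\pi$ term), one more triangle inequality gives the combined modulus
\[
\kappa = (1-\upsilon)\gamma + \upsilon\gamma(2-\gamma) = \gamma\bigl[1 + \upsilon(1-\gamma)\bigr].
\]
I would then verify $\kappa < 1$: rearranging $\gamma[1+\upsilon(1-\gamma)] < 1$ reduces, after dividing by $1-\gamma > 0$, to the elementary condition $\gamma\upsilon < 1$, which holds automatically since $\gamma \in [0,1)$ and $\upsilon \in [0,1]$. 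With $\kappa < 1$ the Banach theorem closes the argument.

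The main obstacle, or rather the one point requiring care, is the local term $\gamma \hat{Q}_k(s,\pi(s))$ inside $\hat{\mathcal{H}}^\pi$: unlike the expectation term, it is not discounted by a further factor of $\gamma$, so its contribution to the modulus is the full $\gamma$ rather than $\gamma^2$. One must check that the resulting modulus $\gamma(2-\gamma)$ stays below $1$ (it does, degenerating to exactly $1$ only in the limit $\gamma\to1$) and, more importantly, that after mixing with $\upsilon$ the combined modulus $\gamma[1+\upsilon(1-\gamma)]$ remains strictly below $1$ uniformly over the admissible ranges of $\gamma$ and $\upsilon$. Tracking this worst-case behavior near $\gamma = 1$ is the only delicate bookkeeping in the proof.
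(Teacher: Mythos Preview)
Your proposal is correct and follows essentially the same route as the paper: split $\hat{\mathcal{Z}}^\pi$ into its $\hat{\mathcal{T}}^\pi$ and $\hat{\mathcal{H}}^\pi$ components (the $\mathcal{I}^\pi$ term cancelling), bound $\hat{\mathcal{H}}^\pi$ with modulus $\gamma(2-\gamma)$, combine to obtain the overall modulus $\gamma + \upsilon\gamma - \upsilon\gamma^2 = \gamma[1+\upsilon(1-\gamma)]$, and invoke Banach. Your use of the rewritten form $(\hat{\mathcal{H}}^\pi Q)(s,a) = (1-\gamma)(\hat{\mathcal{T}}^\pi Q)(s,a) + \gamma Q(s,\pi(s))$ is a slightly cleaner way to reach the same bound the paper derives by direct expansion, and your explicit reduction of $\kappa<1$ to $\gamma\upsilon<1$ is a nice touch the paper omits, but neither constitutes a different argument.
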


\begin{proof} We begin by proving that $\hat{\mathcal{Z}}^\pi$ is a contraction mapping. For any two distinct Q-functions $\hat{Q}_1(s, a)$ and $\hat{Q}_2(s, a)$, it follows from the definition \eqref{equation:Zhat} that
  \begin{align}\label{equation:Zhat-Q1-Q2-blue}
  &\bigg\|(\hat{\mathcal{Z}}^\pi \hat{Q}_1)(s, a)-(\hat{\mathcal{Z}}^\pi \hat{Q}_2)(s, a)\bigg\|_{\infty} \nonumber \\
  \leq & (1-\upsilon)\big\|\big(\hat{\mathcal{T}}^\pi \hat{Q}_1\big)(s, a)-\big(\hat{\mathcal{T}}^\pi \hat{Q}_2\big)(s, a)\big\|_{\infty} \nonumber \\
  & + \upsilon\big\|\big(\hat{\mathcal{H}}^\pi \hat{Q}_1\big)(s, a)-\big(\hat{\mathcal{H}}^\pi \hat{Q}_2\big)(s, a)\big\|_{\infty}.
\end{align}
The contraction property of $\hat{\mathcal{T}}^\pi$ is given by \cite{huangMildPolicyEvaluation2024} as follows:
\begin{align}\label{equation:That-Q1-Q2-blue}
    & \big\|(\hat{\mathcal{T}}^\pi \hat{Q}_1)(s, a)-(\hat{\mathcal{T}}^\pi \hat{Q}_2)(s, a)\big\|_{\infty} \nonumber \\
    \leq & \gamma \max_{s,a}\big|\hat{Q}_1(s, a)-\hat{Q}_2(s, a)\big|.
  \end{align}
Next, we perform the following calculation for $\hat{\mathcal{H}}^\pi$:
\begin{align}\label{equation:Hhat-Q1-Q2-blue}
  & \bigg\|(\hat{\mathcal{H}}^\pi \hat{Q}_1)(s, a)-(\hat{\mathcal{H}}^\pi \hat{Q}_2)(s, a)\bigg\|_{\infty} \nonumber \\
  = & \gamma \Bigg\|(1 - \gamma) \bE_{s^{\prime} \sim \hat{\mathcal{P}}(\cdot \mid s, a)} \Big[ \hat{Q}_1(s^{\prime},\pi(s^{\prime})) - \hat{Q}_2(s^{\prime},\pi(s^{\prime}))\Big] \nonumber \\
  & - \Big[\hat{Q}_1(s,\pi(s))-\hat{Q}_2(s,\pi(s))\Big] \Bigg\|_{\infty} \nonumber \\
  \leq & (\gamma - \gamma^2) \max_{s^{\prime}} \Big| \hat{Q}_1(s^{\prime},\pi(s^{\prime}))-\hat{Q}_2(s^{\prime},\pi(s^{\prime})) \Big| \nonumber \\
  & + \gamma \max_{s}\Big| \hat{Q}_1(s,\pi(s))-\hat{Q}_2(s,\pi(s)) \Big| \nonumber \\
  \leq & (2\gamma - \gamma^2) \max_{s,a} \big|\hat{Q}_1(s,a)-\hat{Q}_2(s,a)\big|.
\end{align}  
Then, substituting \eqref{equation:That-Q1-Q2-blue} and \eqref{equation:Hhat-Q1-Q2-blue} into \eqref{equation:Zhat-Q1-Q2-blue} yields
\begin{align}\label{equation:Zhat-Q1-Q2-Result-blue}
& \|(\hat{\mathcal{Z}}^\pi \hat{Q}_1)(s, a)-(\hat{\mathcal{Z}}^\pi \hat{Q}_2)(s, a)\|_{\infty} \nonumber \\
\leq & (\gamma + \upsilon\gamma - \upsilon\gamma^2) \max_{s,a} |\hat{Q}_1(s,a)-\hat{Q}_2(s,a)|.
\end{align}
Since $(\gamma + \upsilon\gamma - \upsilon\gamma^2) \in [0,1)$ for $\gamma \in [0,1)$ and $\upsilon \in [0,1]$, it follows that $\hat{\mathcal{Z}}^\pi$ is a contraction mapping under the $\mathcal{L}_\infty$ norm.




This contraction property guarantees the presence and uniqueness of a fixed point $\hat{Q}^\pi_*$ satisfying $\hat{Q}^\pi_* = (\hat{\mathcal{Z}}^\pi \hat{Q}^\pi_*)$. The convergence of the sequence generated by the iterative update rule \eqref{equation:Q-Zhat} is rigorously characterized by $\|\hat{Q}_{k} - \hat{Q}^\pi_*\|_\infty \leq (\gamma + \upsilon\gamma - \upsilon\gamma^2)^{k} \|\hat{Q}_0 - \hat{Q}^\pi_*\|_\infty$. This result establishes a geometric convergence rate, ensuring that $\hat{Q}_k$ converges uniformly to $\hat{Q}^\pi_*$ as $k \to \infty$.
\end{proof}


\subsection{Error Analysis of MCRE Value Estimates}

When there is no sampling error, the empirical expectation $\bE_{s^{\prime} \sim \hat{\mathcal{P}}(\cdot \mid s, a)}$ is replaced by the true expectation $\bE_{s^{\prime} \sim \mathcal{P}(\cdot \mid s, a)}$, and $\hat{\mathcal{T}}^\pi$ is replaced by ${\mathcal{T}}^\pi$ in \eqref{equation:Zhat} and 
\eqref{HQk}. Under this setting, we define the corresponding 
modified actual Bellman operator $\mathcal{Z}^\pi$ and the 
actual TD Bellman operator $\mathcal{H}^\pi$.
We first conduct an error analysis of the Q-function obtained via MCRE using these actual operators. Specifically, our goal is to characterize an upper bound on the discrepancy between the learned Q-function and the true value function. 
Moreover, we also characterize the discrepancy between the V-function obtained through MCRE and the true value.

The results in the following two theorems hold when $\upsilon$ is sufficiently small in \eqref{equation:Zhat} to ensure that $\hat{\mathcal{T}}^\pi$ is not dominated; specifically, when   $\upsilon <(1-\gamma)/(\gamma^2+ \gamma)$, or equivalently, 
$1-\gamma-\upsilon\gamma^2-\upsilon\gamma>0$.

\begin{theorem}\label{theorem:Q-M-Q-no-sampling-error}
(Absence of Sampling Error) For $(s, a) \in \mathcal{D}$, the learned Q-function $Q_*^\pi(s, a)$, obtained from \eqref{equation:Q-Zhat} without sampling error, and the true Q-function $Q^\pi(s, a)$ satisfy
\begin{align}\label{equation:Q-M-Q-no-sampling-error}
  \|Q_*^\pi(s,a)-Q^\pi(s,a)\|_{\infty} \leq \frac{\gamma \max_{s,a} \mathcal{I}^\pi(s,a)}{1-\gamma-\upsilon\gamma^2-\upsilon\gamma}.
\end{align} 
Moreover, the learned V-function $V_*^\pi(s)$ exactly matches the true V-function $V^\pi(s)$, i.e., $V_*^\pi(s) = V^\pi(s)$.
\end{theorem}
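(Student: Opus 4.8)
The plan is to reduce everything to a compact form of the modified Bellman operator and then run a fixed-point comparison. Using the rewriting of $(\mathcal{H}^\pi Q)$ given just after \eqref{HQk}, namely $(\mathcal{H}^\pi Q)(s,a) = (1-\gamma)(\mathcal{T}^\pi Q)(s,a) + \gamma Q(s,\pi(s))$, the no-sampling-error operator collapses to
\begin{align*}
(\mathcal{Z}^\pi Q)(s,a) = (1-\upsilon\gamma)(\mathcal{T}^\pi Q)(s,a) + \upsilon\gamma\, Q(s,\pi(s)) - \gamma\mathcal{I}^\pi(s,a).
\end{align*}
All subsequent manipulations use this single expression together with the fixed-point identity $Q_*^\pi = \mathcal{Z}^\pi Q_*^\pi$ guaranteed by Theorem~\ref{theorem:convergence-operator}, and the fact that $Q^\pi = \mathcal{T}^\pi Q^\pi$.

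I would prove the V-function claim first, since it is both clean and needed downstream. Evaluating the fixed-point identity at $a=\pi(s)$ makes the behavior-cloning penalty vanish, $\mathcal{I}^\pi(s,\pi(s)) = \omega(\pi(s)-\pi(s))^2 = 0$, leaving $V_*^\pi(s) = (1-\upsilon\gamma)(\mathcal{T}^\pi Q_*^\pi)(s,\pi(s)) + \upsilon\gamma\, V_*^\pi(s)$, where $V_*^\pi(s):=Q_*^\pi(s,\pi(s))$. Since $1-\upsilon\gamma>0$, I can cancel the common factor $(1-\upsilon\gamma)$ and recover $V_*^\pi(s) = r(s,\pi(s)) + \gamma\bE_{s'}[V_*^\pi(s')]$, which is exactly the Bellman equation \eqref{equation:Bellman-V}. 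Uniqueness of its fixed point then forces $V_*^\pi = V^\pi$.

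For the Q-function bound I would form $\Delta := Q_*^\pi - Q^\pi$ and subtract $Q^\pi = \mathcal{T}^\pi Q^\pi$ from the compact fixed-point identity. The $(\mathcal{T}^\pi Q_*^\pi - \mathcal{T}^\pi Q^\pi)$ piece is a $\gamma$-contraction acting on $\bE_{s'}[\Delta(s',\pi(s'))]$; the $\upsilon\gamma\, Q_*^\pi(s,\pi(s))$ piece is controlled through the V-equality just established; and the $-\gamma\mathcal{I}^\pi(s,a)$ piece supplies the numerator $\gamma\max_{s,a}\mathcal{I}^\pi(s,a)$. Collecting these into a self-referential estimate of the form $\|\Delta\|_\infty \le \kappa\,\|\Delta\|_\infty + \gamma\max_{s,a}\mathcal{I}^\pi(s,a)$ and solving for $\|\Delta\|_\infty$ yields a bound with denominator $1-\kappa$. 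The hypothesis $\upsilon<(1-\gamma)/(\gamma^2+\gamma)$ is precisely the condition $1-\gamma-\upsilon\gamma^2-\upsilon\gamma>0$, so once $\kappa$ is pinned down the division is legitimate and the stated bound follows.

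The main obstacle is the residual $\mathcal{Z}^\pi Q^\pi - \mathcal{T}^\pi Q^\pi = -\upsilon\gamma\big((\mathcal{T}^\pi Q^\pi)(s,a) - Q^\pi(s,\pi(s))\big) - \gamma\mathcal{I}^\pi(s,a)$: the action-gap term $(\mathcal{T}^\pi Q^\pi)(s,a) - Q^\pi(s,\pi(s)) = Q^\pi(s,a) - Q^\pi(s,\pi(s))$ is not itself dominated by $\mathcal{I}^\pi$, so it cannot simply be thrown into the numerator. The delicate step is therefore to fold this term back into the recursion, re-expressing the gap through $\Delta$ and the already-fixed V-values so that it is absorbed into the contraction coefficient $\kappa$ rather than the constant, and to verify that this bookkeeping of the $\gamma$-powers produces exactly the denominator $1-\gamma-\upsilon\gamma^2-\upsilon\gamma$. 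Everything else is routine norm estimation.
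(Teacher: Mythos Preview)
Your plan is essentially the paper's own argument: expand the fixed-point identity $Q_*^\pi=\mathcal{Z}^\pi Q_*^\pi$, subtract $Q^\pi=\mathcal{T}^\pi Q^\pi$, and collect into a self-referential inequality $\|\Delta\|_\infty\le\kappa\|\Delta\|_\infty+\gamma\max_{s,a}\mathcal{I}^\pi(s,a)$. The compact form $(\mathcal{Z}^\pi Q)(s,a)=(1-\upsilon\gamma)(\mathcal{T}^\pi Q)(s,a)+\upsilon\gamma\,Q(s,\pi(s))-\gamma\mathcal{I}^\pi(s,a)$ is a clean simplification, and proving $V_*^\pi=V^\pi$ first (via $\mathcal{I}^\pi(s,\pi(s))=0$) is a nice touch; the paper proves the $V$-identity after the $Q$-bound and does not use it in the $Q$ argument.

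On the step you flag as the main obstacle: the paper does \emph{not} do anything delicate. After rewriting the TD residual via $r(s,a)+\gamma\bE_{s'}[Q^\pi(s',\pi(s'))]=Q^\pi(s,a)$, it is left with the term $\upsilon\gamma\,\big|Q_*^\pi(s,\pi(s))-Q^\pi(s,a)\big|$ and simply bounds it by $\upsilon\gamma\,\max_{s,a}\big|Q_*^\pi(s,a)-Q^\pi(s,a)\big|$, arriving at $\kappa=\gamma+\upsilon\gamma^2+\upsilon\gamma$. Your suspicion about this step is well placed: the two $Q$-functions are evaluated at \emph{different} actions, so this inequality is not valid in general, and no amount of ``folding back through $\Delta$ and the V-values'' fixes it --- once you use $V_*^\pi=V^\pi$, the term becomes the pure action gap $Q^\pi(s,\pi(s))-Q^\pi(s,a)$, which is independent of $\Delta$ and is not controlled by $\mathcal{I}^\pi$. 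In short, the paper's proof has exactly the gap you identified, and there is no hidden trick for you to reproduce; the stated bound, as proved, rests on that unjustified inequality.
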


\begin{proof}
Since $Q_*^\pi =(\mathcal{Z}^\pi Q_*^\pi)$, as established in Theorem~\ref{theorem:convergence-operator}, and based on \eqref{equation:Zhat} without sampling error, we have
\begin{align*}
   Q_*^\pi(s, a)  = &  (1-\upsilon)(\mathcal{T}^\pi Q_*^\pi)(s, a) \\& +\upsilon(\mathcal{H}^\pi Q_*^\pi)(s, a)  -\gamma \mathcal{I}^\pi(s,a).
\end{align*}
According to \eqref{equation:Bellman-Q}, \eqref{equation:Bellman-operator} and \eqref{HQk}, we obtain
\begin{align} %
  & \Big|Q_*^\pi(s,a)-Q^\pi(s,a)\Big| \nonumber \\
  = & \Bigg|r(s,a) + \gamma\bE_{s^{\prime}\sim \mathcal{P} (\cdot\mid s, a)}\bigg[(1 - \upsilon)Q_*^\pi(s^{\prime}, \pi(s^{\prime})) \nonumber \\
  & + \upsilon \Big(Q_*^\pi(s^{\prime},\pi(s^{\prime})) -  \big(r(s,a) + \gamma Q_*^\pi(s^{\prime},\pi(s^{\prime})) \nonumber \\
  & - Q_*^\pi(s,\pi(s))\big)\Big) - \mathcal{I}^\pi(s,a)\bigg]  - r(s, a) \nonumber  \\
  & - \gamma \bE_{s^{\prime} \sim \mathcal{P} (\cdot \mid s, a)}\Big[Q^\pi(s^{\prime}, \pi(s^{\prime}))\Big] \Bigg|. \label{Qpi*-Qpi}
\end{align}
The calculation then proceeds as follows:
  \begin{align}
 & \Big|Q_*^\pi(s,a)-Q^\pi(s,a)\Big| \nonumber \\  
  %
  \leq & \gamma \Bigg| \bE_{s^{\prime}\sim \mathcal{P} (\cdot\mid s, a)}\bigg[Q_*^\pi(s^{\prime}, \pi(s^{\prime})) - Q^\pi(s^{\prime}, \pi(s^{\prime})) \bigg] \nonumber \\
  & + \upsilon \bE_{s^{\prime}\sim \mathcal{P} (\cdot\mid s, a)}\bigg[ -\gamma Q_*^\pi(s^{\prime},\pi(s^{\prime})) - r(s,a) \nonumber \\
  & + Q_*^\pi(s,\pi(s)) \bigg] \Bigg| + \gamma \max_{s,a} \mathcal{I}^\pi(s,a) \nonumber \\
 = & \gamma \Bigg| \bE_{s^{\prime}\sim \mathcal{P} (\cdot\mid s, a)}\bigg[Q_*^\pi(s^{\prime}, \pi(s^{\prime})) - Q^\pi(s^{\prime}, \pi(s^{\prime})) \bigg] \nonumber \\
  & + \upsilon \bE_{s^{\prime}\sim \mathcal{P} (\cdot\mid s, a)}\bigg[ -\gamma \big(Q_*^\pi(s^{\prime},\pi(s^{\prime})) -  Q^\pi(s^{\prime}, \pi(s^{\prime})) \big)  \nonumber \\
  & + Q_*^\pi(s,\pi(s)) - Q^\pi(s, a) \bigg] \Bigg| + \gamma \max_{s,a} \mathcal{I}^\pi(s,a) \nonumber \\
  \leq & \gamma \Bigg| \bE_{s^{\prime}\sim \mathcal{P} (\cdot\mid s, a)}\bigg[Q_*^\pi(s^{\prime}, \pi(s^{\prime})) - Q^\pi(s^{\prime}, \pi(s^{\prime})) \bigg] \Bigg| \nonumber \\
  & + \upsilon\gamma \Bigg| \bE_{s^{\prime}\sim \mathcal{P} (\cdot\mid s, a)}\bigg[ -\gamma \big(Q_*^\pi(s^{\prime},\pi(s^{\prime})) -  Q^\pi(s^{\prime}, \pi(s^{\prime})) \big) \bigg] \Bigg| \nonumber \\
  & + \upsilon\gamma \Bigg| Q_*^\pi(s,\pi(s)) - Q^\pi(s, a)  \Bigg| + \gamma \max_{s,a} \mathcal{I}^\pi(s,a) \nonumber \\
  \leq & (\gamma+\upsilon\gamma^2+\upsilon\gamma)\max_{s,a}\Big|Q_*^\pi(s,a)-Q^\pi(s,a)\Big|  \nonumber \\ & + \gamma \max_{s,a} \mathcal{I}^\pi(s,a).
  \label{equation:Q-M-Q-proof}
\end{align}
Under the condition $1 - \gamma - \upsilon \gamma^2 - \upsilon \gamma > 0$, the bound in \eqref{equation:Q-M-Q-no-sampling-error} follows directly from \eqref{equation:Q-M-Q-proof}.

Next, since $\pi$ is a deterministic policy, and using a similar calculation as in \eqref{Qpi*-Qpi}, we have
  \begin{align}
    & V_*^\pi(s) = Q_*^\pi(s, \pi(s))  \nonumber \\
    = & r(s,\pi(s)) + \gamma\bE_{s^{\prime}\sim \mathcal{P} (\cdot\mid s,\pi(s))}\bigg[(1 - \upsilon)V_*^\pi(s^{\prime}) \nonumber \\
    & + \upsilon \Big(V_*^\pi(s^{\prime}) - \big(r(s,\pi(s)) + \gamma V_*^\pi(s^{\prime}) - V_*^\pi(s)\big)\Big) \bigg] \nonumber \\
    = & (1 - \upsilon\gamma) r(s,\pi(s)) + \upsilon\gamma V_*^\pi(s) \nonumber \\
    & + \big(\gamma-\upsilon\gamma^2\big)\bE_{s^{\prime}\sim \mathcal{P} (\cdot\mid s,\pi(s))}\big[V_*^\pi(s^{\prime})\big].
  \end{align}
By rearranging terms and solving for $V_*^\pi(s)$ under the condition $1 - \upsilon \gamma > 0$, we obtain the recursive relationship:
\begin{align}\label{equation:V-M-no-sampling-error}
  V_*^\pi(s) = r(s,\pi(s)) + \gamma \bE_{s^{\prime}\sim \mathcal{P} (\cdot\mid s,\pi(s))}[V_*^\pi(s^{\prime})].
\end{align}
This shows $V_*^\pi(s)$ is the solution of \eqref{equation:Bellman-V}, and hence $V_*^\pi(s)=V^\pi(s)$. The theorem is thus proved.
\end{proof}

In practical offline RL settings where sampling error is present, the actual operators are typically inaccessible. However, when the dataset $\mathcal{D}$ provides sufficient coverage of the state-action space, their empirical counterparts serve as reasonable approximations. In what follows, we conduct an error analysis of the Q-function obtained via MCRE under this empirical scenario.
 
To rigorously characterize the discrepancy between the empirical and true operators, we introduce the following assumption on the deviation between $\hat{\mathcal{P}}(s^{\prime} \mid s, a)$ and $\mathcal{P}(s^{\prime} \mid s, a)$.



\begin{assumption}\label{assumption:state-sampling-error}
  \cite{kumarConservativeQLearningOffline2020, huangMildPolicyEvaluation2024}:
  For any state-action pair $(s, a) \in \mathcal{D}$, the $\ell_1$-norm deviation between the empirical and actual state transition dynamics satisfies the following bound with probability larger than $1-\delta$:  
  \begin{align}\label{equation:state-sampling-error}
  |\hat{\mathcal{P}}(s^{\prime} \mid s, a)-\mathcal{P} (s^{\prime} \mid s, a)| \leq \frac{c_p}{\sqrt{\mathcal{D}_c}}
  \end{align}
  where $\delta \in (0, 1)$, $c_p > 0$ is a constant, and $\mathcal{D}_c \neq 0$ denotes the count of state-action pairs in $\mathcal{D}$. When $\mathcal{D}_c = 0$, the bound holds trivially if $\delta \geq {2 r_{\max}}/{(1 - \gamma)}$.
\end{assumption}

Under Assumption~\ref{assumption:state-sampling-error}, the gap between $(\hat{\mathcal{T}}^\pi \hat{Q}_k)(s,a)$ and $(\mathcal{T}^\pi \hat{Q}_k)(s,a)$ for any given policy $\pi$ satisfies \cite{huangEfficientOfflineReinforcement2024}
\begin{align}\label{equation:That-T}
& \big|(\hat{\mathcal{T}}^\pi \hat{Q}_k)(s, a)-(\mathcal{T}^\pi \hat{Q}_k)(s, a)\big| \nonumber \\
= &  \gamma\big|\sum_{s^{\prime}}(\hat{\mathcal{P}}(s^{\prime} \mid s, a)-\mathcal{P} (s^{\prime} \mid s, a)) \hat{Q}_k(s^{\prime}, \pi(s^{\prime}))\big| \nonumber \\
\leq & \frac{\gamma c_p r_{\max}}{(1-\gamma)\sqrt{\mathcal{D}_c}}.
\end{align}
We now enhance Theorem~\ref{theorem:Q-M-Q-no-sampling-error} to the following version.

\begin{theorem}\label{theorem:Q-M-Q-sampling-error}
(Presence of Sampling Error) For $(s, a) \in \mathcal{D}$, the learned Q-function $\hat{Q}_*^\pi(s, a)$, obtained from \eqref{equation:Q-Zhat} under Assumption~\ref{assumption:state-sampling-error}, and the true Q-function $Q^\pi(s, a)$ satisfy 
\begin{align}\label{equation:Qhat-M-Q-sampling-error-Result}
  \|\hat{Q}_*^\pi(s,a)-Q^\pi(s,a) \|_{\infty} 
  \leq  \frac{\gamma \max_{s,a} \mathcal{I}^\pi(s,a)}{1-\gamma-\upsilon\gamma^2-\upsilon\gamma} \nonumber \\
   + \frac{\gamma c_p r_{\max}}{(1-\gamma-\upsilon\gamma^2-\upsilon\gamma)(1-\gamma) \sqrt{\mathcal{D}_c}}.
\end{align}
Moreover, the learned V-function $\hat{V}_*^\pi(s)$ and the true V-function $V^\pi(s)$ satisfy
\begin{align}\label{equation:V-M-sampling-error-Result}
  \|\hat{V}_*^\pi(s) - V^\pi(s)\|_{\infty} \leq \frac{\gamma c_p r_{\max}}{(1-\gamma)^2\sqrt{\mathcal{D}_c}}.
\end{align}
\end{theorem}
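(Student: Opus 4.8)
The plan is to extend Theorem~\ref{theorem:Q-M-Q-no-sampling-error} by treating the sampling error as an additive perturbation of the actual operator $\mathcal{Z}^\pi$. I would start from the fixed-point identity $\hat{Q}_*^\pi = \hat{\mathcal{Z}}^\pi \hat{Q}_*^\pi$ guaranteed by Theorem~\ref{theorem:convergence-operator}, and split the empirical operator as $\hat{\mathcal{Z}}^\pi \hat{Q}_*^\pi = \mathcal{Z}^\pi \hat{Q}_*^\pi + (\hat{\mathcal{Z}}^\pi - \mathcal{Z}^\pi)\hat{Q}_*^\pi$. The first piece is handled exactly as in the proof of Theorem~\ref{theorem:Q-M-Q-no-sampling-error}: the same expansion gives $|(\mathcal{Z}^\pi \hat{Q}_*^\pi)(s,a) - Q^\pi(s,a)| \le (\gamma + \upsilon\gamma^2 + \upsilon\gamma)\max_{s,a}|\hat{Q}_*^\pi(s,a) - Q^\pi(s,a)| + \gamma \max_{s,a}\mathcal{I}^\pi(s,a)$, because that calculation never uses that its argument is the fixed point of $\mathcal{Z}^\pi$, only that $\mathcal{Z}^\pi$ is applied to it.

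Next I would control the perturbation $(\hat{\mathcal{Z}}^\pi - \mathcal{Z}^\pi)\hat{Q}_*^\pi$. Writing the TD--Bellman operator in the equivalent form $(\hat{\mathcal{H}}^\pi Q)(s,a) = (1-\gamma)(\hat{\mathcal{T}}^\pi Q)(s,a) + \gamma Q(s,\pi(s))$, and noting that the behavior-cloning term $\mathcal{I}^\pi$ is identical under both operators, I obtain $(\hat{\mathcal{H}}^\pi - \mathcal{H}^\pi)Q = (1-\gamma)(\hat{\mathcal{T}}^\pi - \mathcal{T}^\pi)Q$ and hence $(\hat{\mathcal{Z}}^\pi - \mathcal{Z}^\pi)Q = (1-\upsilon\gamma)(\hat{\mathcal{T}}^\pi - \mathcal{T}^\pi)Q$. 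Applying the operator-gap bound \eqref{equation:That-T} together with $1-\upsilon\gamma \le 1$ yields $\|(\hat{\mathcal{Z}}^\pi - \mathcal{Z}^\pi)\hat{Q}_*^\pi\|_\infty \le \gamma c_p r_{\max}/((1-\gamma)\sqrt{\mathcal{D}_c})$. Combining the two pieces via the triangle inequality produces the self-referential estimate
\[
\|\hat{Q}_*^\pi - Q^\pi\|_\infty \le (\gamma + \upsilon\gamma^2 + \upsilon\gamma)\|\hat{Q}_*^\pi - Q^\pi\|_\infty + \gamma\max_{s,a}\mathcal{I}^\pi(s,a) + \frac{\gamma c_p r_{\max}}{(1-\gamma)\sqrt{\mathcal{D}_c}},
\]
and solving for $\|\hat{Q}_*^\pi - Q^\pi\|_\infty$ under the standing condition $1-\gamma-\upsilon\gamma^2-\upsilon\gamma>0$ gives exactly \eqref{equation:Qhat-M-Q-sampling-error-Result}.

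For the V-function bound I would repeat the V-derivation of Theorem~\ref{theorem:Q-M-Q-no-sampling-error} with the empirical transition. Setting $a=\pi(s)$ makes the behavior-cloning penalty vanish, $\mathcal{I}^\pi(s,\pi(s))=0$, and after rearranging the $\upsilon\gamma$-weighted terms the common factor $(1-\upsilon\gamma)$ divides out (legitimate since $1-\upsilon\gamma>0$), leaving the empirical Bellman fixed point $\hat{V}_*^\pi(s) = r(s,\pi(s)) + \gamma\,\bE_{s'\sim\hat{\mathcal{P}}(\cdot\mid s,\pi(s))}[\hat{V}_*^\pi(s')]$. Subtracting the true Bellman equation \eqref{equation:Bellman-V} and splitting the difference into a contraction term $\gamma\|\hat{V}_*^\pi - V^\pi\|_\infty$ and a transition-mismatch term bounded, via Assumption~\ref{assumption:state-sampling-error} and $\|V^\pi\|_\infty \le r_{\max}/(1-\gamma)$, by $\gamma c_p r_{\max}/((1-\gamma)\sqrt{\mathcal{D}_c})$, then solving, yields \eqref{equation:V-M-sampling-error-Result}.

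The main obstacle is the bookkeeping of where the sampling error actually enters the composite operator. The clean numerator $\gamma c_p r_{\max}/((1-\gamma)\sqrt{\mathcal{D}_c})$ hinges on two observations that must be made explicit: the terms by which $\hat{\mathcal{H}}^\pi$ exceeds $\hat{\mathcal{T}}^\pi$ are $s'$-independent and so carry no transition error, contributing only the factor $1-\gamma$; and the behavior-cloning term $\mathcal{I}^\pi$ is sampling-free and therefore drops out of $\hat{\mathcal{Z}}^\pi-\mathcal{Z}^\pi$ entirely. Overlooking either would leave spurious $\upsilon$-dependent factors in the error term and break the match with the stated denominator $1-\gamma-\upsilon\gamma^2-\upsilon\gamma$.
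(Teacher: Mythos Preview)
Your proposal is correct and follows essentially the same route as the paper: both establish the self-referential inequality $\|\hat{Q}_*^\pi - Q^\pi\|_\infty \le (\gamma+\upsilon\gamma^2+\upsilon\gamma)\|\hat{Q}_*^\pi - Q^\pi\|_\infty + \gamma\max\mathcal{I}^\pi + \gamma c_p r_{\max}/((1-\gamma)\sqrt{\mathcal{D}_c})$ and solve it, and both reduce the $V$-part to the empirical Bellman fixed point before invoking the transition-mismatch bound. The only cosmetic difference is that you isolate the sampling error up front via the clean identity $(\hat{\mathcal{Z}}^\pi-\mathcal{Z}^\pi)Q=(1-\upsilon\gamma)(\hat{\mathcal{T}}^\pi-\mathcal{T}^\pi)Q$ applied to $\hat{Q}_*^\pi$, whereas the paper interleaves an add-and-subtract of $Q^\pi$ inside the expansion so that the mismatch term lands on $Q^\pi$ instead; both are legitimate uses of \eqref{equation:That-T} and yield the identical bound.
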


\begin{proof}
The proof is similar to that of Theorem~\ref{theorem:Q-M-Q-no-sampling-error}. 
Specifically, the calculation in \eqref{equation:Q-M-Q-proof} becomes
\begin{align}\label{equation:Q-M-Q-sampling-error}
  & \Big|\hat{Q}_*^\pi(s,a)-Q^\pi(s,a)\Big| \nonumber \\
  \leq & \gamma\Big|\bE_{s^{\prime}\sim \hat{\mathcal{P}}(\cdot\mid s, a)}\big[\hat{Q}_*^\pi(s^{\prime},\pi(s^{\prime}))\big]  - \bE_{s^{\prime}\sim \mathcal{P} (\cdot\mid s, a)}\big[Q^\pi(s^{\prime},\pi(s^{\prime}))\big]\Big| \nonumber \\
  & + \upsilon\gamma \Bigg| \bE_{s^{\prime}\sim \hat{\mathcal{P}} (\cdot\mid s, a)}\bigg[ -\gamma \big(\hat{Q}_*^\pi(s^{\prime},\pi(s^{\prime})) -  Q^\pi(s^{\prime}, \pi(s^{\prime})) \big) \bigg] \Bigg| \nonumber \\
  & + \upsilon\gamma \Bigg| Q_*^\pi(s,\pi(s)) - Q^\pi(s, a)  \Bigg| + \gamma \max_{s,a} \mathcal{I}^\pi(s,a) \nonumber \\
  \leq &  (\gamma+\upsilon\gamma^2+\upsilon\gamma)\max_{s,a}\Big|\hat{Q}_*^\pi(s,a)-Q^\pi(s,a)\Big| \nonumber\\ & 
  + \gamma \max_{s,a} \mathcal{I}^\pi(s,a)   + \gamma\Big|\bE_{s^{\prime}\sim \hat{\mathcal{P}}(\cdot\mid s, a)}[Q^\pi(s^{\prime},\pi(s^{\prime}))] \nonumber \\
  & - \bE_{s^{\prime}\sim \mathcal{P} (\cdot\mid s, a)}[Q^\pi(s^{\prime},\pi(s^{\prime}))]\Big|, 
  \end{align}
where an additional term is incurred compared with \eqref{equation:Q-M-Q-proof}. 
According to \eqref{equation:That-T}, this additional term is bounded as follows:
\begin{align}\label{equation:Qhat-M-Q-sampling-error-distribution}
  &  \gamma\Big|\bE_{s^{\prime}\sim \hat{\mathcal{P}}(\cdot\mid s, a)}[Q^\pi(s^{\prime},\pi(s^{\prime}))]  
    - \bE_{s^{\prime}\sim \mathcal{P} (\cdot\mid s, a)}[Q^\pi(s^{\prime},\pi(s^{\prime}))]\Big| \nonumber \\
  \leq 
  &  \gamma\big|\sum_{s^{\prime}}\big(\hat{\mathcal{P}}(s^{\prime} \mid s,a) - \mathcal{P} (s^{\prime} \mid s, a)\big) Q^\pi(s^{\prime}, \pi(s^{\prime}))\big| \nonumber \\
  \leq &  \frac{\gamma c_p r_{\max}}{(1-\gamma) \sqrt{\mathcal{D}_c}}.
\end{align}
By substituting \eqref{equation:Qhat-M-Q-sampling-error-distribution} into \eqref{equation:Q-M-Q-sampling-error}, we obtain
\begin{align}\label{equation:Qhat-M-Q-sampling-error-final}
  & \max_{s,a} \Big|\hat{Q}_*^\pi(s,a)-Q^\pi(s,a)\Big| \nonumber \\
  \leq & (\gamma+\upsilon\gamma^2+\upsilon\gamma)\max_{s,a}\Big|\hat{Q}_*^\pi(s,a)-Q^\pi(s,a)\Big|   \nonumber \\
  & + \gamma \max_{s,a} \mathcal{I}^\pi(s,a) + \frac{\gamma c_p r_{\max}}{(1-\gamma) \sqrt{\mathcal{D}_c}}.
\end{align}
Under the condition $1 - \gamma - \upsilon \gamma^2 - \upsilon \gamma > 0$, the bound in  \eqref{equation:Qhat-M-Q-sampling-error-Result} follows directly from \eqref{equation:Qhat-M-Q-sampling-error-final}.
 
Next, from \eqref{equation:V-M-no-sampling-error}, we have
  \begin{align}\label{equation:Vhat-M-sampling-error}
    \hat{V}_*^\pi(s) = r(s,\pi(s)) + \gamma \bE_{s^{\prime}\sim \hat{\mathcal{P}} (\cdot\mid s,\pi(s))}[\hat{V}_*^\pi(s^{\prime})].
  \end{align}
Combining \eqref{equation:Vhat-M-sampling-error} and \eqref{equation:Bellman-V} gives
\begin{align}\label{equation:Vhat-M-sampling-error-Result}
  & \Big|\hat{V}_*^\pi(s) - V^\pi(s)\Big| \nonumber \\
  = & \gamma \Bigg| \bE_{s^{\prime}\sim \hat{\mathcal{P}}(\cdot\mid s,\pi(s))}\Big[\hat{V}_*^\pi(s^{\prime})\Big] - \bE_{s^{\prime}\sim \mathcal{P} (\cdot\mid s,\pi(s))}\Big[V^\pi(s^{\prime})\Big] \Bigg|, 
\end{align}
which is established in Theorem~6 of \cite{huangMildPolicyEvaluation2024} and it follows that the bound \eqref{equation:V-M-sampling-error-Result} holds. The theorem is thus proved.
\end{proof}

Theorems~\ref{theorem:Q-M-Q-no-sampling-error} and \ref{theorem:Q-M-Q-sampling-error} characterize the discrepancy between the learned Q-function and the true Q-function under settings with and without sampling error. These results indicate that a stronger behavior cloning term $\mathcal{I}^\pi$, or a larger modification parameter $\upsilon$ in the Bellman operator, both introduced by MCRE, lead to looser bounds on the Q-function error. This highlights an inherent trade-off: while these components are essential for enforcing mild conservatism, their magnitudes must be carefully chosen to avoid compromising the theoretical bound on the discrepancy. Additionally, the theorems also quantify the corresponding gap between the learned and true V-functions under the same conditions.



\subsection{Analysis of the Policy Derived from MCRE}

Sampling error and value function bias inherent in offline actor–critic (AC) with MCRE introduce a performance discrepancy between the learned policy $\hat{\pi}*$ and the optimal policy $\pi$, as defined in \eqref{equation:pi*}. Specifically, $\hat{\pi}*$ and $\pi$ satisfy
\begin{align}
& \hat{\pi}_*=\operatorname{argmax}_\pi \hat{V}_*^\pi(s), \forall s \in \mathcal{D}, \\
& \pi_*=\operatorname{argmax}_\pi V^\pi(s), \forall s \in \mathcal{D} .
\end{align}
The suboptimality of $\hat{\pi}_*$ is defined as $\Pi(\hat{\pi}_*) = \Gamma(\hat{\pi}_*)-\Gamma(\pi_*)$, where $\Gamma(\cdot)$ represents the performance metric defined in \eqref{equation:eta}. This suboptimality can equivalently be rewritten as
\begin{align}\label{equation:expectation-Pi}
\Pi(\hat{\pi}_*)=\bE[V^{\hat{\pi}_*}(s_0)-V^{\pi_*}(s_0)],
\end{align}
where $s_0$ is sampled from the initial state distribution $\rho_0$. To rigorously bound $\Pi(\hat{\pi}_*)$, we begin by introducing the following assumption.

\begin{assumption}\label{assumption:reward} \cite{huangMildPolicyEvaluation2024} The reward function $r(s, a)$ satisfies $|r(s, \hat{a}_1)-r(s, \hat{a}_2)| \leq \ell\|\hat{a}_1-\hat{a}_2\|_{\infty},\; \forall s \in \mathcal{S},\; \hat{a}_1, \hat{a}_2 \in \mathcal{A}$, where $\ell > 0$ is an unknown constant. The action $a$ is bounded by a constant, such that $\|a\|_{\infty} \leq a_{\max}$, $\forall a \in \mathcal{A}$.
\end{assumption}


Based on this assumption, we establish the following two theorems.

\begin{theorem} \label{theorem:pihat*-pi*} (Absence of Sampling Error) 
Using the V-functions in Theorem~\ref{theorem:Q-M-Q-no-sampling-error}, and under Assumption~\ref{assumption:reward}, the performance difference between the learned suboptimal policy $\hat{\pi}_*$ and the optimal policy $\pi_*$ satisfies
\begin{align}\label{equation:pihat*-pi*-no-sampling-error-Result}
  & \|\Pi(\hat{\pi}_*)\|_{\infty} 
  \leq  \frac{2\ell a_{\max}}{1-\gamma} \nonumber \\ & + \frac{2\gamma r_{\max}\max_s\mathrm{TV} \big(\mathcal{P} (\cdot \mid s, \hat{\pi}_*(s)), \mathcal{P} (\cdot \mid s, \pi_*(s))\big)}{(1-\gamma)^2}.
  \end{align}
where $\mathrm{TV}$ denotes the total variation distance between probability distributions \cite{levin2017markov}.
\end{theorem}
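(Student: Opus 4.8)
The plan is to reduce the suboptimality to a worst-case gap between two true value functions and then close a self-referential inequality. Since \eqref{equation:expectation-Pi} gives $\Pi(\hat{\pi}_*) = \bE[V^{\hat{\pi}_*}(s_0) - V^{\pi_*}(s_0)]$, I would first pass the absolute value inside the expectation and bound it by the supremum over states, so that $\|\Pi(\hat{\pi}_*)\|_{\infty} \leq \max_s |V^{\hat{\pi}_*}(s) - V^{\pi_*}(s)|$. It therefore suffices to bound the quantity $\Delta := \max_s |V^{\hat{\pi}_*}(s) - V^{\pi_*}(s)|$. The role of Theorem~\ref{theorem:Q-M-Q-no-sampling-error} here is essential: because the learned V-functions coincide with the true ones in the absence of sampling error, both $V^{\hat{\pi}_*}$ and $V^{\pi_*}$ satisfy the exact Bellman equation \eqref{equation:Bellman-V}, which I may then apply to each policy.

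Next I would write \eqref{equation:Bellman-V} for each of $\hat{\pi}_*$ and $\pi_*$ and subtract, expressing $V^{\hat{\pi}_*}(s) - V^{\pi_*}(s)$ as a reward-difference term $r(s,\hat{\pi}_*(s)) - r(s,\pi_*(s))$ plus a discounted transition term. The reward term is handled by Assumption~\ref{assumption:reward}, which yields $|r(s,\hat{\pi}_*(s)) - r(s,\pi_*(s))| \leq \ell\|\hat{\pi}_*(s) - \pi_*(s)\|_{\infty}$, and the action bound $\|a\|_\infty \leq a_{\max}$ combined with the triangle inequality gives the uniform estimate $2\ell a_{\max}$.

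The key manipulation is the decomposition of the transition term by inserting $\pm\,\gamma\,\bE_{s'\sim \mathcal{P}(\cdot\mid s,\pi_*(s))}[V^{\hat{\pi}_*}(s')]$. This splits it into part (i), which measures the mismatch between the two transition kernels evaluated against the same value function, and part (ii), which measures the value-function gap under a single kernel. For part (i) I would rewrite it as $\gamma\big|\sum_{s'}\big(\mathcal{P}(s'\mid s,\hat{\pi}_*(s)) - \mathcal{P}(s'\mid s,\pi_*(s))\big)V^{\hat{\pi}_*}(s')\big|$ and bound it by $\gamma\|V^{\hat{\pi}_*}\|_{\infty}$ times the $\ell_1$ deviation of the kernels, which equals $2\,\mathrm{TV}(\mathcal{P}(\cdot\mid s,\hat{\pi}_*(s)), \mathcal{P}(\cdot\mid s,\pi_*(s)))$; combined with the standard estimate $\|V^{\hat{\pi}_*}\|_{\infty}\leq r_{\max}/(1-\gamma)$ from the bounded reward, this produces the $\frac{2\gamma r_{\max}}{1-\gamma}\mathrm{TV}$ contribution. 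Part (ii) is bounded straightforwardly by $\gamma\Delta$.

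Collecting these estimates gives the self-referential inequality $\Delta \leq 2\ell a_{\max} + \frac{2\gamma r_{\max}}{1-\gamma}\max_s \mathrm{TV}(\mathcal{P}(\cdot\mid s,\hat{\pi}_*(s)), \mathcal{P}(\cdot\mid s,\pi_*(s))) + \gamma\Delta$, and solving for $\Delta$ using $\gamma < 1$ reproduces exactly \eqref{equation:pihat*-pi*-no-sampling-error-Result}. I expect the main obstacle to lie in the transition-term decomposition together with the careful bookkeeping of the factor $2$ that arises when converting the $\ell_1$ kernel deviation into a total-variation distance, as well as in correctly justifying that Theorem~\ref{theorem:Q-M-Q-no-sampling-error} permits applying the exact Bellman recursion \eqref{equation:Bellman-V} to both value functions so that no additional value-estimation error enters the bound.
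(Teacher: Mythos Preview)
Your argument is correct and follows the same overall strategy as the paper: reduce $\|\Pi(\hat{\pi}_*)\|_\infty$ to $\Delta=\max_s|V^{\hat{\pi}_*}(s)-V^{\pi_*}(s)|$, expand both value functions via the Bellman recursion, split the transition term into a kernel-mismatch piece (controlled by $\mathrm{TV}$ and $\|V\|_\infty\le r_{\max}/(1-\gamma)$) and a value-gap piece (controlled by $\gamma\Delta$), and solve the resulting self-referential inequality.

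There are two minor differences worth noting. First, the paper routes through the \emph{learned} value function $V_*^{\hat{\pi}_*}$ via the triangle inequality $|V^{\hat{\pi}_*}-V^{\pi_*}|\le|V_*^{\hat{\pi}_*}-V^{\hat{\pi}_*}|+|V_*^{\hat{\pi}_*}-V^{\pi_*}|$ and then invokes Theorem~\ref{theorem:Q-M-Q-no-sampling-error} to kill the first term; you instead work directly with the true value functions. Your route is more economical here, but the paper's detour is what makes the proof template reusable for Theorem~\ref{theorem:pihat*-pi*-sampling-error}, where $|\hat{V}_*^{\hat{\pi}_*}-V^{\hat{\pi}_*}|$ is nonzero and supplies the extra $\frac{(1+\gamma)\gamma c_p r_{\max}}{(1-\gamma)^3\sqrt{\mathcal{D}_c}}$ term. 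Second, your decomposition evaluates the kernel mismatch against $V^{\hat{\pi}_*}$ and the value gap under the $\pi_*$-kernel, whereas the paper makes the symmetric choice ($V^{\pi_*}$ for the TV piece, $\hat{\pi}_*$-kernel for the gap); both yield the same bound. One small clarification: your statement that Theorem~\ref{theorem:Q-M-Q-no-sampling-error} is needed so that $V^{\hat{\pi}_*}$ and $V^{\pi_*}$ ``satisfy the exact Bellman equation'' is not quite the right justification, since true value functions always satisfy \eqref{equation:Bellman-V}; in your route Theorem~\ref{theorem:Q-M-Q-no-sampling-error} is in fact not used at all, which is fine.
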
 

\begin{proof}
By \eqref{equation:expectation-Pi}, we have
\begin{align}\label{equation:Pihat*-pi*}
 \big|\Pi(\hat{\pi}_*)\big|
= & \big|\bE\big[V^{\hat{\pi}_*}(s_0)-V^{\pi_*}(s_0)\big]\big| \nonumber \\
\leq & \max_s\big|V^{\hat{\pi}_*}(s)-V^{\pi_*}(s)\big|
\end{align}
Moreover, 
\begin{align}\label{equation:Pihat*-pi*2}
    & |V^{\hat{\pi}_*}(s)-V^{\pi_*}(s)| \nonumber \\
\leq & |V^{\hat{\pi}_*}_*(s)- V^{\hat{\pi}_*}(s)|+|V^{\hat{\pi}_*}_*(s)-V^{\pi_*}(s)|
\end{align}
According to Theorem~\ref{theorem:Q-M-Q-no-sampling-error}, we have $|V^{\hat{\pi}_*}_*(s) -V^{\hat{\pi}_*}(s)| = 0$.
To further analyze $|V_*^{\hat{\pi}_*}(s)-V^{\pi_*}(s)|$, we proceed with the following calculation:
\begin{align*}
    & \Big|V_*^{\hat{\pi}_*}(s) - V^{\pi_*}(s)\Big| \nonumber \\
  = & \Bigg| r(s,\hat{\pi}_*(s)) -r(s, \pi_*(s))  + \gamma\bE_{s^{\prime}\sim \mathcal{P} (\cdot\mid s,\hat{\pi}_*(s))}\Big[V_*^{\hat{\pi}_*}(s^{\prime})\Big] \nonumber \\
  & -\gamma \bE_{s^{\prime} \sim \mathcal{P} (\cdot \mid s, \pi_*(s))}\Big[V^{\pi_*}(s^{\prime})\Big]\Bigg|.
\end{align*}
Under Assumption~\ref{assumption:reward}, we have
\begin{align}\label{equation:Vhat-M-V}
    & \Big|V_*^{\hat{\pi}_*}(s) - V^{\pi_*}(s)\Big| \nonumber \\
  \leq & 2\ell a_{\max}  + \gamma \Bigg| \bE_{s^{\prime}\sim \mathcal{P} (\cdot\mid s,\hat{\pi}_*(s))}\Big[V_*^{\hat{\pi}_*}(s^{\prime}) - V^{\hat{\pi}_*}(s^{\prime}) \Big]  \nonumber \\
  & +  \bE_{s^{\prime} \sim \mathcal{P} (\cdot \mid s, \hat{\pi}_*(s))}\Big[V^{\hat{\pi}_*}(s^{\prime}) - V^{\pi_*}(s^{\prime})\Big] \nonumber \\ 
  & + \bE_{s^{\prime} \sim \mathcal{P} (\cdot \mid s, \hat{\pi}_*(s))}\Big[V^{\pi_*}(s^{\prime})\Big]  - \bE_{s^{\prime} \sim \mathcal{P} (\cdot \mid s, \pi_*(s))}\Big[V^{\pi_*}(s^{\prime})\Big] \Bigg|   \nonumber \\ 
  \leq & 2\ell a_{\max}  + \gamma \max_s \Bigg| V^{\hat{\pi}_*}(s) - V^{\pi_*}(s) \Bigg|   \nonumber \\ 
  & + \frac{2\gamma r_{\max}\max_s\mathrm{TV} \big(\mathcal{P} (\cdot \mid s, \hat{\pi}_*(s)), \mathcal{P} (\cdot \mid s, \pi_*(s))\big)}{1-\gamma}
\end{align}
Substituting \eqref{equation:Vhat-M-V} into \eqref{equation:Pihat*-pi*2}, and subsequently into \eqref{equation:Pihat*-pi*}, completes the proof of \eqref{equation:pihat*-pi*-no-sampling-error-Result}.
 \end{proof}

\begin{theorem} \label{theorem:pihat*-pi*-sampling-error}
(Presence of Sampling Error) Using the V-functions in Theorem~\ref{theorem:Q-M-Q-sampling-error}, and under Assumption~\ref{assumption:reward}, the performance difference between the learned suboptimal policy $\hat{\pi}_*$ and the optimal policy $\pi_*$ satisfies
\begin{align}\label{equation:pihat*-pi*-sampling-error-Result}
  & \|\Pi(\hat{\pi}_*)\|_{\infty}  
  \leq  \frac{2\ell a_{\max}}{1-\gamma} + \frac{(1+\gamma)\gamma c_p r_{\max}}{(1-\gamma)^3\sqrt{\mathcal{D}_c}} \nonumber\\
  & + \frac{2\gamma r_{\max} \max_s\mathrm{TV} \big(\hat{\mathcal{P}}(\cdot \mid s, \hat{\pi}_*(s)), \mathcal{P} (\cdot \mid s, \pi_*(s))\big)}{(1-\gamma)^2}. 
\end{align}
\end{theorem}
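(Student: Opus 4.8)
The plan is to mirror the argument of Theorem~\ref{theorem:pihat*-pi*}, keeping its overall decomposition but accounting for the two places where the sampling-error setting departs from the exact one. First I would reuse \eqref{equation:Pihat*-pi*} to reduce the suboptimality to $|\Pi(\hat{\pi}_*)| \leq \max_s |V^{\hat{\pi}_*}(s) - V^{\pi_*}(s)|$, and then apply the triangle inequality \eqref{equation:Pihat*-pi*2} with the learned estimate $\hat{V}_*^{\hat{\pi}_*}$ inserted as the intermediate quantity, giving $|V^{\hat{\pi}_*}(s) - V^{\pi_*}(s)| \leq |\hat{V}_*^{\hat{\pi}_*}(s) - V^{\hat{\pi}_*}(s)| + |\hat{V}_*^{\hat{\pi}_*}(s) - V^{\pi_*}(s)|$. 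The crucial change from Theorem~\ref{theorem:pihat*-pi*} is that the first summand is no longer zero: by the V-function bound \eqref{equation:V-M-sampling-error-Result} of Theorem~\ref{theorem:Q-M-Q-sampling-error} it is at most $\gamma c_p r_{\max}/\big((1-\gamma)^2\sqrt{\mathcal{D}_c}\big)$.

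The core of the work lies in bounding the second summand, $|\hat{V}_*^{\hat{\pi}_*}(s) - V^{\pi_*}(s)|$. Here I would expand $\hat{V}_*^{\hat{\pi}_*}$ through its empirical Bellman recursion \eqref{equation:Vhat-M-sampling-error} and $V^{\pi_*}$ through the exact Bellman equation \eqref{equation:Bellman-V}. The reward difference $|r(s,\hat{\pi}_*(s)) - r(s,\pi_*(s))|$ is controlled by $2\ell a_{\max}$ via Assumption~\ref{assumption:reward}, exactly as in \eqref{equation:Vhat-M-V}. The remaining $\gamma$-weighted expectation difference $\gamma\big(\bE_{s^{\prime}\sim\hat{\mathcal{P}}(\cdot\mid s,\hat{\pi}_*(s))}[\hat{V}_*^{\hat{\pi}_*}(s^{\prime})] - \bE_{s^{\prime}\sim\mathcal{P}(\cdot\mid s,\pi_*(s))}[V^{\pi_*}(s^{\prime})]\big)$ is where the sampling error enters, and I would split it by inserting two intermediate expectations so as to isolate three effects: the value-estimation error $\hat{V}_*^{\hat{\pi}_*} - V^{\hat{\pi}_*}$ measured under the empirical kernel $\hat{\mathcal{P}}(\cdot\mid s,\hat{\pi}_*(s))$, the recursive policy gap $V^{\hat{\pi}_*} - V^{\pi_*}$ measured under the same kernel, and the transition mismatch applied to $V^{\pi_*}$ that moves from $\hat{\mathcal{P}}(\cdot\mid s,\hat{\pi}_*(s))$ to $\mathcal{P}(\cdot\mid s,\pi_*(s))$.

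The estimation-error piece is again bounded by \eqref{equation:V-M-sampling-error-Result}, contributing a second factor $\gamma\cdot\gamma c_p r_{\max}/\big((1-\gamma)^2\sqrt{\mathcal{D}_c}\big)$, which together with the triangle-inequality term of the first paragraph produces the $(1+\gamma)$ numerator. The policy-gap piece reproduces the self-referential term $\gamma\max_s|V^{\hat{\pi}_*}(s)-V^{\pi_*}(s)|$. The transition-mismatch piece I would bound using $\|V^{\pi_*}\|_\infty \leq r_{\max}/(1-\gamma)$ together with the standard inequality $|\bE_P[f]-\bE_Q[f]|\leq 2\|f\|_\infty\,\mathrm{TV}(P,Q)$, which yields exactly $2\gamma r_{\max}\max_s\mathrm{TV}\big(\hat{\mathcal{P}}(\cdot\mid s,\hat{\pi}_*(s)),\mathcal{P}(\cdot\mid s,\pi_*(s))\big)/(1-\gamma)$. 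Collecting all contributions produces an inequality of the form $\max_s|V^{\hat{\pi}_*}(s)-V^{\pi_*}(s)| \leq C + \gamma\max_s|V^{\hat{\pi}_*}(s)-V^{\pi_*}(s)|$; moving the self-referential term to the left and dividing by $1-\gamma$ then yields \eqref{equation:pihat*-pi*-sampling-error-Result}.

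I expect the main obstacle to be the bookkeeping in this triple-insertion split: one must choose the intermediate kernels so that the transition-mismatch term lands precisely on the pair $\big(\hat{\mathcal{P}}(\cdot\mid s,\hat{\pi}_*(s)),\mathcal{P}(\cdot\mid s,\pi_*(s))\big)$ appearing in the statement, rather than on an intermediate pair such as $\big(\hat{\mathcal{P}}(\cdot\mid s,\hat{\pi}_*(s)),\mathcal{P}(\cdot\mid s,\hat{\pi}_*(s))\big)$. Getting this alignment right is what makes the two estimation-error contributions combine cleanly into the $(1+\gamma)$ factor while keeping the total-variation argument matched to $\pi_*$, so that the extra $1/(1-\gamma)$ accumulated by solving the self-referential inequality lifts the sampling-error term to its final $(1-\gamma)^3$ denominator.
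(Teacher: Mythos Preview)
Your proposal is correct and follows essentially the same route as the paper's proof, which merely notes that one repeats the argument of Theorem~\ref{theorem:pihat*-pi*} with $V_*^{\hat{\pi}_*}$ replaced by $\hat{V}_*^{\hat{\pi}_*}$ and with the previously-zero term $|\hat{V}_*^{\hat{\pi}_*}(s)-V^{\hat{\pi}_*}(s)|$ now controlled by \eqref{equation:V-M-sampling-error-Result}. Your explicit three-way split of the expectation difference and the observation that the two estimation-error contributions combine into the $(1+\gamma)$ factor are exactly the details the paper leaves implicit.
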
  
\begin{proof}  The proof follows a similar structure to that of Theorem~\ref{theorem:pihat*-pi*}. The only difference is that $V^{\hat{\pi}_*}_*$ is replaced by${\hat V}^{\hat{\pi}_*}_*$, and the equality $|{\hat V}^{\hat{\pi}_*}_*(s) -V^{\hat{\pi}_*}(s)| = 0$ no longer holds in the presence of sampling error.  Instead, this term is bounded as shown in \eqref{equation:V-M-sampling-error-Result}. This additional non-zero bound accounts for the extra term in \eqref{equation:pihat*-pi*-sampling-error-Result}, compared to \eqref{equation:pihat*-pi*-no-sampling-error-Result}.
\end{proof}

\section{The MCRQ Algorithm} \label{section:mcrq}

We introduce a novel offline algorithm, MCRQ, which is built upon the off-the-shelf off-policy online RL algorithm TD3 \cite{fujimotoAddressingFunctionApproximation}.  The core idea of MCRQ is to incorporate MCRE into the TD target $y$ under the AC setting.
Similar to TD3, MCRQ  consists of two critic networks, $Q_{\theta_1}$ and $Q_{\theta_2}$, along with an actor network, $\pi_\phi$, where $\theta_1$, $\theta_2$, and $\phi$ represent the network parameters. The associated target critic networks are $Q_{\theta_1^{\prime}}$ and $Q_{\theta_2^{\prime}}$, and the target actor network is $\pi_{\phi^{\prime}}$, where $\theta_1^{\prime}$, $\theta_2^{\prime}$, and $\phi^{\prime}$ represent the target network parameters. 
The action $a^{\prime}$ is defined as
\begin{align}\label{equation:action}
a^{\prime} = \pi_{\phi^{\prime}}(s^{\prime})+\epsilon,
\end{align}
where $\epsilon$ denotes added exploration noise.

The three components of MCRE, as defined in \eqref{equation:Zhat} and under the AC setting, are used to construct the TD target $y$ in MCRQ. The term
\begin{align*}
y_1 = r + \gamma \min\big(Q_{\theta_1^{\prime}}(s^{\prime},a^{\prime}), Q_{\theta_2^{\prime}}(s^{\prime},a^{\prime})\big)
\end{align*}
corresponds to the empirical Bellman operator $(\hat{\mathcal{T}}^\pi \hat{Q}_k)(s,a)$, and adopts the same TD target formulation as TD3. The term
\begin{align*}
y_2 = & r + \gamma\bigg( \max\big(Q_{\theta_1^{\prime}}(s^{\prime},a^{\prime}), Q_{\theta_2^{\prime}}(s^{\prime},a^{\prime}) \big) \nonumber \\
& - \Big(r + \gamma \min\big(Q_{\theta_1^{\prime}}(s^{\prime},a^{\prime}), Q_{\theta_2^{\prime}}(s^{\prime},a^{\prime})\big) \nonumber \\
& - \max\big(Q_{\theta_1}(s,\pi_\phi(s)), Q_{\theta_2}(s,\pi_\phi(s))\big)\Big) \bigg)
\end{align*}
corresponds to the empirical TD Bellman operator $(\hat{\mathcal{H}}^\pi \hat{Q}_k)(s,a)$. Here, the first $\max$ operator is used to optimistically estimate $Q_{\theta_i^{\prime}}$ at $s^{\prime}$, for $i = 1, 2$, while the second $\max$ estimates $Q_{\theta_i}$ at $s$. Both are used to provide a balanced evaluation against the $\min$ operator. The behavior cloning term is calculated as $\mathcal{I}^\pi = \omega (\pi_\phi(s) - a)^2$, which  constrains the policy $\pi_\phi(s)$ to stay close to the behavior policy. This constraint helps mitigate the effects of distributional shift and OOD actions, while ensuring that $Q_{\theta_i}$ remains supported by the offline dataset.

The TD target $y$ of MCRQ is formally defined as
\begin{align}\label{equation:TD-target}
  y = (1-\upsilon) y_1 + \upsilon y_2 - \gamma\mathcal{I}^\pi.
\end{align}
The loss function for each critic network, parameterized by $\theta_i$, is defined as
\begin{align}\label{equation:Q-loss}
\mathbb{L}(\theta_i) = \bE[y - Q_{\theta_i}(s,a)]^2.
\end{align}   

Through Theorems~\ref{theorem:Q-M-Q-no-sampling-error} and \ref{theorem:Q-M-Q-sampling-error}, we can drive $\pi_\phi(s)$ to approximate the actions in $\mathcal{D}$, thereby reducing the gap between the learned Q-function and the true one. Consequently, the difference between the action taken by the policy $\pi_\phi(s)$ and the selected action $a$ is treated as a penalty. Thus, the loss function for the actor network $\pi_\phi(s)$ is given by
\begin{align}\label{equation:actor-loss}
J(\phi) = -\bE[\lambda Q_{\theta_1}(s,\pi_\phi(s)) - (\pi_\phi(s) - a)^2]
\end{align}
where $\lambda = \frac{\alpha}{{N^{-1}} \sum_{{(s, a)} \in \mathcal{D}} | Q_{\theta_1}(s, a) |}$ balances the contributions of Q-function and policy penalty for $\alpha \geq 0$.
The MCRQ algorithm is summarized in Algorithm~\ref{algorithm:MCRQ}. A structural comparison between MCRQ and baseline algorithms is discussed below.

BCQ, TD3\_BC, CQL, IQL, and MCRQ each employ two critic networks with corresponding target networks. Among them, TD3\_BC, CQL, IQL, and MCRQ  incorporate an actor network and its associated target network. In contrast, BCQ uses a perturbation network, rather than directly generating actions, which is also accompanied by a target network.
It models the behavior policy with a variational auto-encoder (VAE), generating $n$ perturbed candidate actions. These perturbed actions are used to compute the TD target. A feature of BCQ is its convex combination of the two Q-network outputs, weighting the minimum value more heavily. The actor is updated using perturbed VAE-generated actions.

TD3\_BC utilizes clipped double Q-learning (CDQ) for critic updates, whereas MCRQ incorporates CDQ and additionally introduces a maximization operator. While MCRQ shares the same actor loss as TD3\_BC, it further mitigates distributional shift and reduces the discrepancy between the learned and true Q-functions.

CQL minimizes the maximum Q-value produced by the target policy while maximizing the Q-value derived from the behavior policy. It also regularizes the target policy to stay close to the behavior policy, reducing the impact of OOD actions. The actor network is updated using the soft actor-critic methodology.

IQL is a SARSA-style in-sample algorithm that computes the TD target using actions within the data distribution. It uses expectile regression to address the limitations of the traditional maximization operator. As large target values may stem from lucky transitions rather than a single optimal action, the value network is trained with expectile regression for TD updates. The actor is optimized via policy extraction.

In contrast to BCQ, which perturbs VAE-sampled actions, MCRQ refines critic updates directly, avoiding a perturbation model. Unlike TD3\_BC, which relies solely on a behavior cloning penalty, MCRQ explicitly reduces the gap between the learned and true Q-functions.  Compared to CQL, which strictly minimizes the maximum Q-value of the target policy while constraining it close to the behavior policy, MCRQ strikes a balance between conservatism and value learning through TD-guided updates. MCRQ differs significantly from IQL, which computes TD target using in-sample action from the dataset via expectile regression.

\begin{algorithm}[tb]
  \caption{\underline{M}ildly \underline{C}onservative \underline{R}egularized \underline{Q}-learning (MCRQ)}
  \label{algorithm:MCRQ}
  \begin{algorithmic}[1] 
  \STATE Initialize critic networks $Q_{\theta_1}, Q_{\theta_2}$ and actor network $\pi_\phi$ with random parameters $\theta_1, \theta_2, \phi$, and initialize target networks with parameters $\theta_1^{\prime}$, $\theta_2^{\prime}$, $\phi^{\prime}$.
  \STATE Initialize target network smoothing parameter $\tau$ and actor network update frequency $d$.
  \STATE Initialize offline dataset $\mathcal{D}$.
  \FOR{$t$ = 1 to $T$}
  \STATE Sample $\{(s_{t},a_{t},r_{t},s_{t+1}^{\prime})\}_{t=1}^{N}\sim\mathcal{D}$
  \STATE Calculate $a^{\prime}$ with \eqref{equation:action}
  \STATE Calculate $y$ with \eqref{equation:TD-target}
  \STATE Update critic $\theta_i$ by minimizing \eqref{equation:Q-loss}
  \IF{$t$ mod $d$}
  \STATE Update actor $\phi$ by minimizing \eqref{equation:actor-loss}
  \STATE Update target networks: \\ $\theta_i^{\prime}\leftarrow\tau \theta_i + (1 - \tau) \theta_i^{\prime}$ \\ $\phi^{\prime} \leftarrow \tau \phi + (1 - \tau) \phi^{\prime}$
  \ENDIF
  \ENDFOR
  \end{algorithmic}
  \end{algorithm}

\section{Experiments} \label{section:experiments} 

To evaluate the performance of MCRQ, we conduct experiments using datasets for deep data-driven RL (D4RL) \cite{fu2020d4rl} on three MuJoCo \cite{todorovMuJoCoPhysicsEngine2012} benchmark tasks from OpenAI Gym \cite{brockmanOpenAIGym2016}: HalfCheetah, Hopper, and Walker2d. This benchmark includes five distinct dataset categories: random (r), medium (m), medium-replay (m-r), medium-expert (m-e), and expert (e), resulting in a comprehensive set of 15 datasets.
We compare MCRQ against several recent and competitive baseline algorithms,  including BEAR \cite{kumarStabilizingOffPolicyQLearning2019}, UWAC \cite{wuUncertaintyWeightedActorCritic2021}, BC, CDC \cite{Fakoor2021ContinuousDoublyConstrained}, AWAC \cite{nair2020awac}, BCQ \cite{fujimotoOffPolicyDeepReinforcement2019}, OneStep \cite{brandfonbrener2021offline}, TD3\_BC \cite{Fujimoto2021td3bc}, CQL \cite{kumarConservativeQLearningOffline2020}, IQL \cite{Kostrikovimplicitq-learning2021}, and PBRL \cite{bai2022pessimistic}. These methods represent a diverse set of paradigms in offline RL.




The experimental results for BCQ, TD3\_BC, CQL, and IQL are obtained using the original implementations provided by the respective authors, and the implementation of BC is adapted from the TD3\_BC codebase. Additionally, the results for BEAR, UWAC, AWAC, and OneStep are taken from Table~1 in \cite{mao2023SupportedValueRegularization}. The results for CDC and PBRL are sourced from Table~1 in their respective original papers \cite{Fakoor2021ContinuousDoublyConstrained} and \cite{bai2022pessimistic}. The hyperparameter settings for MCRQ follow those used in TD3\_BC, except for $\upsilon$, $\omega$, and $\alpha$. The values of these parameters are listed in Table~\ref{table:hyperparameters}.

\begin{table}[h!] 
  \caption{Hyperparameter setup.
  \label{table:hyperparameters}}
  \centering
  \small
    \resizebox{\columnwidth}{!}{%
  \begin{tabular}{cccccccccc}
    \toprule
    Dataset & \multicolumn{3}{c}{HalfCheetah}  & \multicolumn{3}{c}{Hopper} & \multicolumn{3}{c}{Walker2d} \\
    & $\upsilon$ & $\omega$ & $\alpha$ & $\upsilon$ & $\omega$ & $\alpha$ & $\upsilon$ & $\omega$ & $\alpha$ \\
    \midrule
    r    & 0.0 & 2.5 & 25.0 & 0.0 & 0.0 & 20.0 & 0.3 & 2.0 & 15.0 \\
    m    & 0.0 & 0.0 & 25.0 & 0.0 & 2.0 & 10.0 & 0.0 & 1.0 & 5.0 \\
    m-r  & 0.1 & 0.0 & 25.0 & 0.0 & 1.0 & 20.0 & 0.0 & 2.0 & 10.0 \\
    m-e  & 0.2 & 2.0 & 2.5  & 0.0 & 2.0 & 2.5  & 0.0 & 1.0 & 5.0 \\
    e    & 0.2 & 0.5 & 2.5  & 0.3 & 1.5 & 2.5  & 0.0 & 2.5 & 5.0 \\
    \bottomrule
  \end{tabular}}
\end{table}


\subsection{Results on D4RL Datasets}
\label{subsection:results-on-d4rl-datasets}

We first evaluate the performance of MCRQ by comparing it with baseline algorithms on the D4RL datasets. Experiments were conducted using five random seeds and network initializations to ensure fair comparisons. Default environment settings were used to maintain a level playing field.
Each dataset was run for 1 million steps in the offline RL setting. To assess policy performance, online evaluations were performed every 5,000 steps by interacting with the environment to collect real-time rewards. During each evaluation, the agent was tested over 10 episodes with different random seeds, and the average reward across these episodes was reported as the final score.

The experimental results are shown in Fig.~\ref{fig:D4RL_comparison}, which compares BC, BCQ, TD3\_BC, CQL, IQL, and MCRQ. Solid curves represent the mean performance across evaluations, while shaded regions indicate one standard deviation over five runs. A sliding window of five was applied to smooth the curves.
Table~\ref{tabular:D4RL_comparison} presents the normalized average scores from the last ten evaluations, comparing baselines with MCRQ. Row-wise normalization is applied so that the 15 scores for each algorithm are scaled between 0 and 1. We summarize our key observations below. 

1) Random datasets: MCRQ outperforms most algorithms by a significant margin, and by a slight margin compared to CDC on halfcheetah-random, PBRL on hopper-random, and BEAR on walker2d-random. As shown in Fig.~\ref{fig:D4RL_comparison}, MCRQ demonstrates rapid improvement and ultimately achieves the highest performance in (a) halfcheetah-random and (f) hopper-random. While PBRL and CDC use critic ensembles for uncertainty estimation, MCRQ relies on double critics, offering substantially higher computational efficiency than PBRL.
 
2) Medium datasets: MCRQ outperforms all algorithms on halfcheetah-medium and hopper-medium, with slightly lower performance than PBRL on walker2d-medium. As shown in Fig.~\ref{fig:D4RL_comparison}, MCRQ exhibits rapid performance gains in (b) halfcheetah-medium and (g) hopper-medium, ultimately maintaining the highest performance among all methods. 

3) Medium-replay datasets: MCRQ achieves the best performance on halfcheetah-medium-replay and walker2d-medium-replay. On hopper-medium-replay, its performance is comparable to IQL but lower than OneStep, CQL, and PBRL. As shown in Fig.~\ref{fig:D4RL_comparison}, MCRQ shows rapid improvement in (c) halfcheetah-medium-replay and ultimately maintains the highest score among all algorithms.

4) Medium-expert datasets: MCRQ achieves performance comparable to BCQ on halfcheetah-medium-expert and performs competitively on hopper-medium-expert and walker2d-medium-expert.

5) Expert datasets: On halfcheetah-expert, MCRQ ranks second, just behind TD3\_BC. On hopper-expert, its performance is slightly lower than UWAC, TD3\_BC, and PBRL. On walker2d-expert, MCRQ performs slightly below IQL.

Fig.~\ref{fig:D4RL_comparison_normalized} presents a box plot based on 15 normalized average scores for each algorithm, derived from Table~\ref{tabular:D4RL_comparison}. In this figure, orange lines denote the mean, with the upper quartile defined as the mean plus the variance and the lower quartile as the mean minus the variance. The horizontal lines at the bottom and top correspond to the minimum and maximum normalized average scores, respectively, while the whiskers extend to these extremes.
It statistically shows that MCRQ achieves the highest mean among all evaluated algorithms, along with the narrowest box, indicating the smallest variance. Furthermore, MCRQ exhibits the shortest whiskers and the highest minimum score, demonstrating its strong competitiveness.

\begin{figure*}
  \centering
      \subcaptionbox{}{\includegraphics[width = 0.32\textwidth]{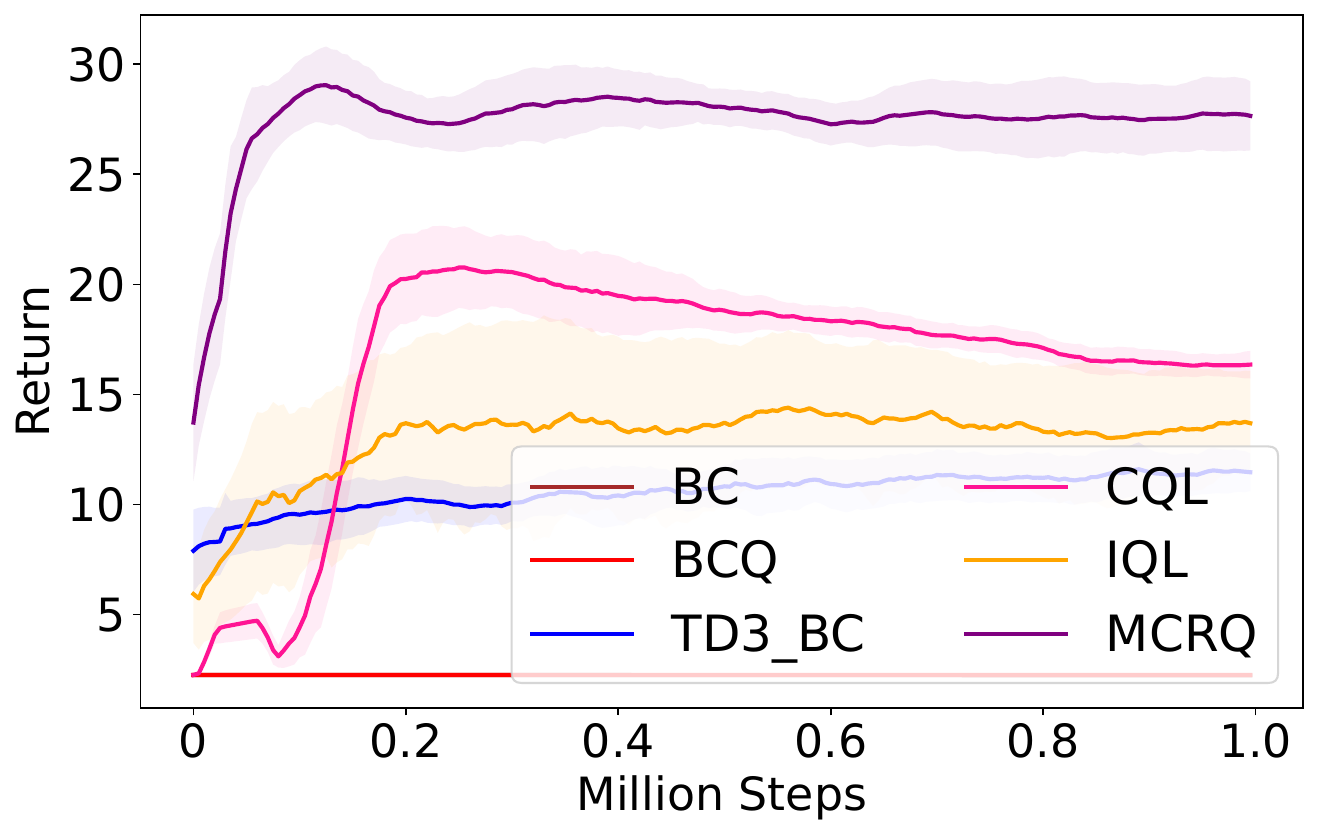}}
      \hfill
      \subcaptionbox{}{\includegraphics[width = 0.32\textwidth]{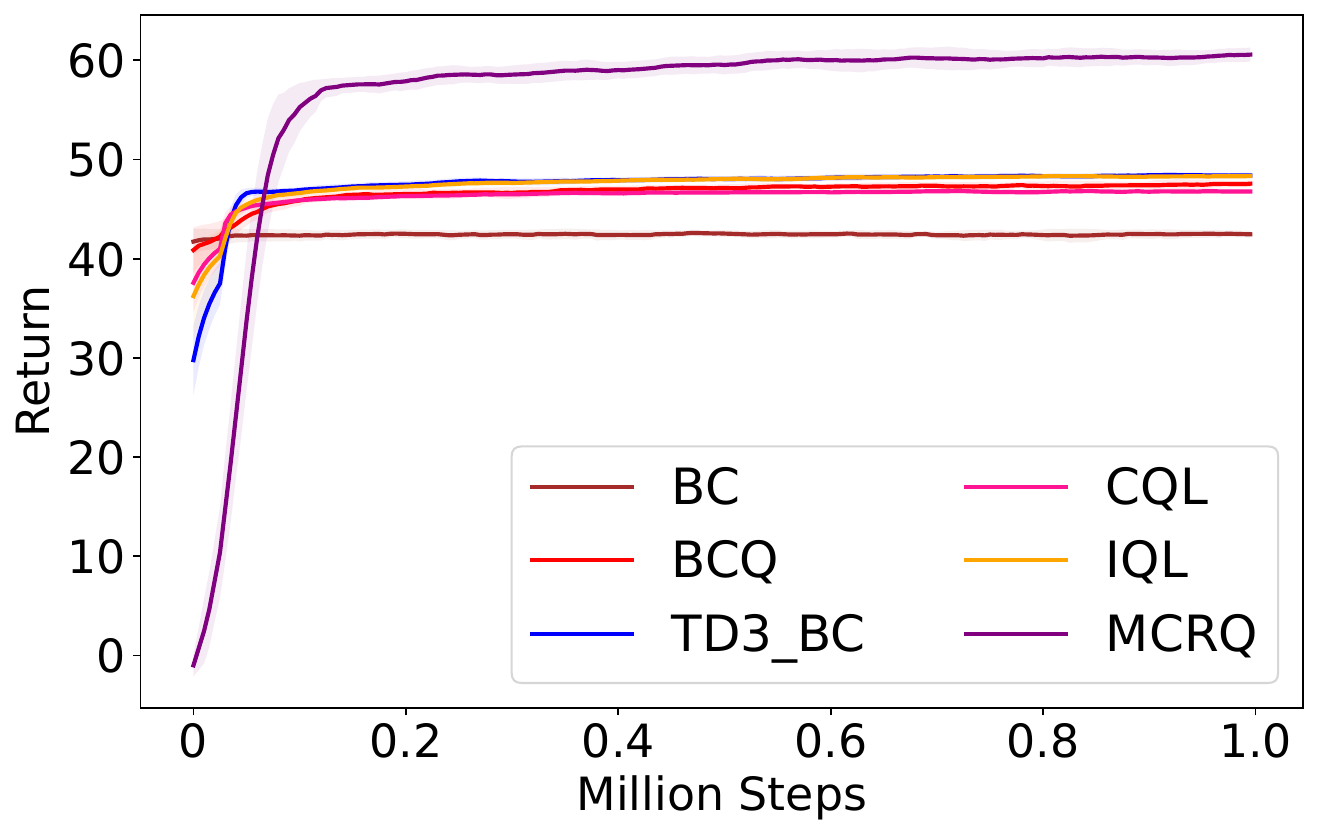}}
      \hfill
      \subcaptionbox{}{\includegraphics[width = 0.32\textwidth]{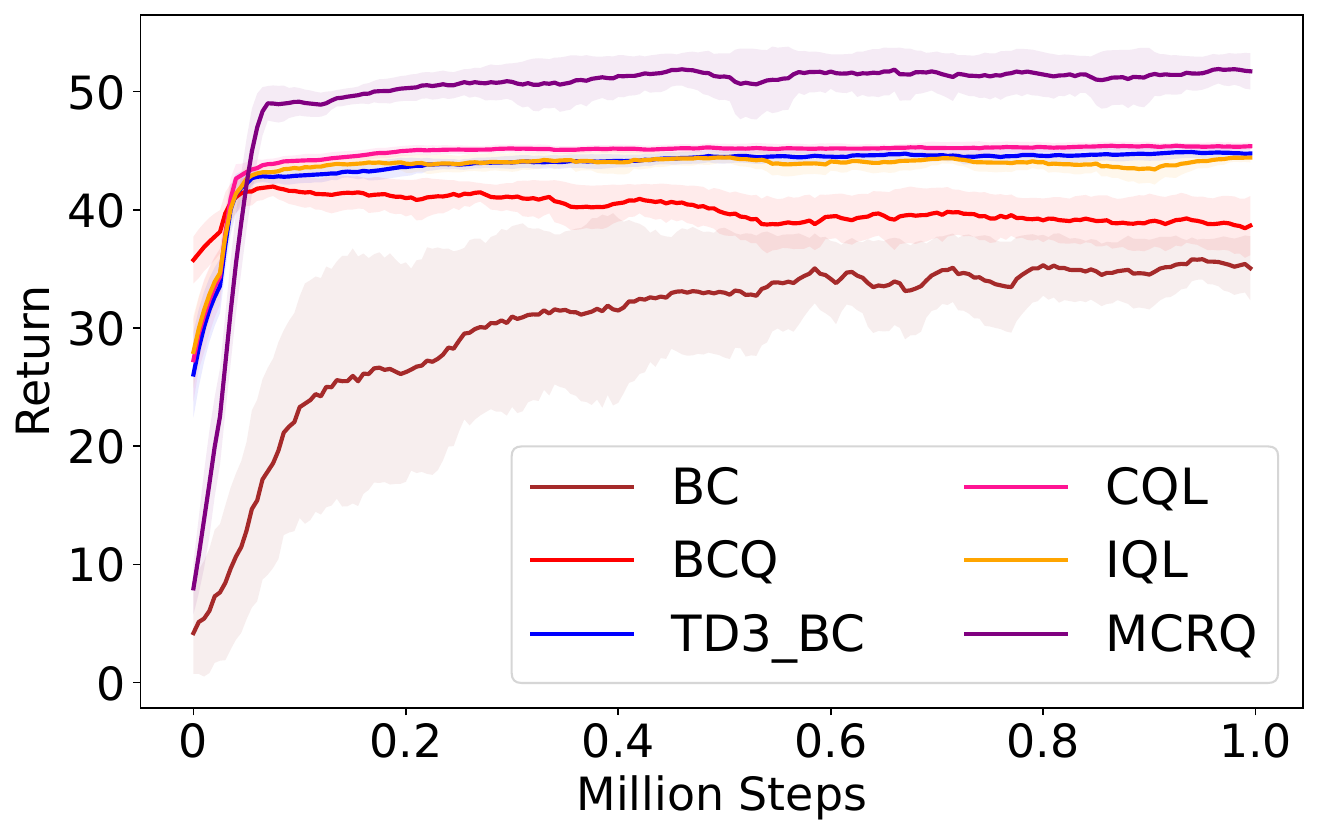}}
      \hfill
      \subcaptionbox{}{\includegraphics[width = 0.32\textwidth]{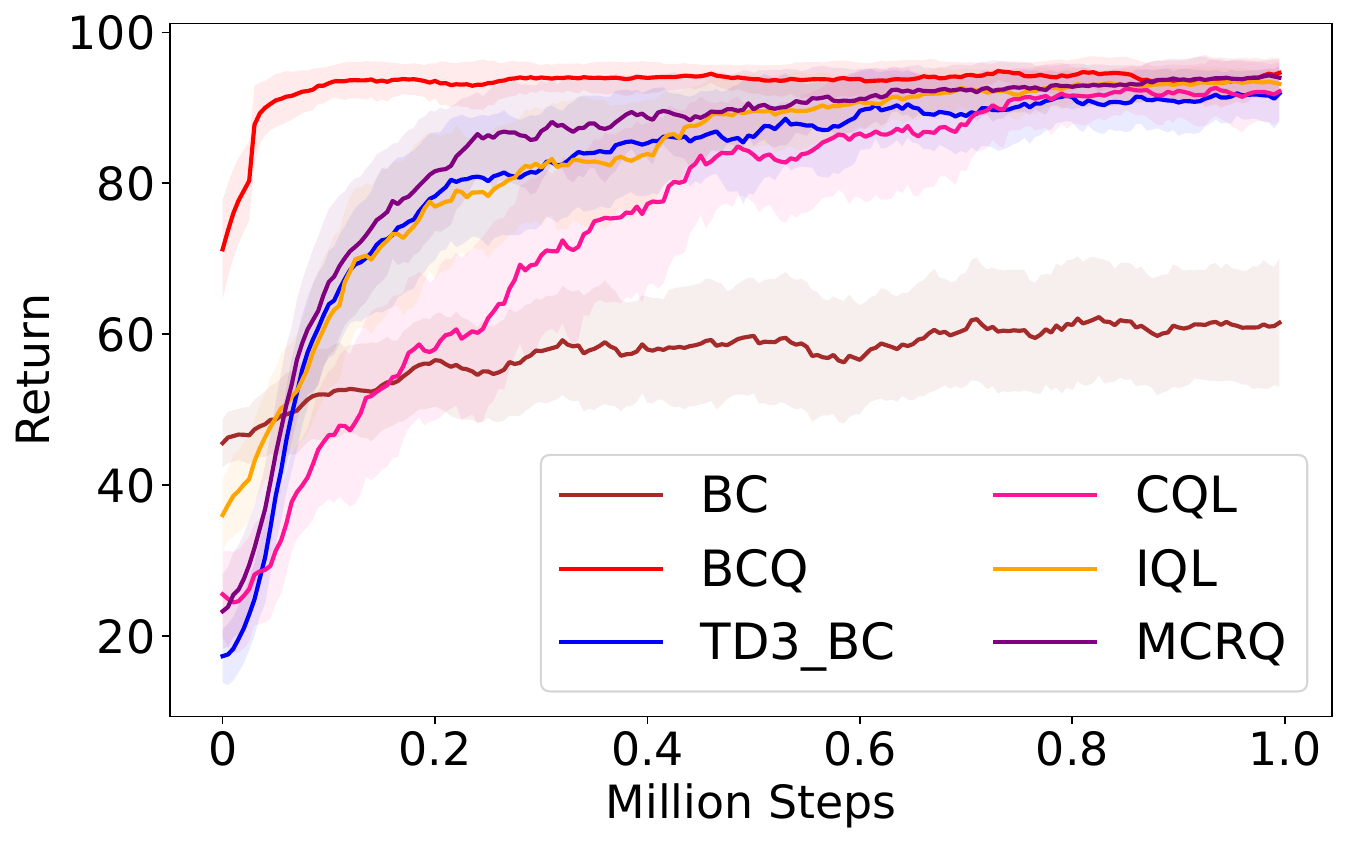}}
      \hfill
      \subcaptionbox{}{\includegraphics[width = 0.32\textwidth]{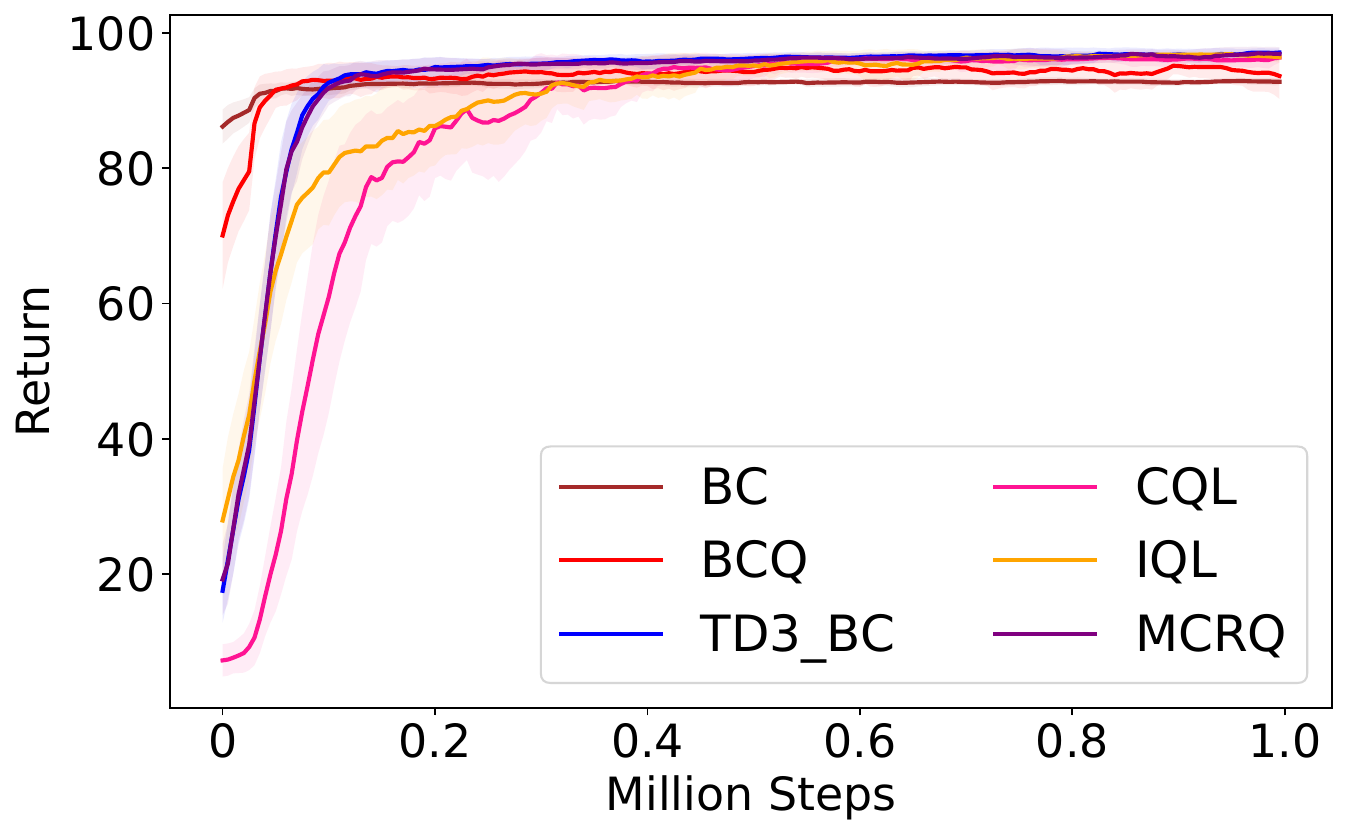}}
      \hfill
      \subcaptionbox{}{\includegraphics[width = 0.32\textwidth]{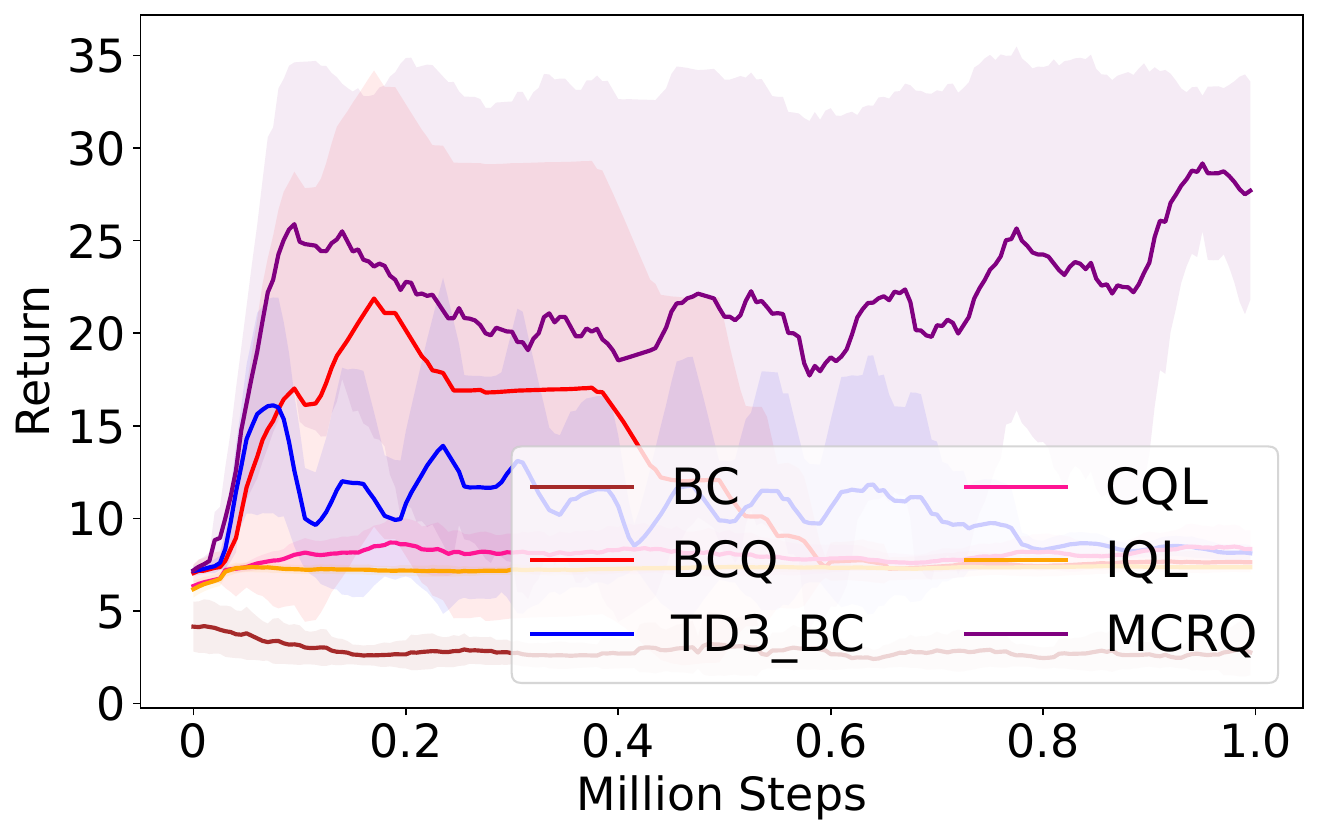}}
      \hfill
      \subcaptionbox{}{\includegraphics[width = 0.32\textwidth]{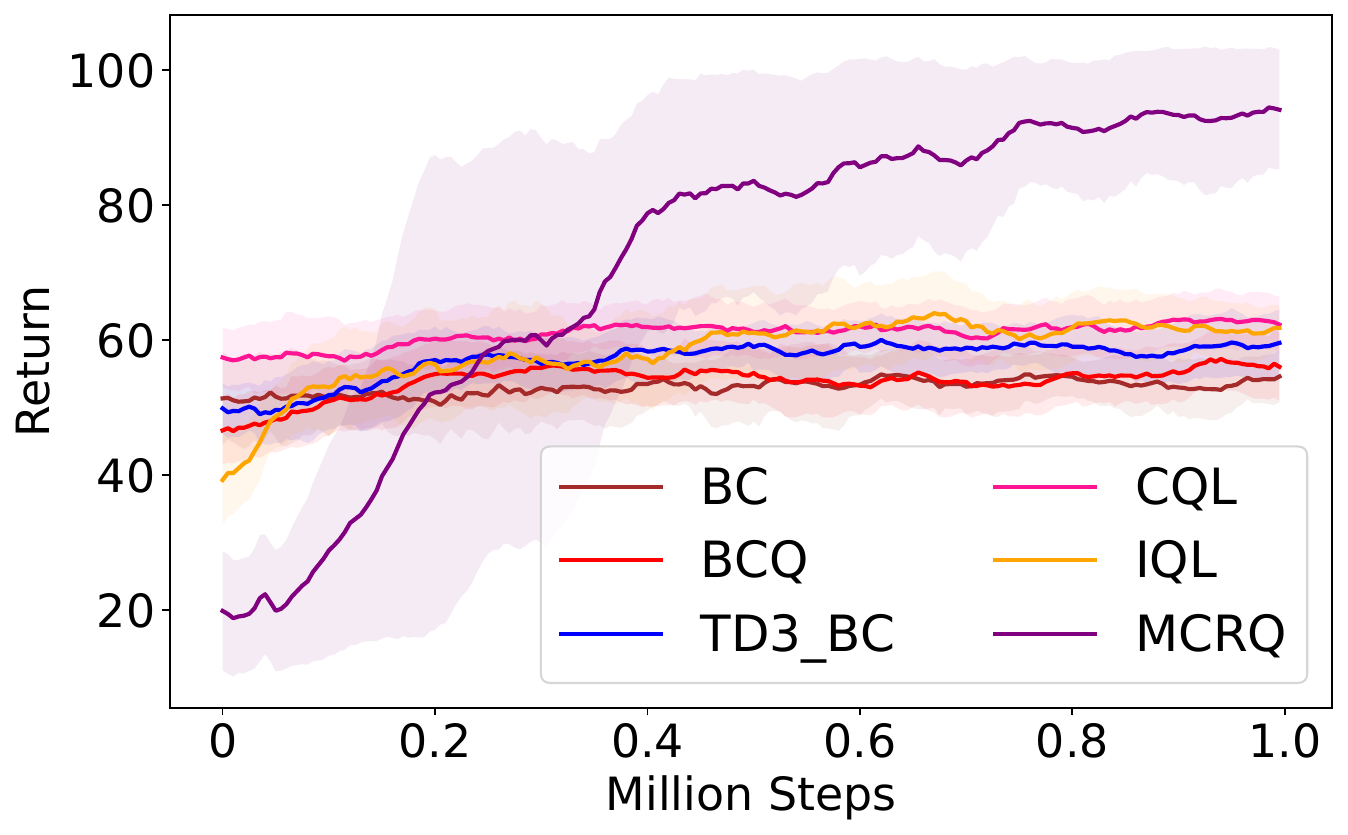}}
      \hfill
      \subcaptionbox{}{\includegraphics[width = 0.32\textwidth]{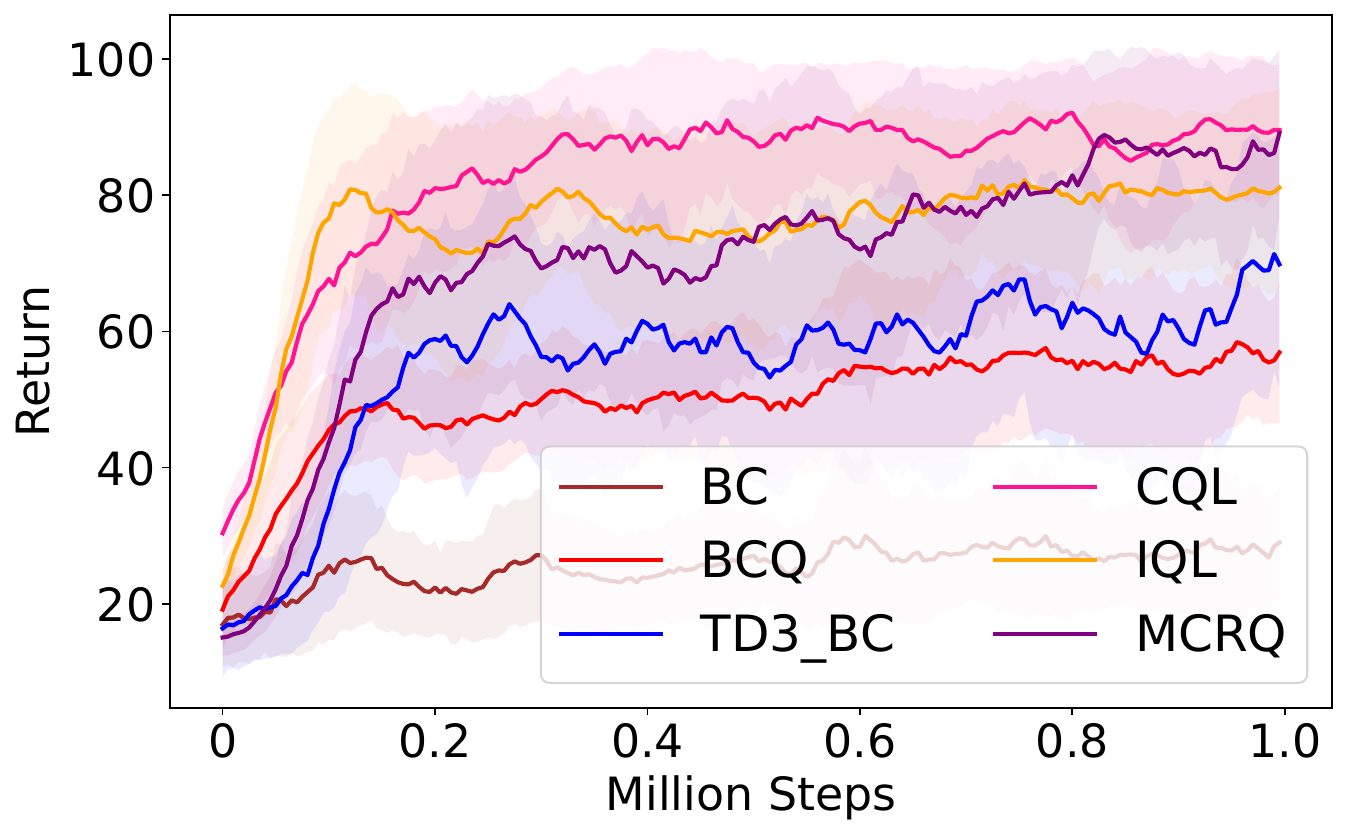}}
      \hfill
      \subcaptionbox{}{\includegraphics[width = 0.32\textwidth]{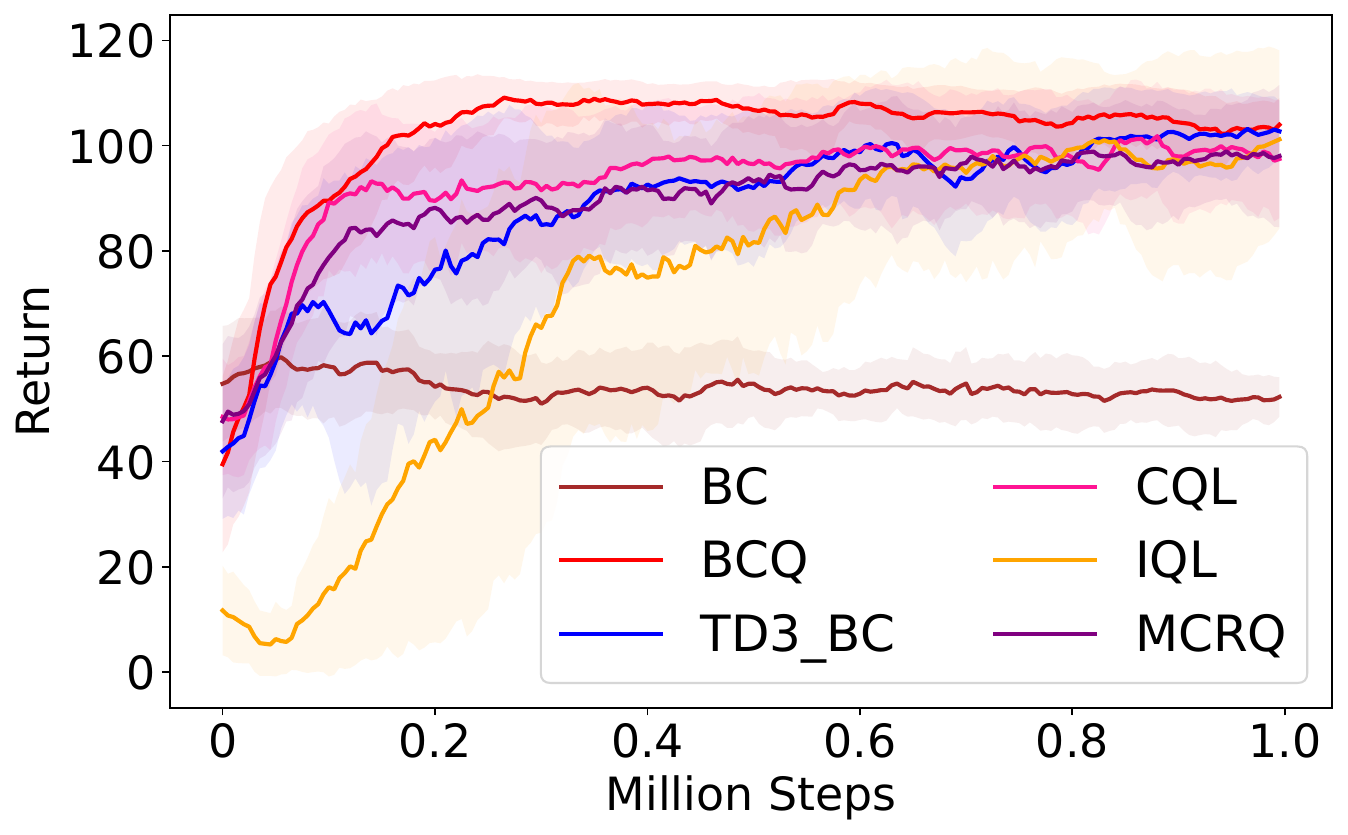}}
      \hfill
      \subcaptionbox{}{\includegraphics[width = 0.32\textwidth]{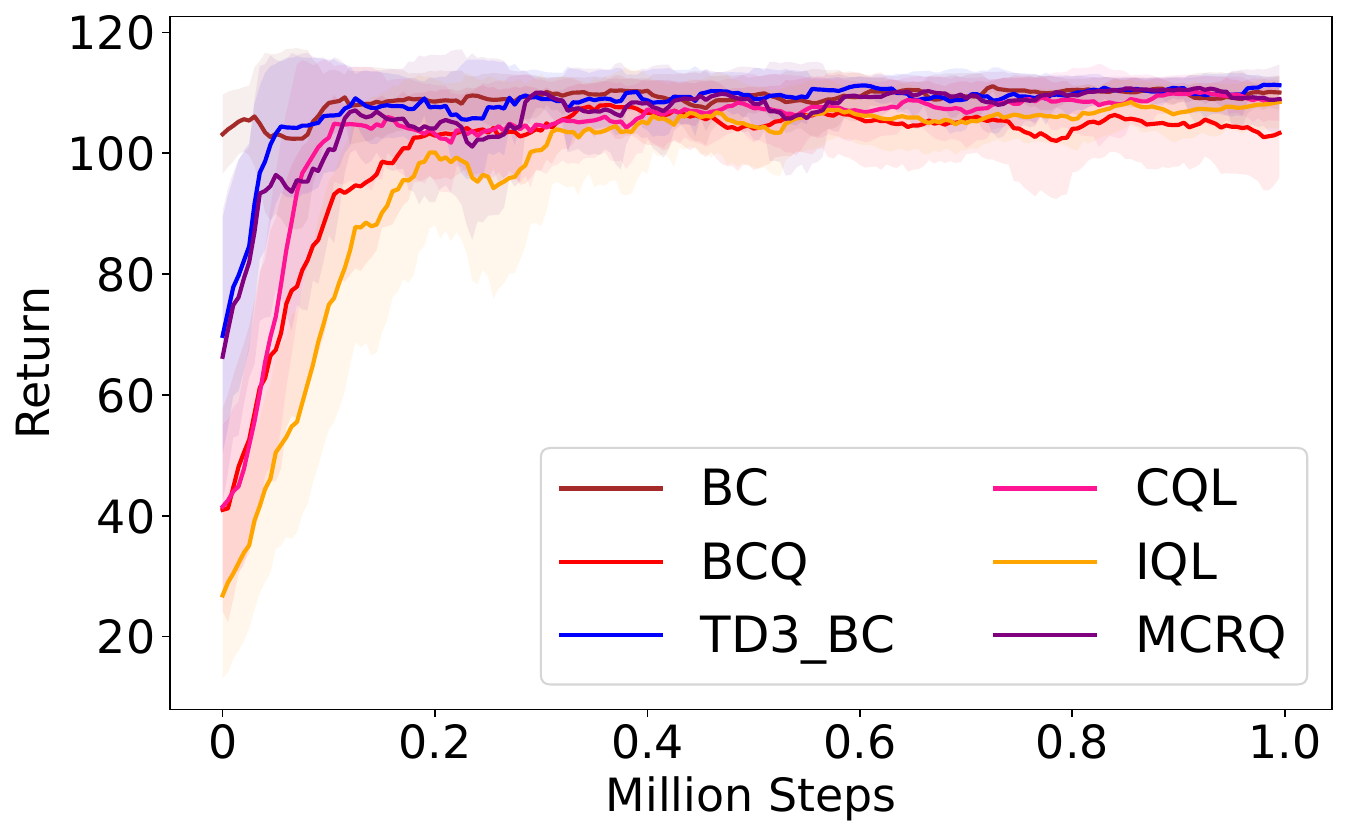}}
      \hfill
      \subcaptionbox{}{\includegraphics[width = 0.32\textwidth]{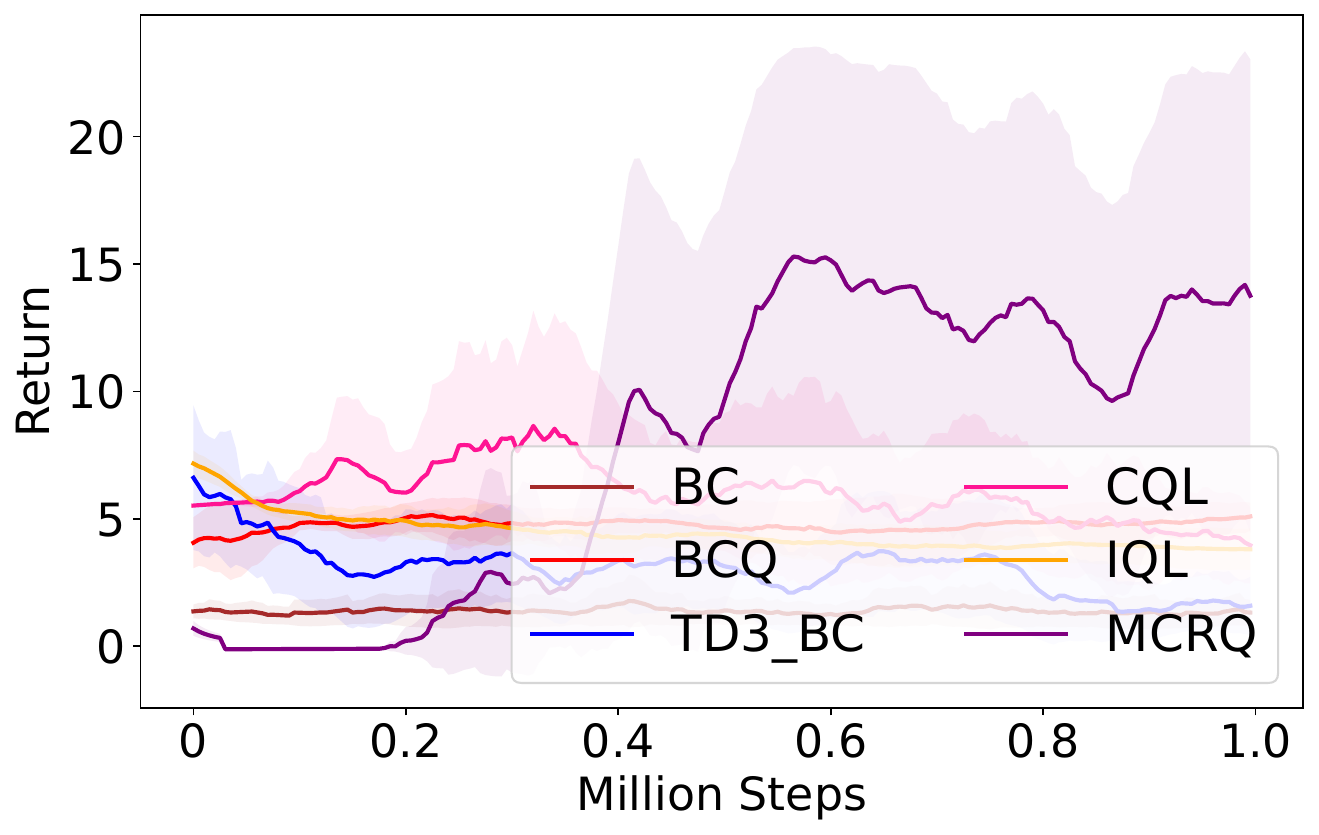}}
      \hfill
      \subcaptionbox{}{\includegraphics[width = 0.32\textwidth]{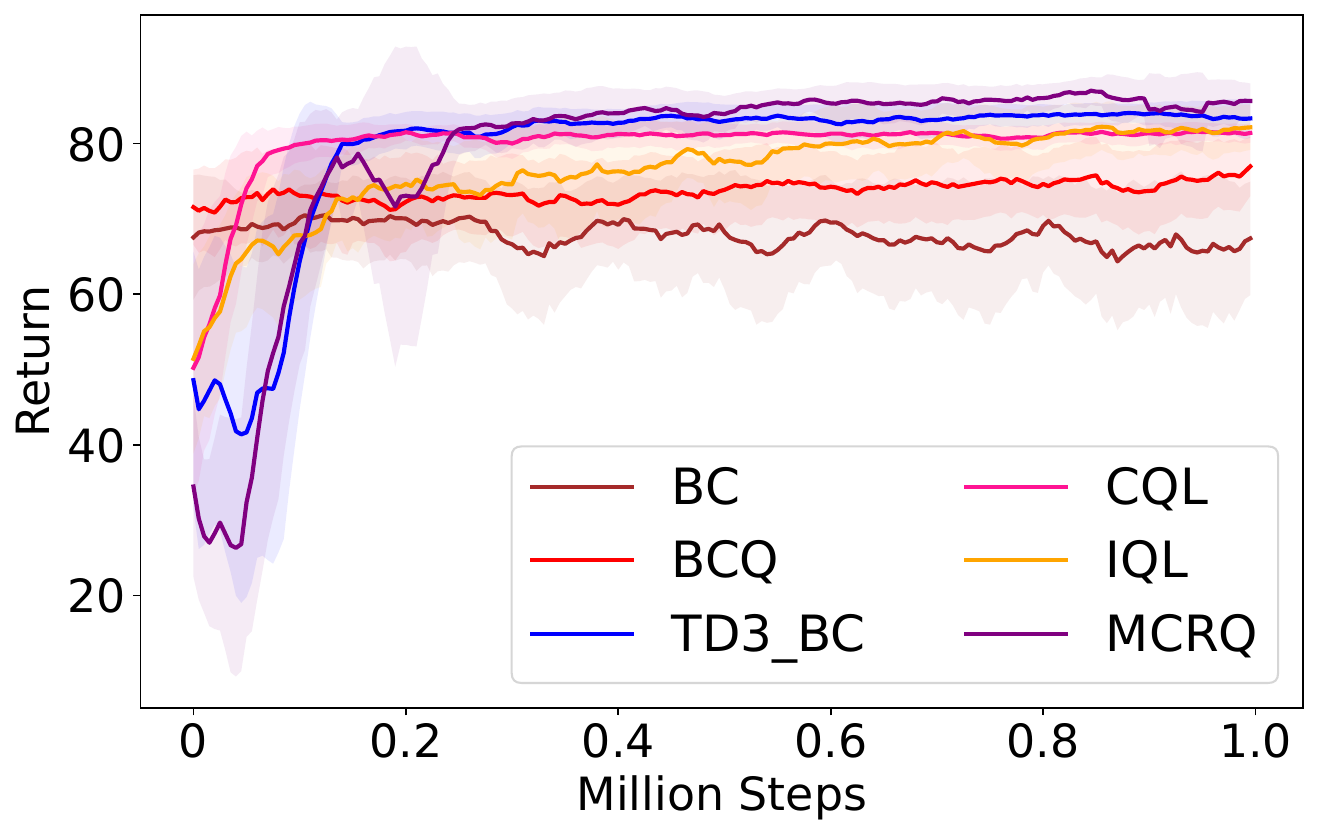}}
      \hfill
      \subcaptionbox{}{\includegraphics[width = 0.32\textwidth]{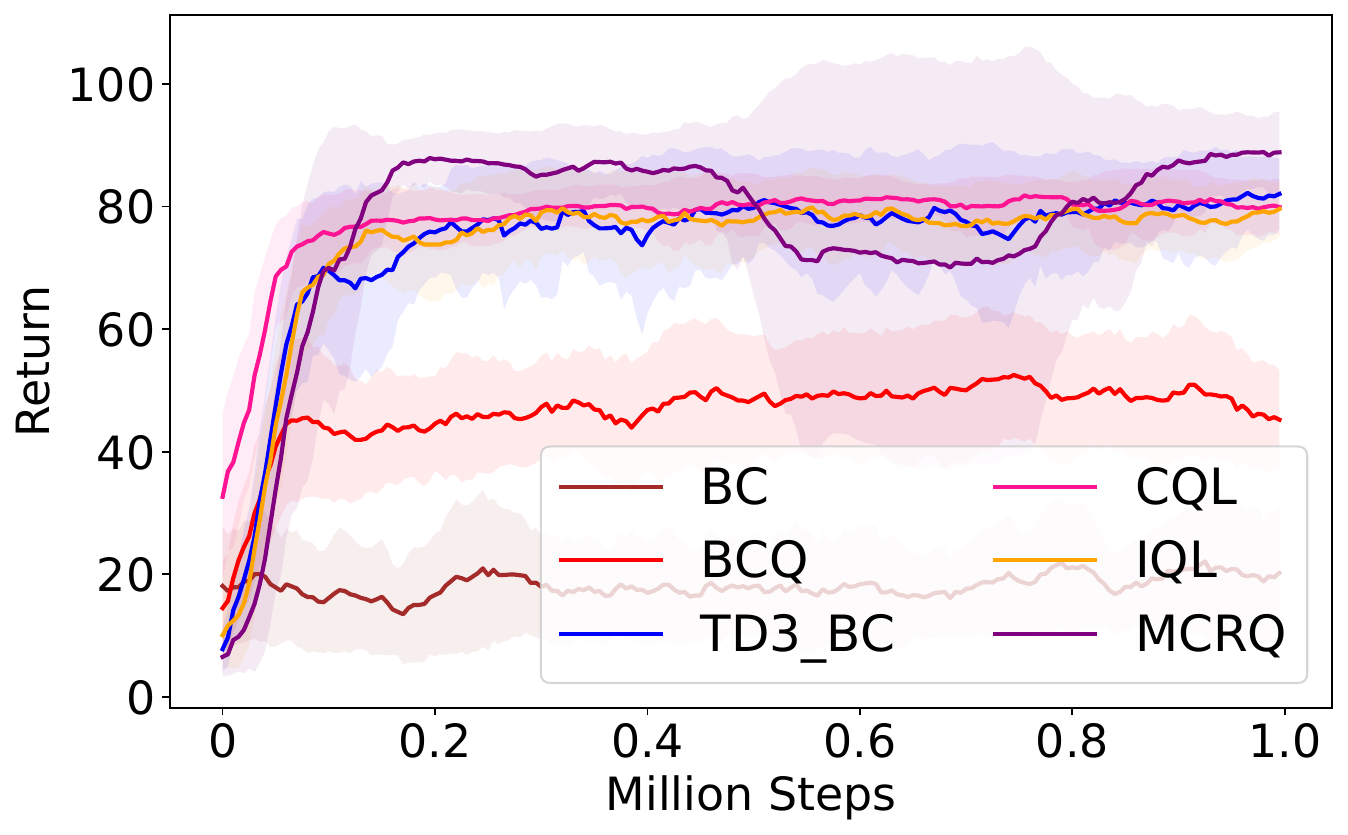}}
      \hfill
      \subcaptionbox{}{\includegraphics[width = 0.32\textwidth]{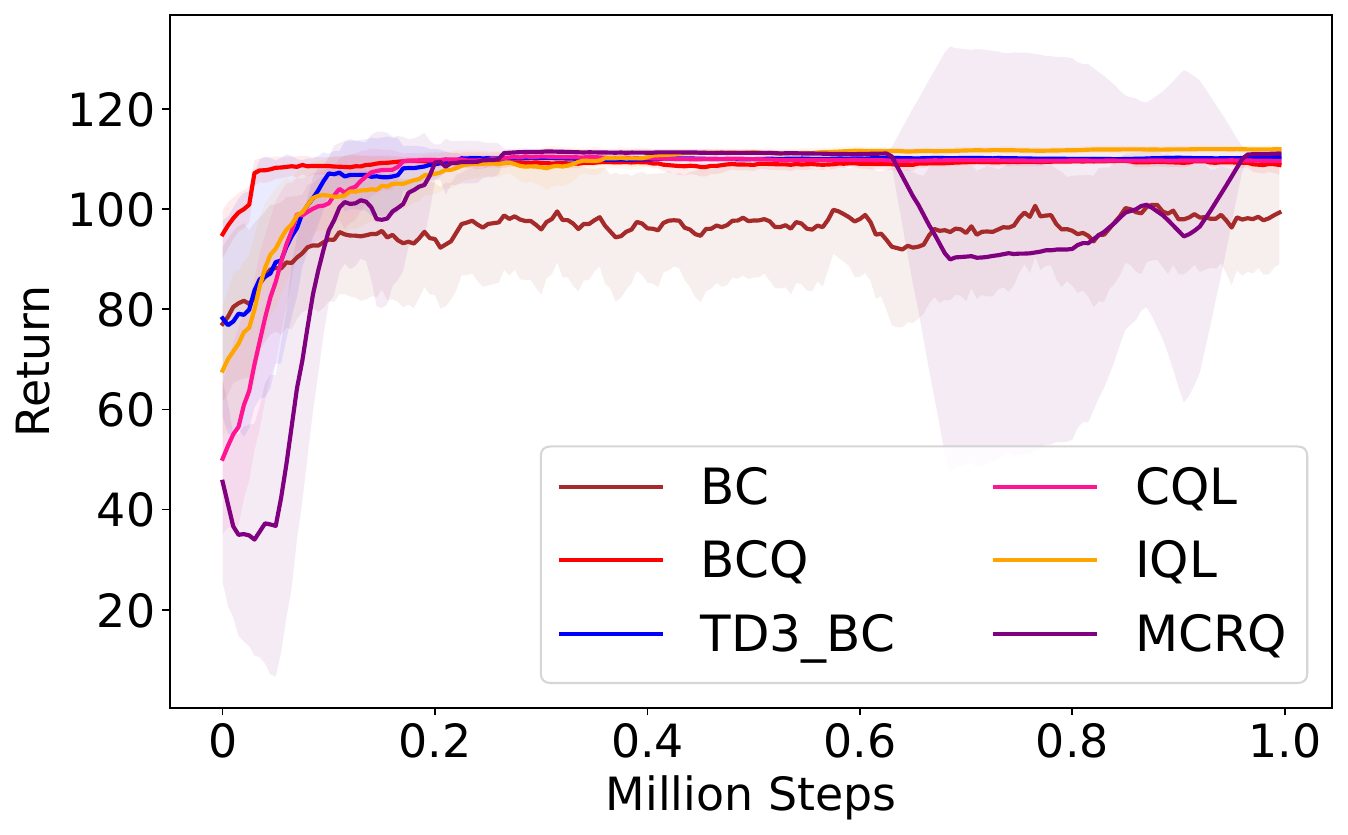}}
      \hfill
      \subcaptionbox{}{\includegraphics[width = 0.32\textwidth]{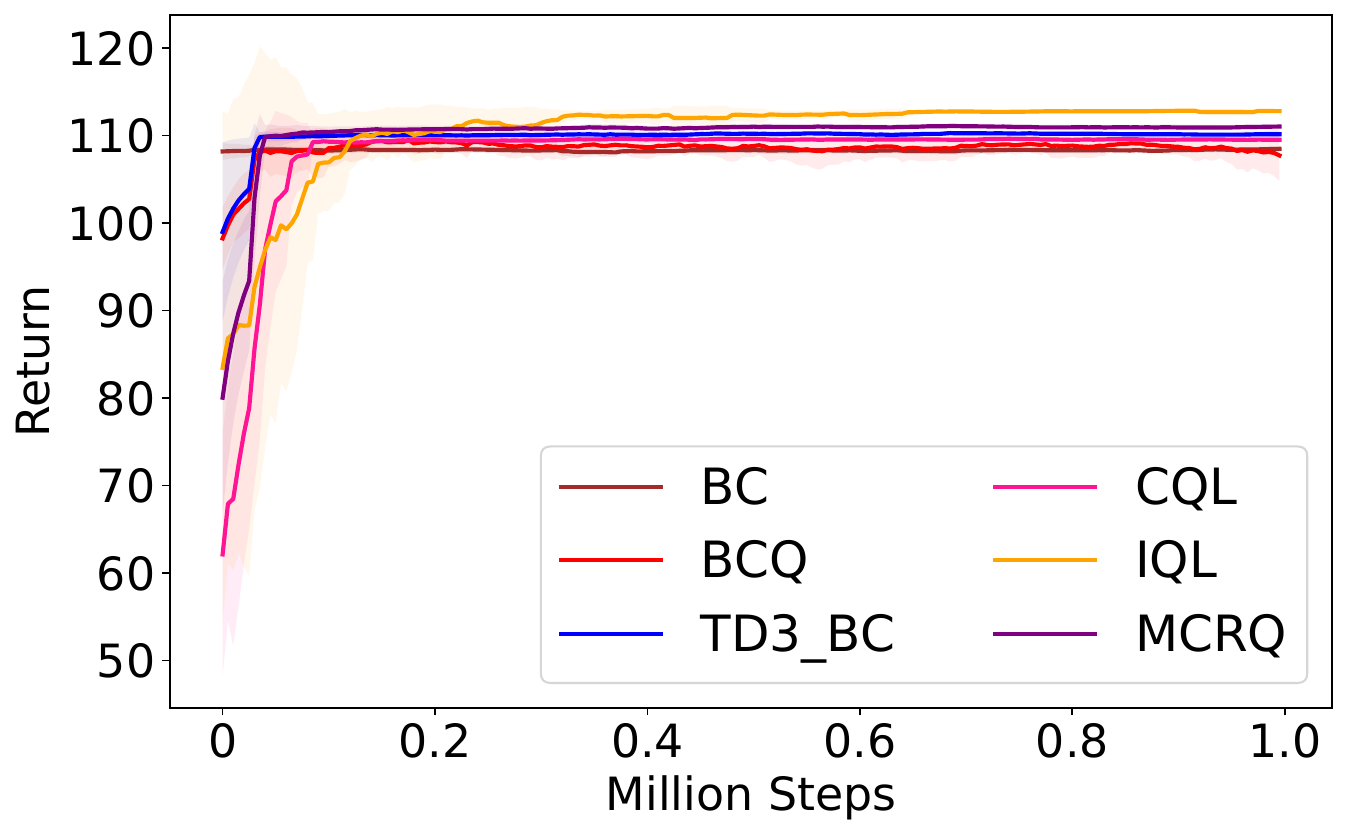}}
  \caption{Normalized score curves across D4RL benchmarks. (a) halfcheetah-random-v2, (b) halfcheetah-medium-v2, (c) halfcheetah-medium-replay-v2, (d) halfcheetah-medium-expert-v2, (e) halfcheetah-expert-v2, (f) hopper-random-v2, (g) hopper-medium-v2, (h) hopper-medium-replay-v2, (i) hopper-medium-expert-v2, (j) hopper-expert-v2, (k) walker2d-random-v2, (l) walker2d-medium-v2, (m) walker2d-medium-replay-v2, (n) walker2d-medium-expert-v2, and (o) walker2d-expert-v2.}
  \label{fig:D4RL_comparison}
  \end{figure*}

\begin{table*}[t]
  \caption{Normalized average scores for MCRQ and baseline algorithms.}
  \resizebox{\textwidth}{!}{
  \begin{tabular}{lcccccccccccc}
  \hline
  Task Name & BEAR & UWAC & BC & CDC & AWAC & BCQ & OneStep & TD3\_BC & CQL & IQL & PBRL & MCRQ \\
  \hline
  halfcheetah-r & 0.00 & 0.00 & 0.00 & 0.99 & 0.15 & 0.00 & 0.00 & 0.35 & 0.55 & 0.45 & 0.35 & 1.00 \\
  hopper-r & 0.05 & 0.00 & 0.00 & 0.49 & 0.26 & 0.20 & 0.12 & 0.22 & 0.23 & 0.19 & 0.97 & 1.00 \\
  walker2d-r & 0.95 & 0.14 & 0.09 & 0.53 & 0.00 & 0.36 & 0.51 & 0.11 & 0.30 & 0.27 & 0.60 & 1.00 \\
  \hline
  halfcheetah-m & 0.04 & 0.00 & 0.02 & 0.21 & 0.31 & 0.29 & 0.45 & 0.34 & 0.25 & 0.33 & 0.86 & 1.00 \\
  hopper-m & 0.02 & 0.00 & 0.07 & 0.22 & 0.21 & 0.12 & 0.85 & 0.20 & 0.28 & 0.23 & 0.57 & 1.00 \\
  walker2d-m & 0.00 & 0.84 & 0.74 & 0.92 & 0.93 & 0.85 & 0.95 & 0.94 & 0.91 & 0.91 & 1.00 & 0.95 \\
  \hline
  halfcheetah-m-r & 0.05 & 0.03 & 0.00 & 0.56 & 0.57 & 0.21 & 0.44 & 0.57 & 0.60 & 0.54 & 0.58 & 1.00 \\
  hopper-m-r & 0.36 & 0.00 & 0.04 & 0.41 & 0.59 & 0.42 & 0.97 & 0.57 & 0.85 & 0.73 & 1.00 & 0.82 \\
  walker2d-m-r & 0.00 & 0.20 & 0.15 & 0.20 & 0.87 & 0.48 & 0.67 & 0.92 & 0.89 & 0.88 & 0.86 & 1.00 \\
  \hline
  halfcheetah-m-e & 0.06 & 0.00 & 0.35 & 0.33 & 0.43 & 1.00 & 0.63 & 0.97 & 0.96 & 0.99 & 0.97 & 1.00 \\
  hopper-m-e & 0.09 & 0.00 & 0.11 & 0.64 & 0.84 & 0.89 & 0.97 & 0.85 & 0.79 & 0.83 & 1.00 & 0.81 \\
  walker2d-m-e & 0.00 & 0.83 & 0.85 & 0.54 & 0.98 & 0.97 & 0.99 & 0.98 & 0.95 & 1.00 & 0.98 & 0.99 \\
  \hline
  halfcheetah-e & 0.71 & 0.73 & 0.72 & 0.03 & 0.00 & 0.80 & 0.42 & 1.00 & 0.93 & 0.97 & 0.69 & 0.98 \\
  hopper-e & 0.00 & 1.00 & 0.99 & 0.86 & 0.98 & 0.87 & 0.94 & 1.00 & 0.97 & 0.96 & 1.00 & 0.97 \\
  walker2d-e & 0.75 & 0.83 & 0.83 & 0.00 & 0.89 & 0.82 & 0.92 & 0.89 & 0.87 & 1.00 & 0.82 & 0.93 \\
  \hline
  \end{tabular}
  }
  \label{tabular:D4RL_comparison}
\end{table*}

\begin{figure}
  \centering
  \includegraphics[width = 0.48\textwidth]{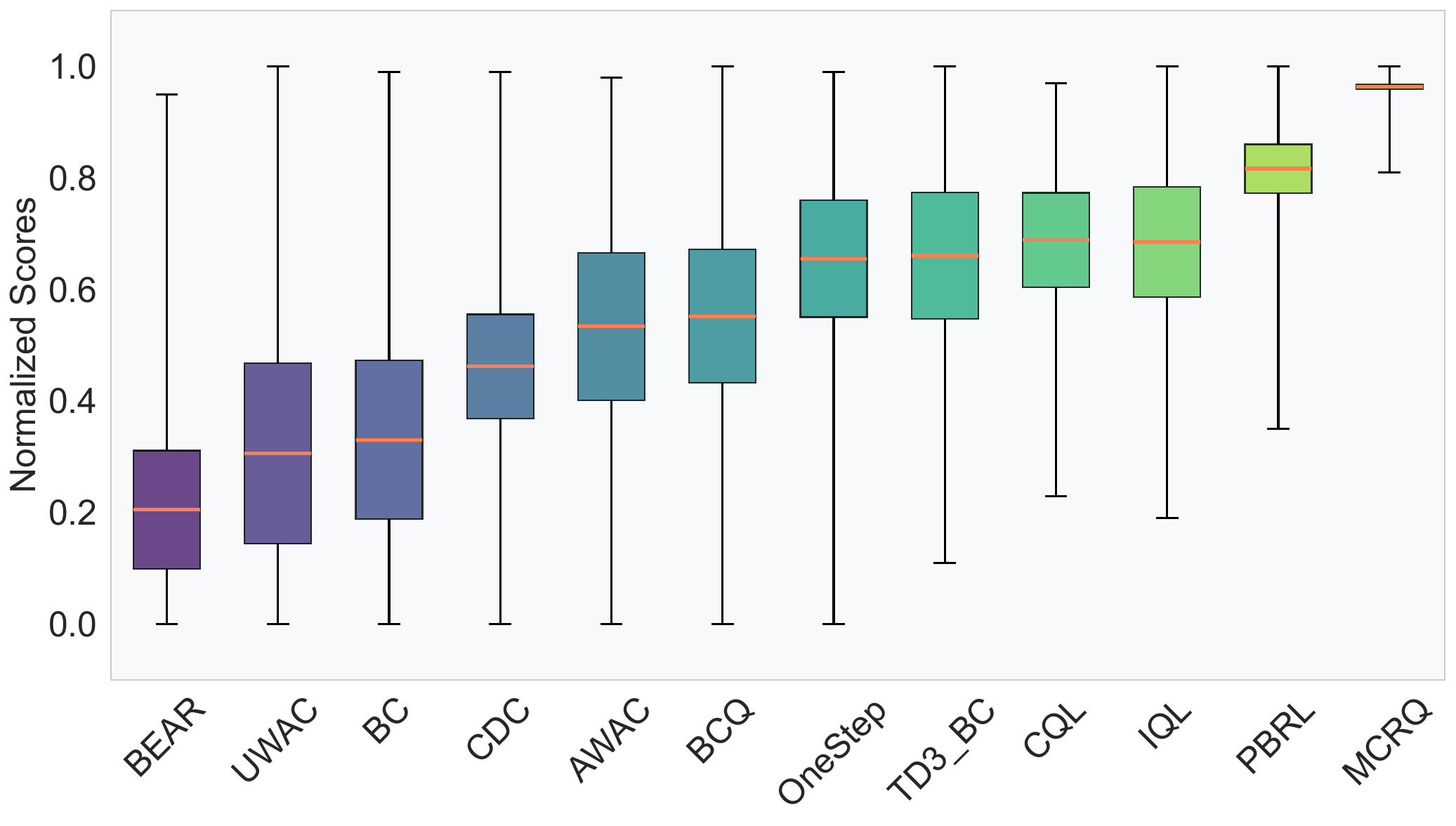}
  \hfill
  \caption{Box plot with minimum, maximum, mean, and variance of 15 normalized average scores for MCRQ and baseline algorithms.}
  \label{fig:D4RL_comparison_normalized}
\end{figure}

\subsection{KL Divergence Comparison}
\label{subsection:kl-divergence-comparison}

KL divergence is used to evaluate the similarity between the target policy and the behavior policy, with a lower value indicating closer alignment. We compare the KL divergence of BCQ, TD3\_BC, CQL, IQL, and MCRQ across five halfcheetah datasets. The results are shown in Fig.~\ref{fig:D4RL_KL_Divergence_Comparison} and Table~\ref{tabular:D4RL_KL_Divergence_Comparison}, where the table reports the average of the last ten KL divergence values.

In halfcheetah-random, CQL appears most conservative, while MCRQ exhibits a lower KL divergence than BCQ, TD3\_BC, and IQL. This suggests that MCRQ better aligns with the low-quality behavior data, avoiding excessive deviation from the behavior policy. In contrast, BCQ shows a significantly higher KL divergence, which corresponds with its poor performance (0.00) on halfcheetah-random, as seen in Table~\ref{tabular:D4RL_comparison}. For the remaining datasets, all algorithms maintain low KL divergence, indicating that their target policies remain closely aligned with the behavior policy.

\begin{figure*}
  \centering
    \subcaptionbox{}{\includegraphics[width = 0.32\textwidth]{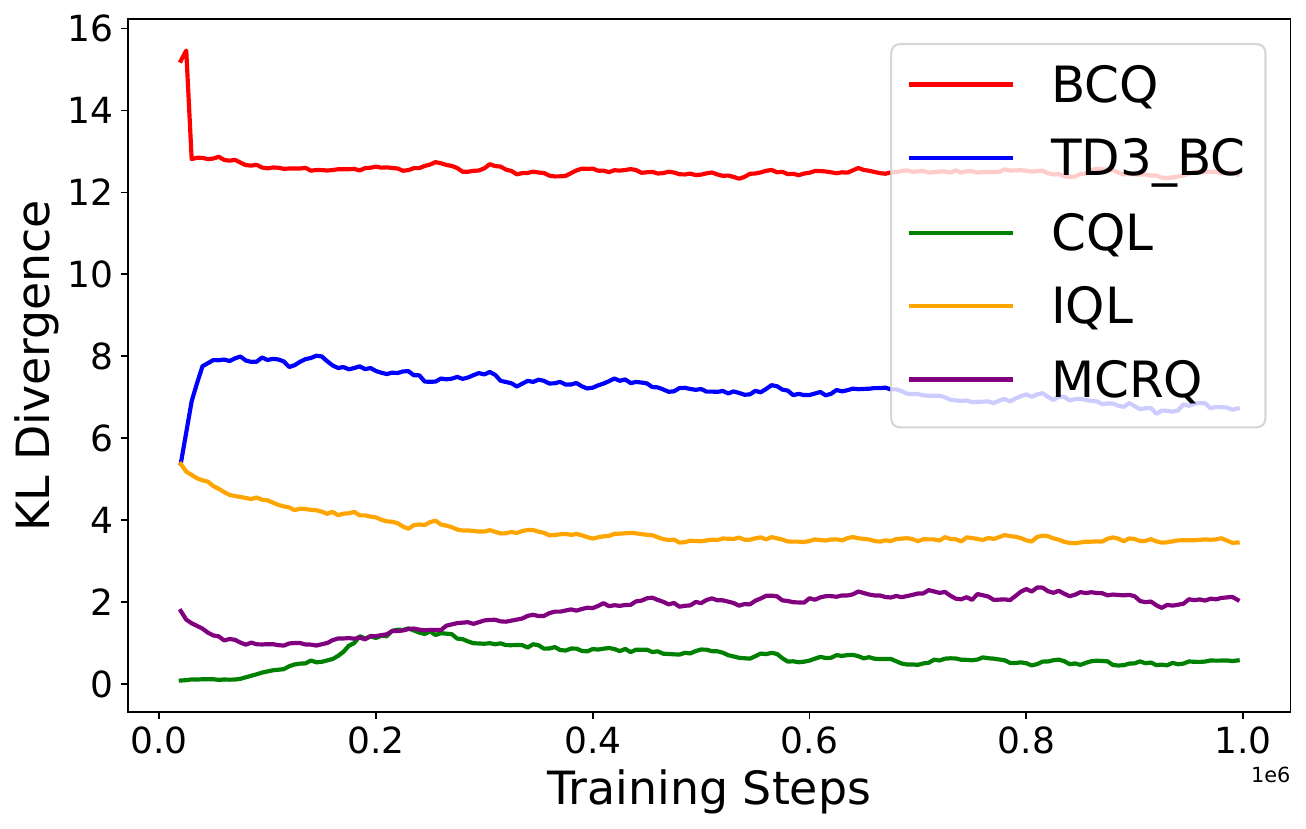}}
    \hfill
    \subcaptionbox{}{\includegraphics[width = 0.32\textwidth]{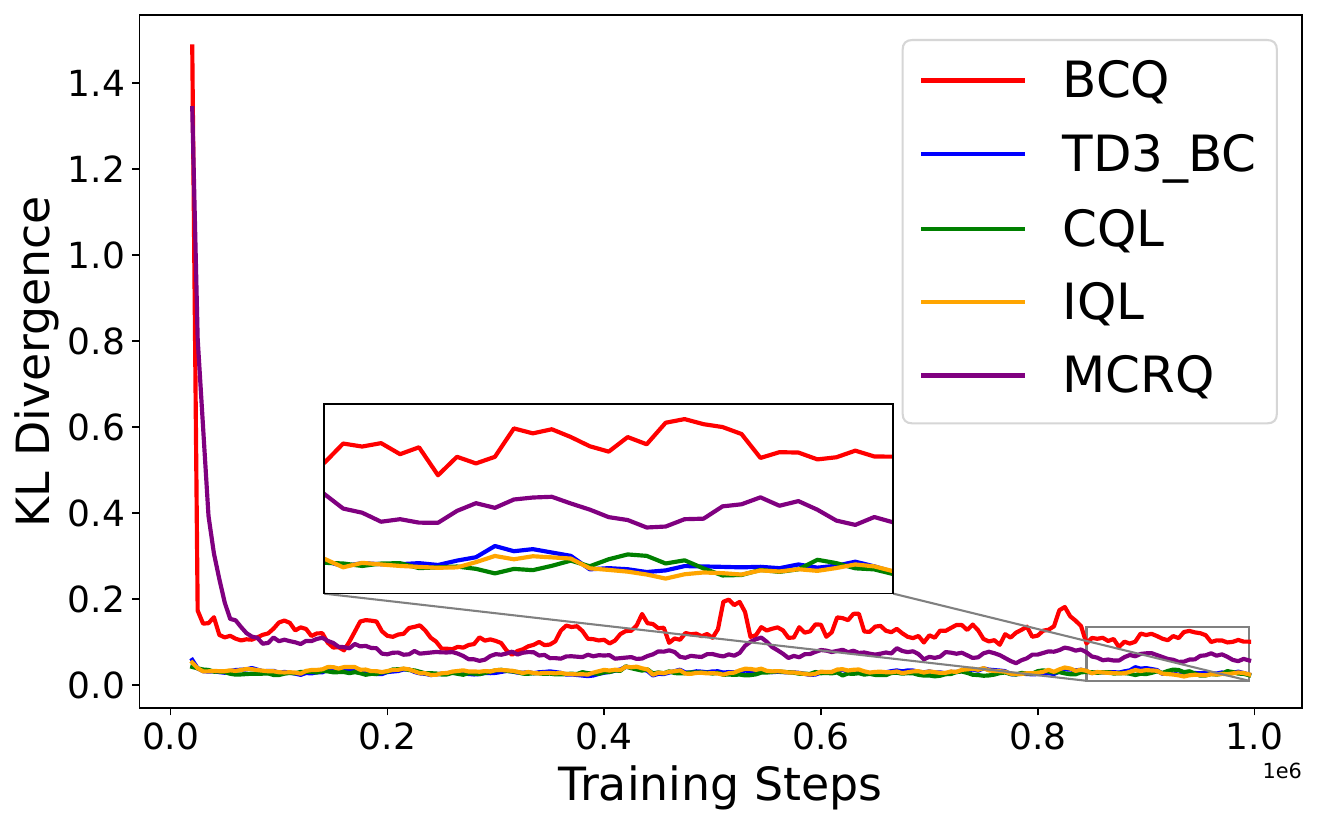}}
    \hfill
    \subcaptionbox{}{\includegraphics[width = 0.32\textwidth]{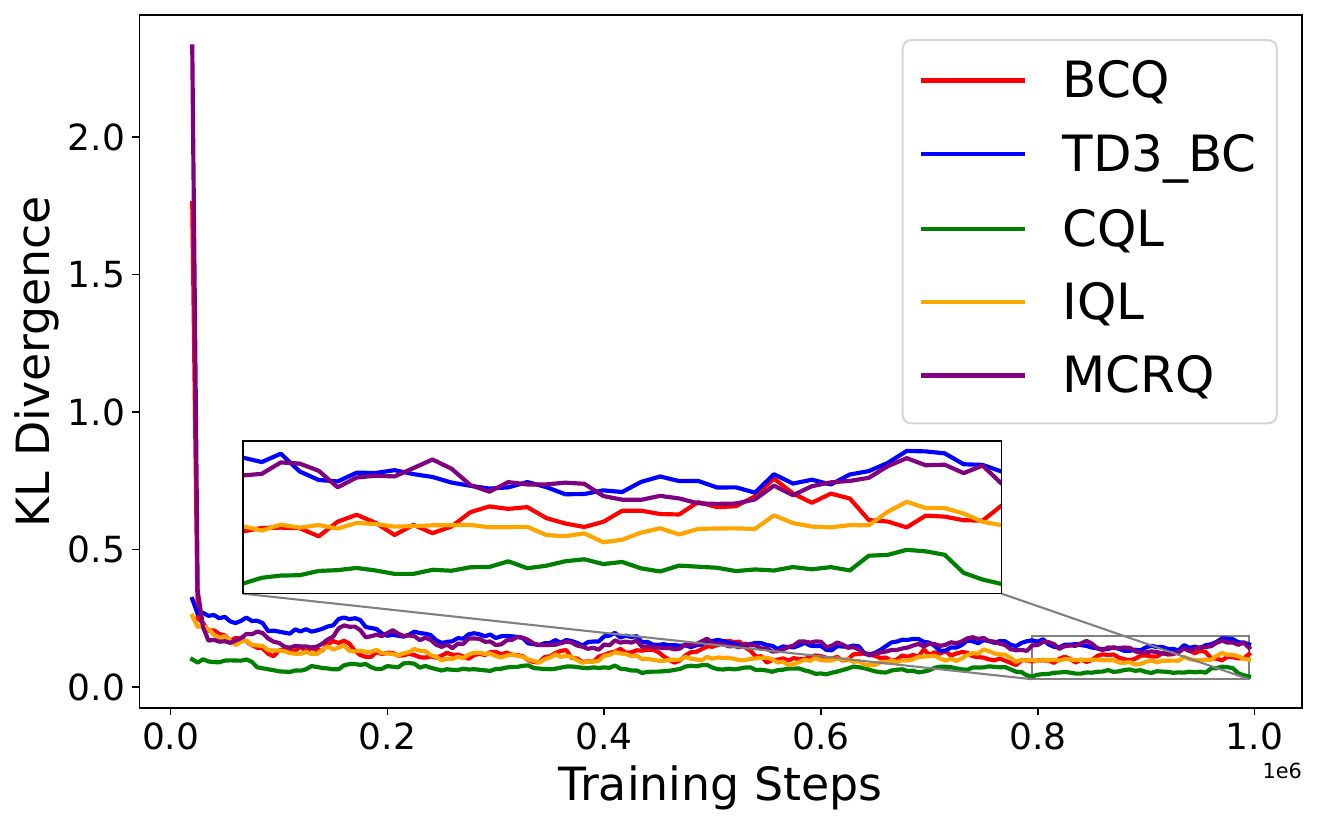}}
    \hfill
    \subcaptionbox{}{\includegraphics[width = 0.32\textwidth]{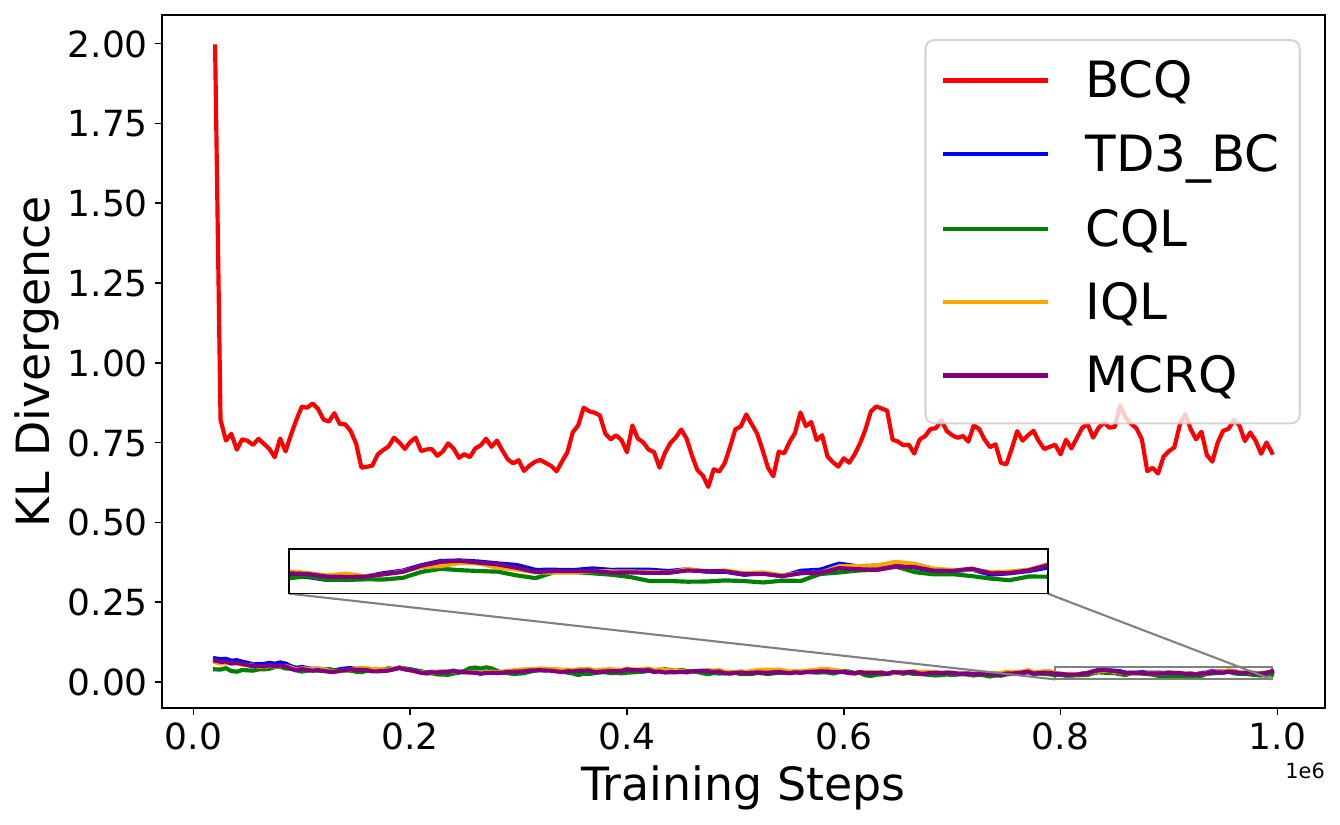}}
    \subcaptionbox{}{\includegraphics[width = 0.32\textwidth]{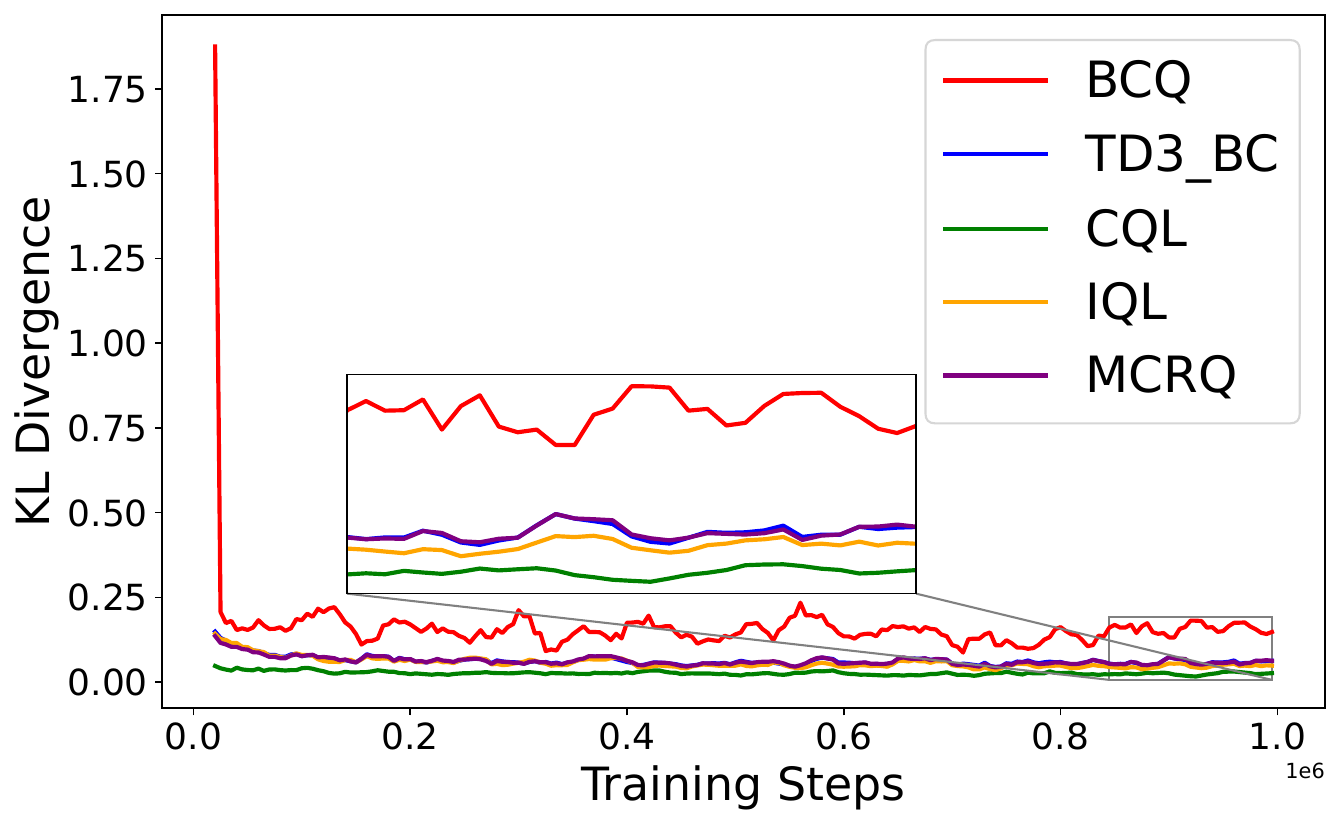}}
  \caption{Comparison of KL divergence across five halfcheetah datasets. (a) halfcheetah-random-v2, (b) halfcheetah-medium-v2, (c)  halfcheetah-medium-replay-v2, (d) halfcheetah-medium-expert-v2, and (e) halfcheetah-expert-v2.}
  \label{fig:D4RL_KL_Divergence_Comparison}
\end{figure*}

\begin{table}[!t]
  \caption{Comparison of KL divergence in Fig.~\ref{fig:D4RL_KL_Divergence_Comparison}.} 
  \centering
  \begin{tabular}{llllll}
    \toprule
    Algorithm & BCQ & TD3\_BC & CQL & IQL & MCRQ  \\
    \midrule
    halfcheetah-r & 12.48 & 6.78 & 0.56 & 3.51 & 2.10 \\
    halfcheetah-m & 0.11 & 0.03 & 0.03 & 0.03 & 0.07 \\
    halfcheetah-m-r & 0.11 & 0.16 & 0.06 & 0.11 & 0.16 \\
    halfcheetah-m-e & 0.77 & 0.03 & 0.03 & 0.03 & 0.03 \\
    halfcheetah-e & 0.16 & 0.06 & 0.03 & 0.05 & 0.06 \\
    \bottomrule
  \end{tabular}
  \label{tabular:D4RL_KL_Divergence_Comparison}
\end{table}

\subsection{Ablation Experiments}
\label{subsection:ablation-experiments}

We conduct ablation experiments on halfcheetah-medium and halfcheetah-expert, as shown in Fig.~\ref{fig:D4RL_ablation}. The corresponding analysis and discussion are presented below.

1) Impact of $\omega$: As $\omega$ increases, MCRQ becomes more conservative, causing the target policy's actions to more closely align with those in the offline dataset. On the selected datasets, MCRQ’s performance consistently declines when $\omega$ is set to  \{2.0, 2.5\}, regardless of variations in $\alpha$ and $\upsilon$.

2) Impact of $\upsilon$: When $\alpha$ and $\omega$ are fixed, MCRQ exhibits relatively stable performance across different values of $\upsilon$ in halfcheetah-medium, indicating that changes in $\upsilon$ have minimal impact on performance. In contrast, on halfcheetah-expert, MCRQ is more sensitive to $\upsilon$ when $\alpha$ is set to \{10.0, 15.0, 20.0, 25.0\}.

3) Impact of $\alpha$: In subfigures (a)-(f), when $\upsilon$ and $\omega$ are fixed, with $\omega \in \{0.0, 0.5, 1.0, 1.5\}$, the performance of MCRQ consistently improves as $\alpha$ increases. However, in subfigures (g)-(l), MCRQ becomes more sensitive to changes in $\alpha$. Notably, when $\alpha \in \{10.0, 15.0, 20.0, 25.0\}$, the performance deteriorates significantly, regardless of the values of $\upsilon$ and $\omega$.

\begin{figure*}
  \centering
    \subcaptionbox{}{\includegraphics[width = 0.24\textwidth]{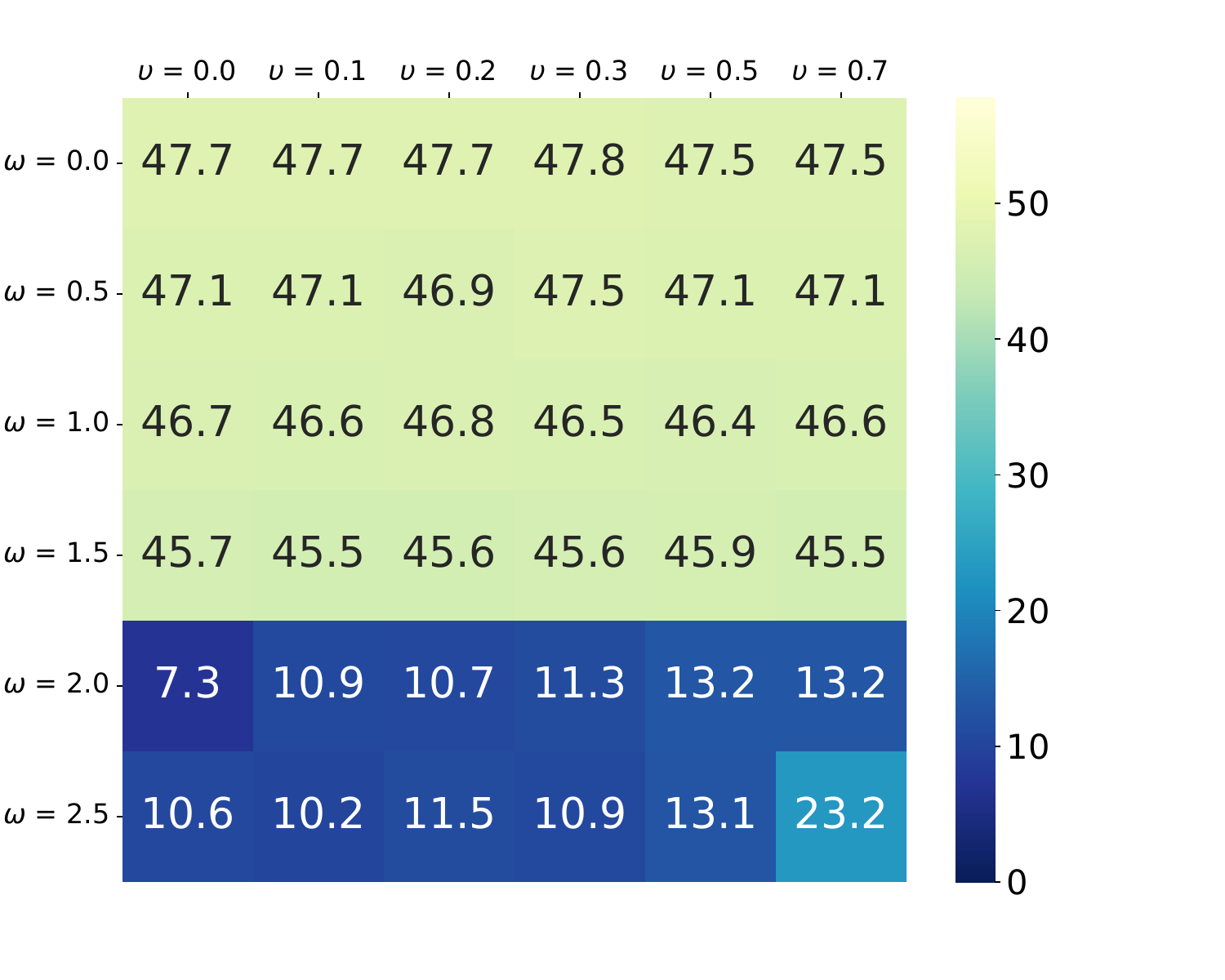}}
    \hfill
    \subcaptionbox{}{\includegraphics[width = 0.24\textwidth]{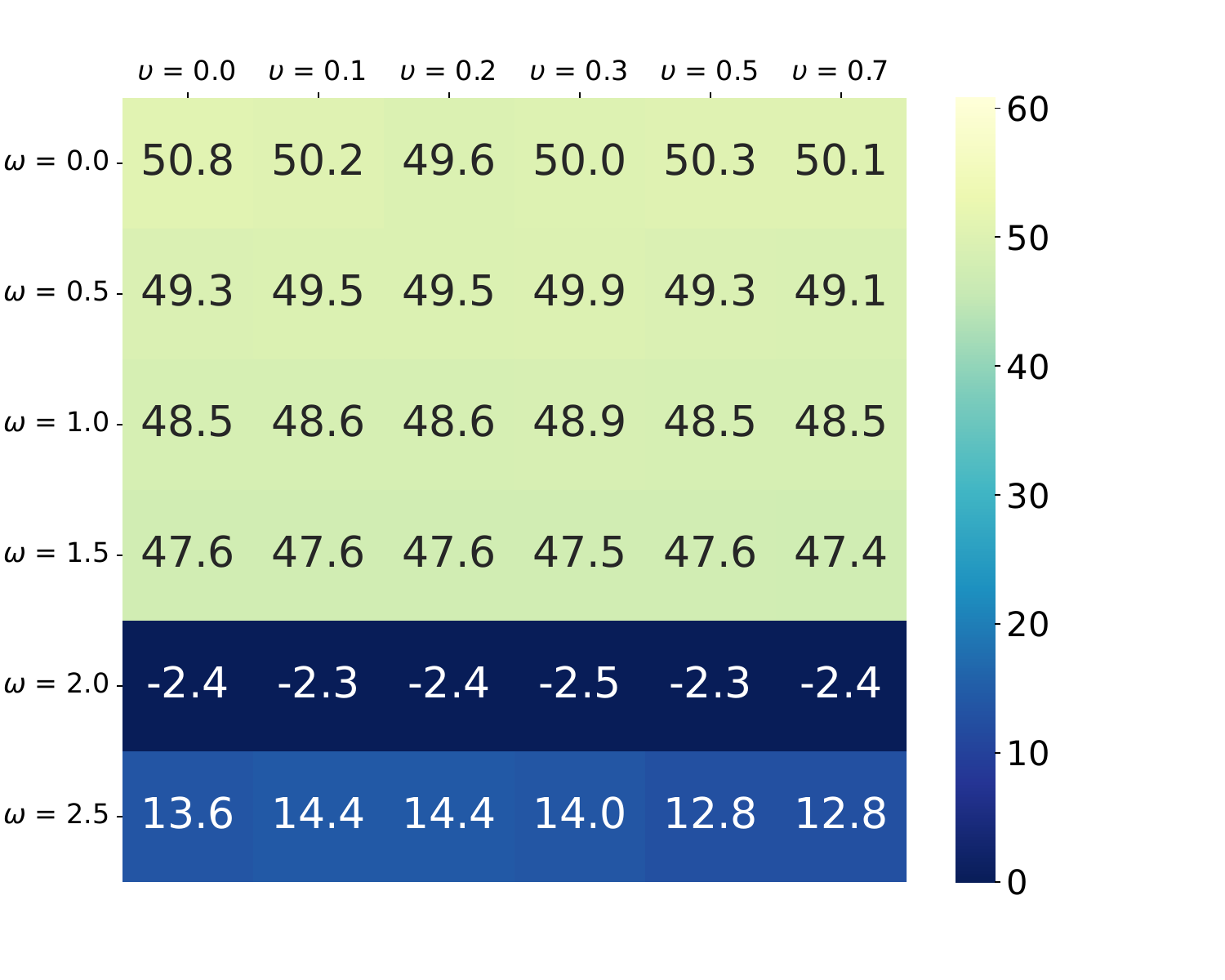}}
    \hfill
    \subcaptionbox{}{\includegraphics[width = 0.24\textwidth]{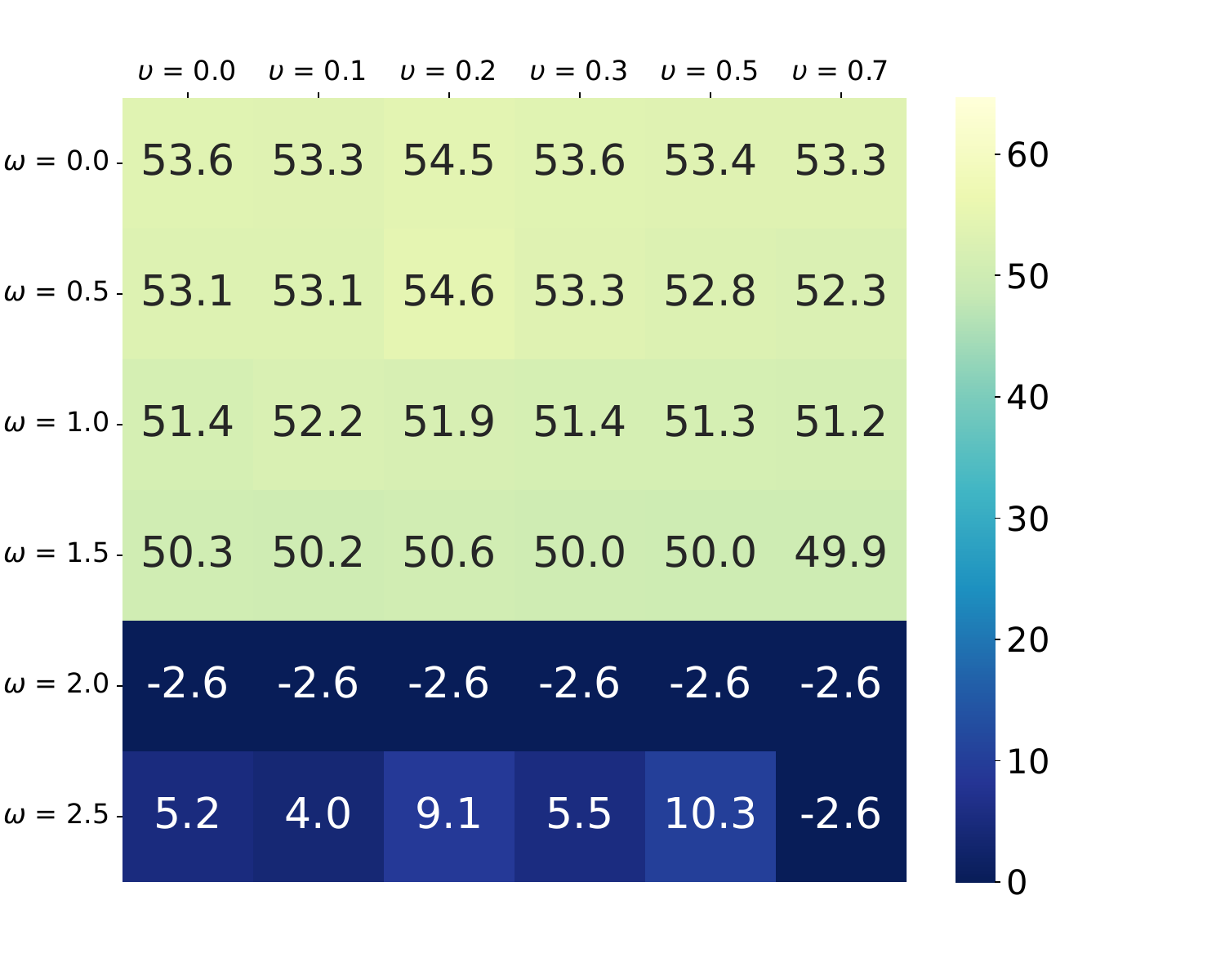}}
    \hfill
    \subcaptionbox{}{\includegraphics[width = 0.24\textwidth]{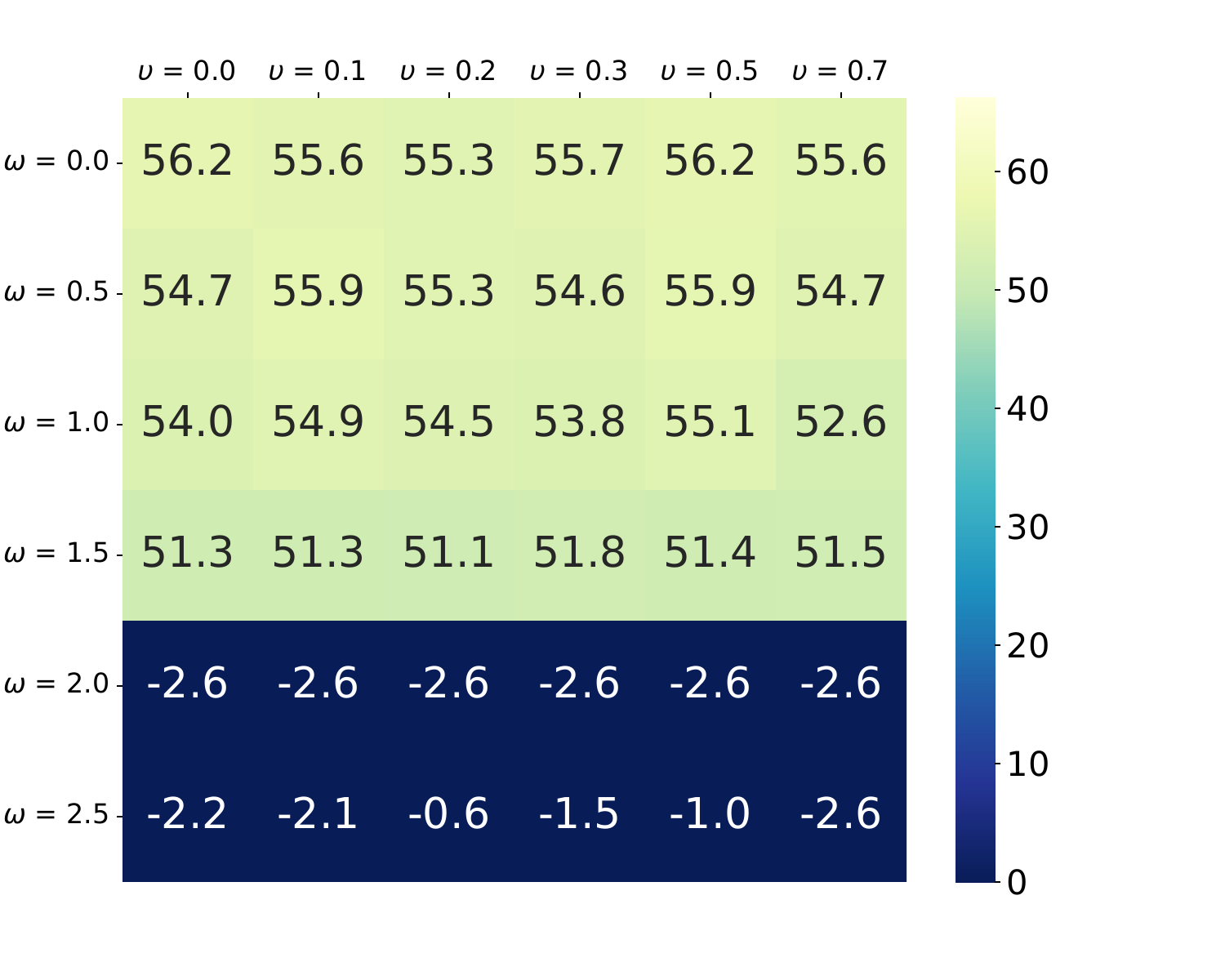}}
    \hfill
    \subcaptionbox{}{\includegraphics[width = 0.24\textwidth]{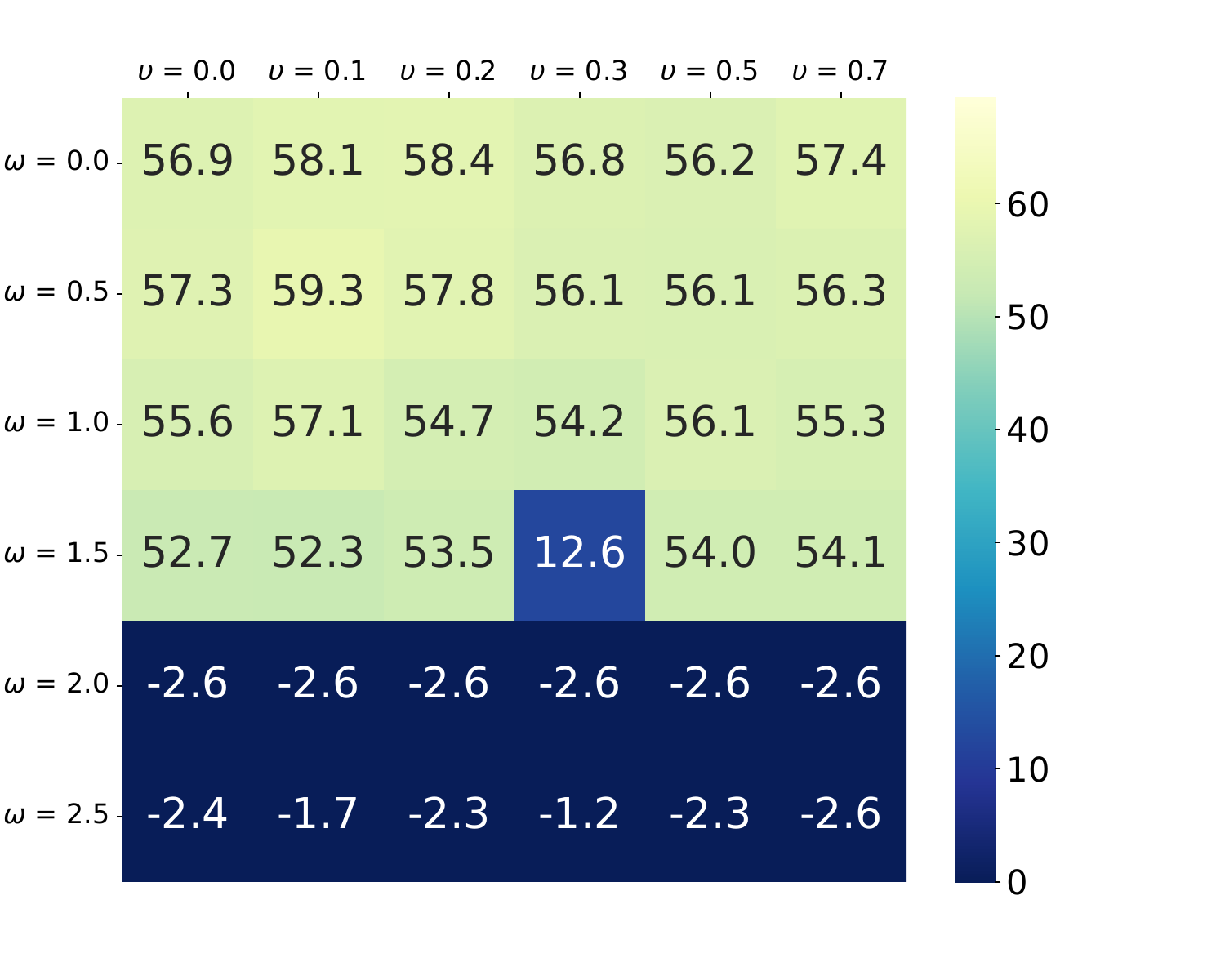}}
    \hfill
    \subcaptionbox{}{\includegraphics[width = 0.24\textwidth]{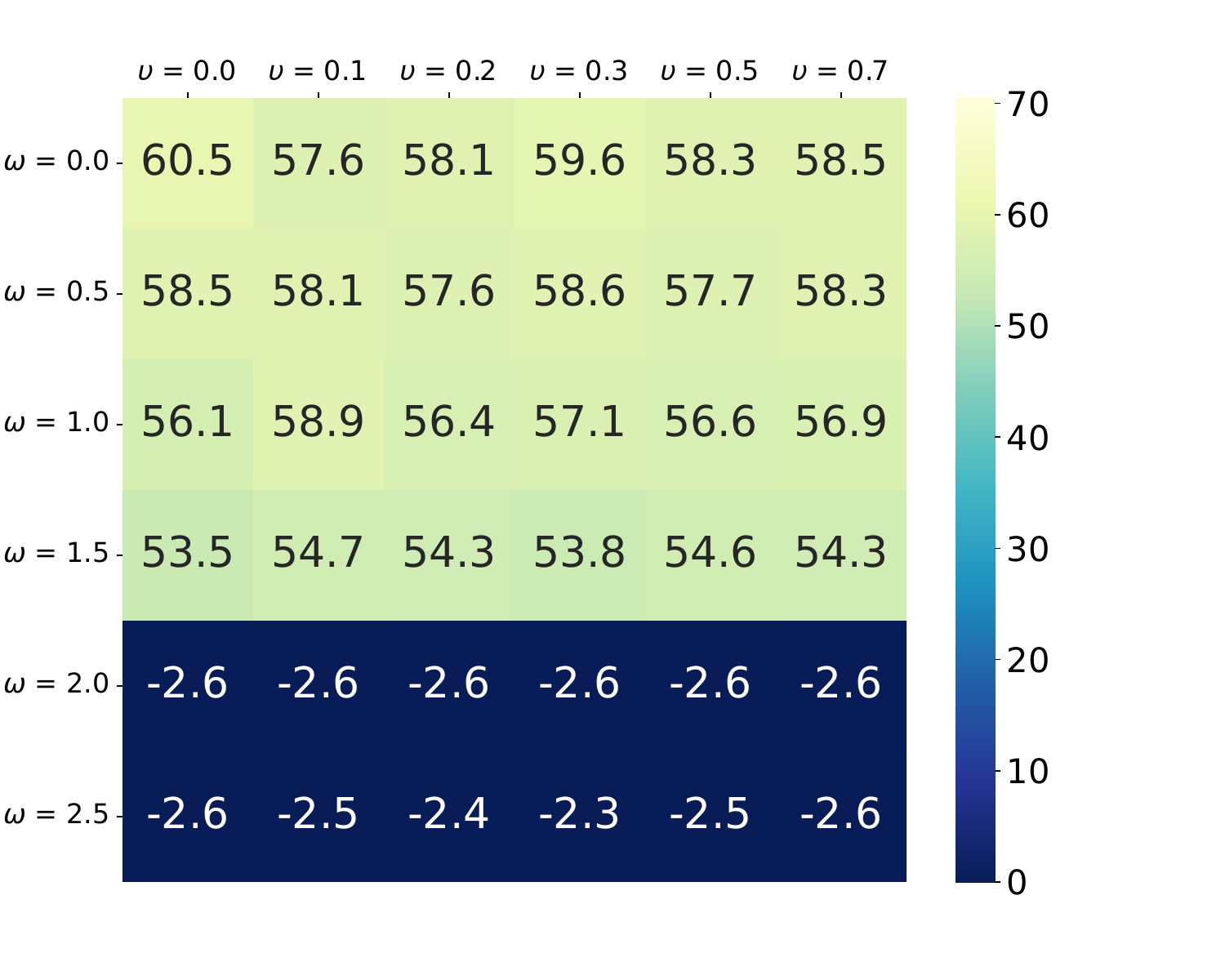}}
    \hfill
    \subcaptionbox{}{\includegraphics[width = 0.24\textwidth]{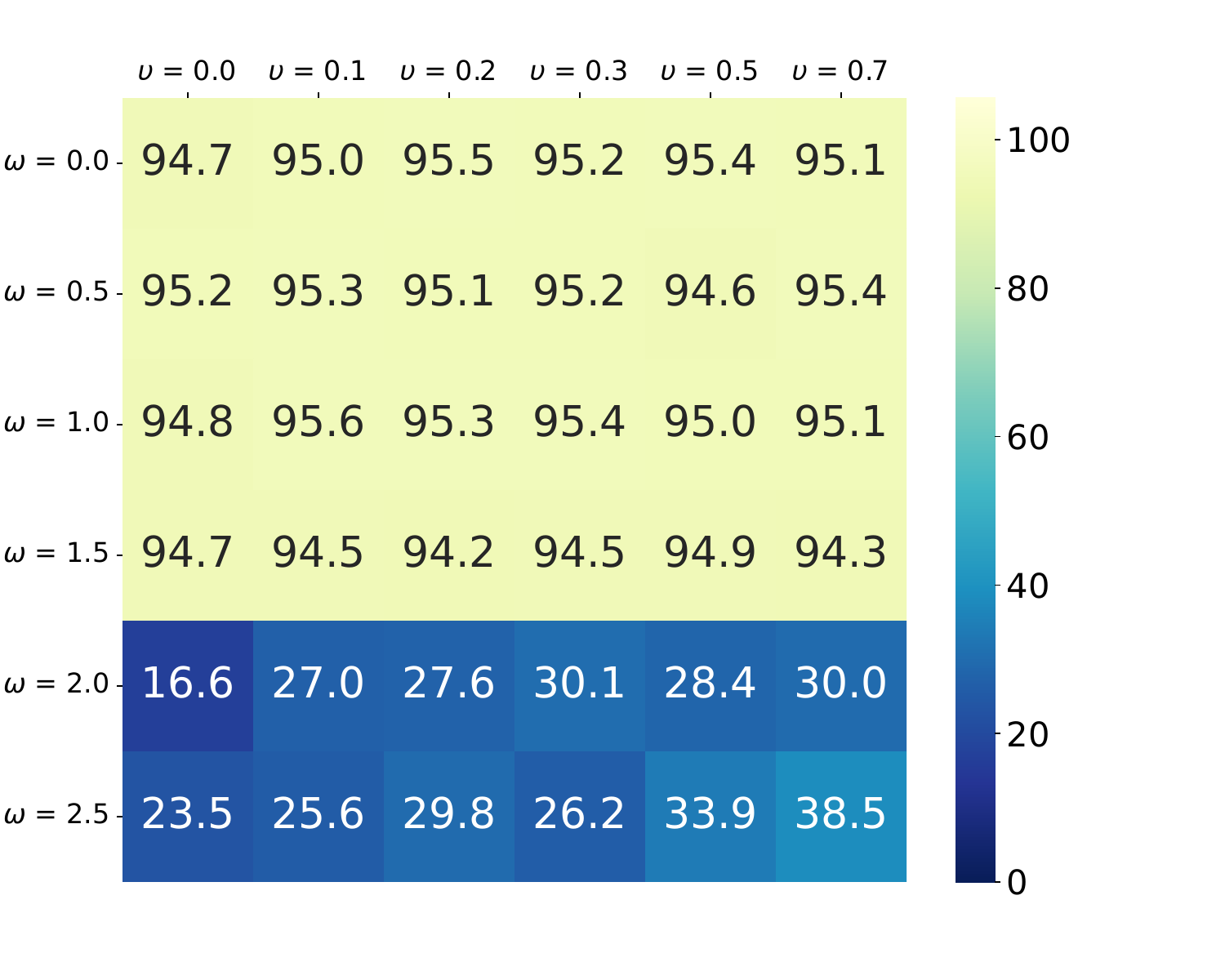}}
    \hfill
    \subcaptionbox{}{\includegraphics[width = 0.24\textwidth]{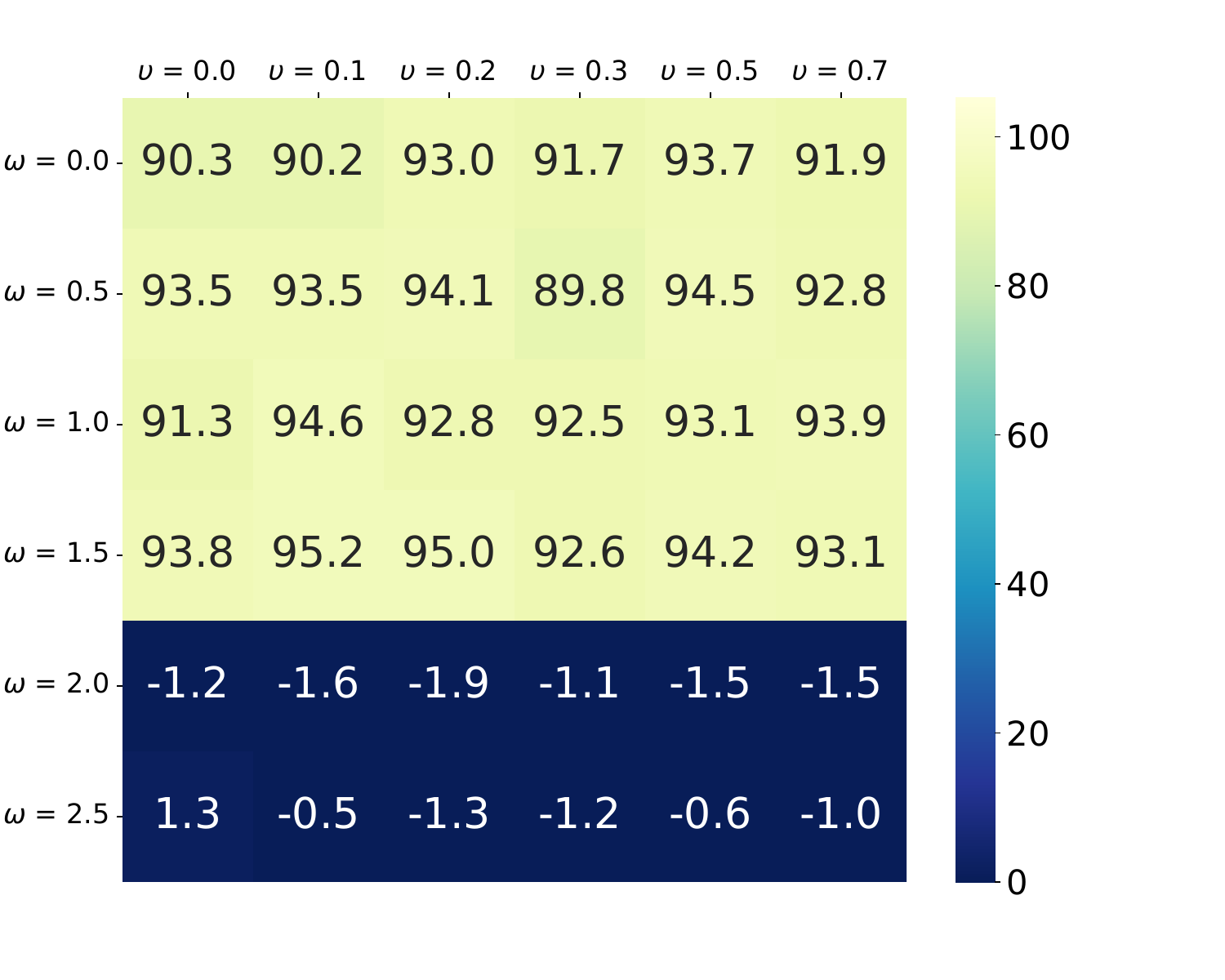}}
    \hfill
    \subcaptionbox{}{\includegraphics[width = 0.24\textwidth]{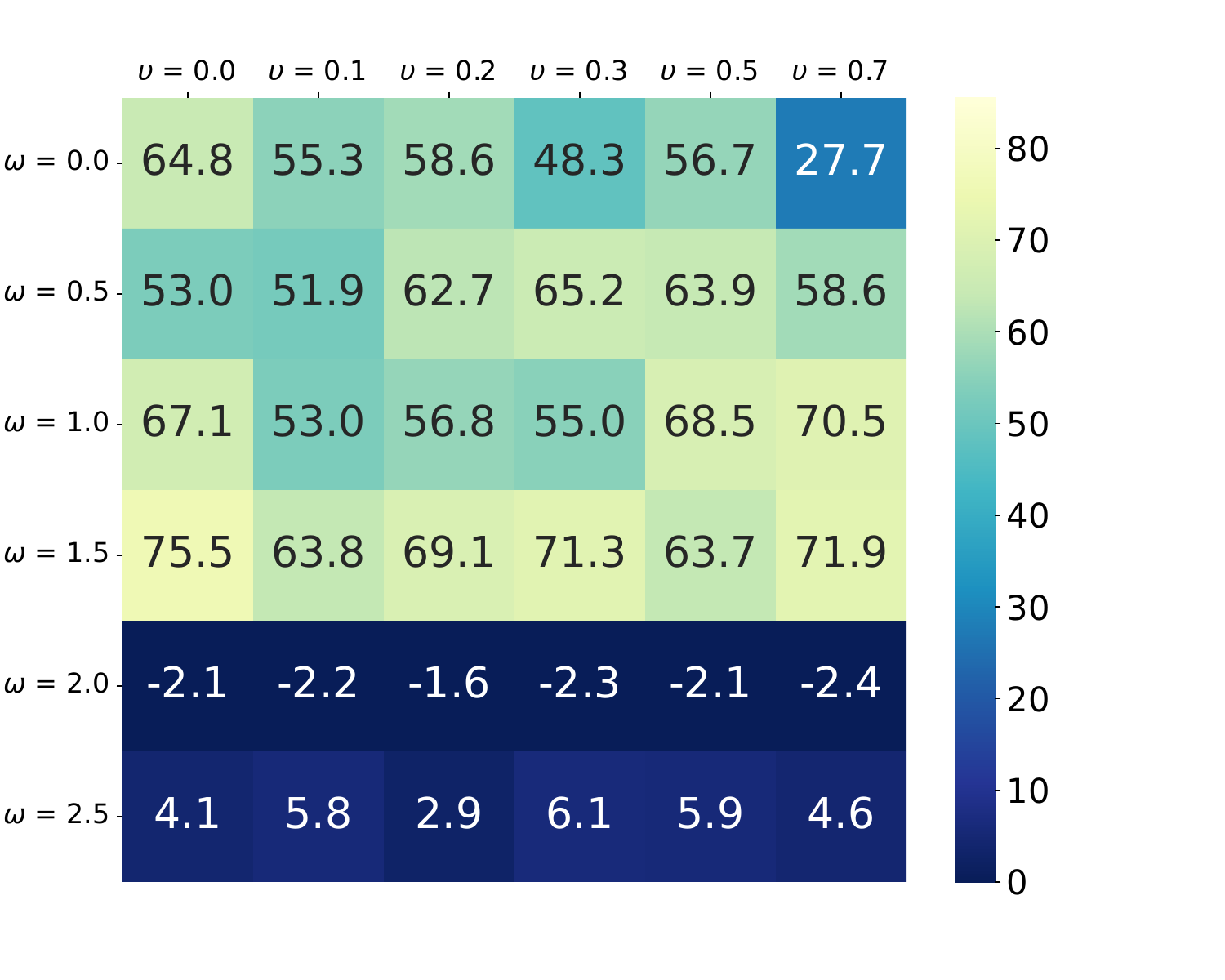}}
    \hfill
    \subcaptionbox{}{\includegraphics[width = 0.24\textwidth]{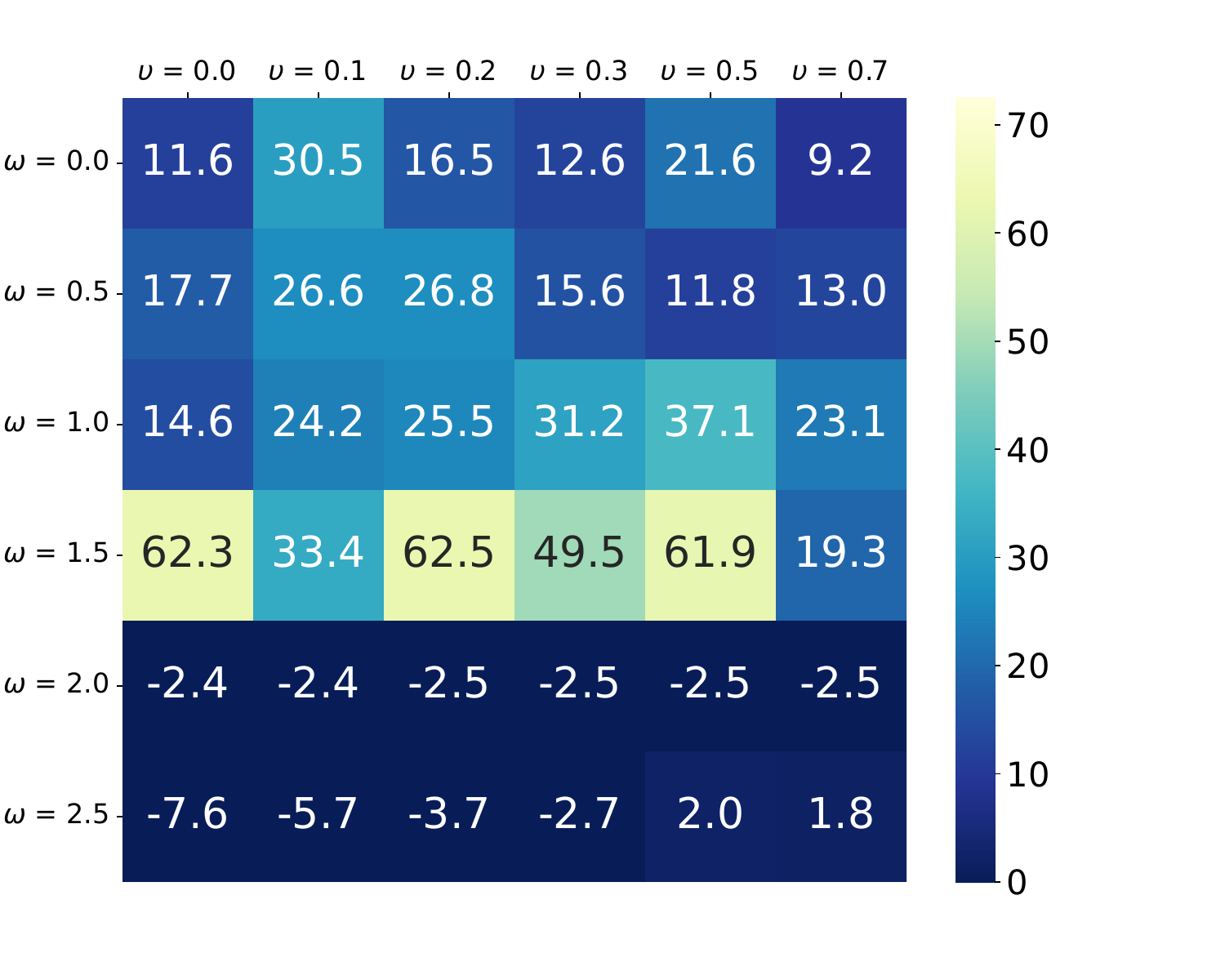}}
    \hfill
    \subcaptionbox{}{\includegraphics[width = 0.24\textwidth]{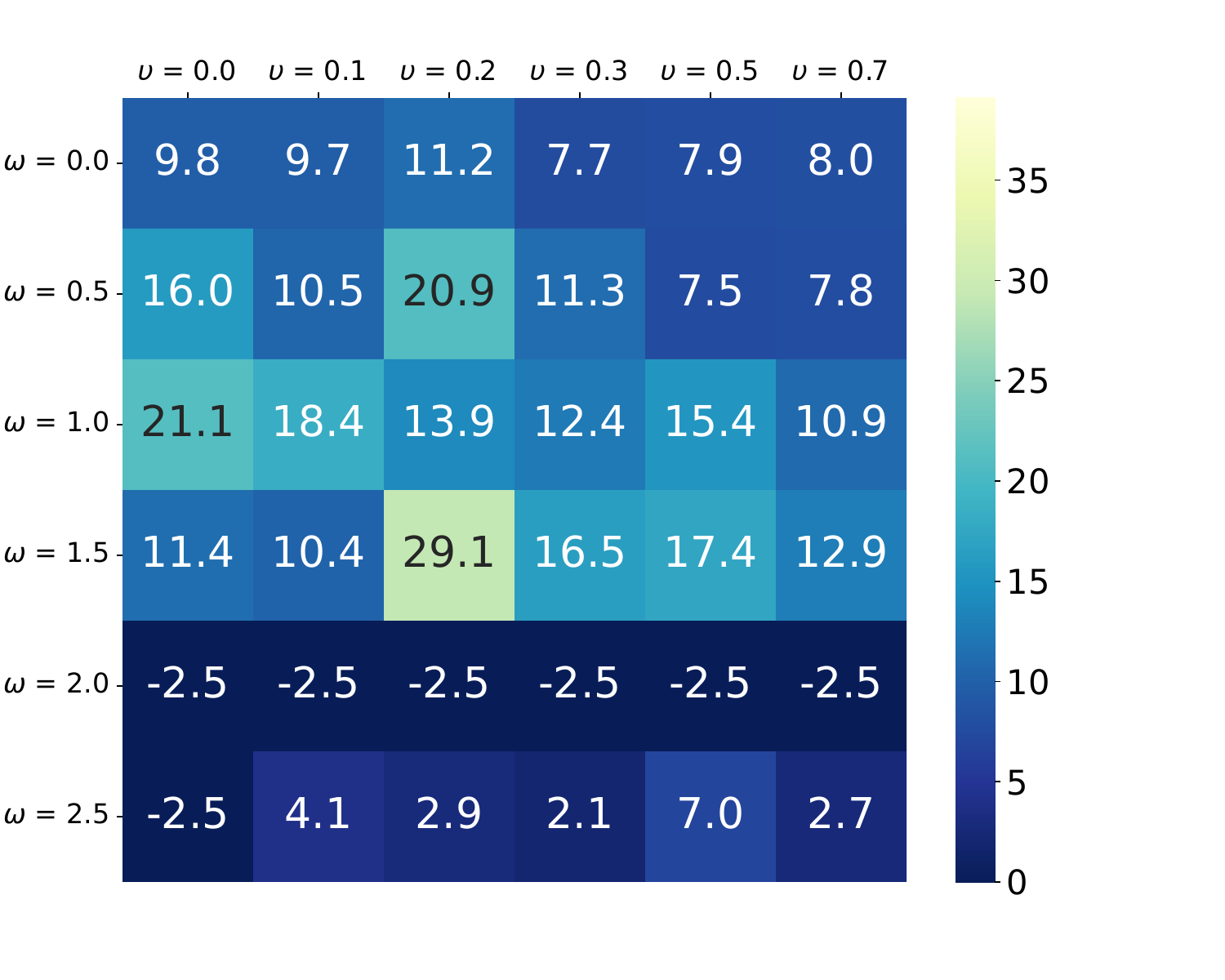}}
    \hfill
    \subcaptionbox{}{\includegraphics[width = 0.24\textwidth]{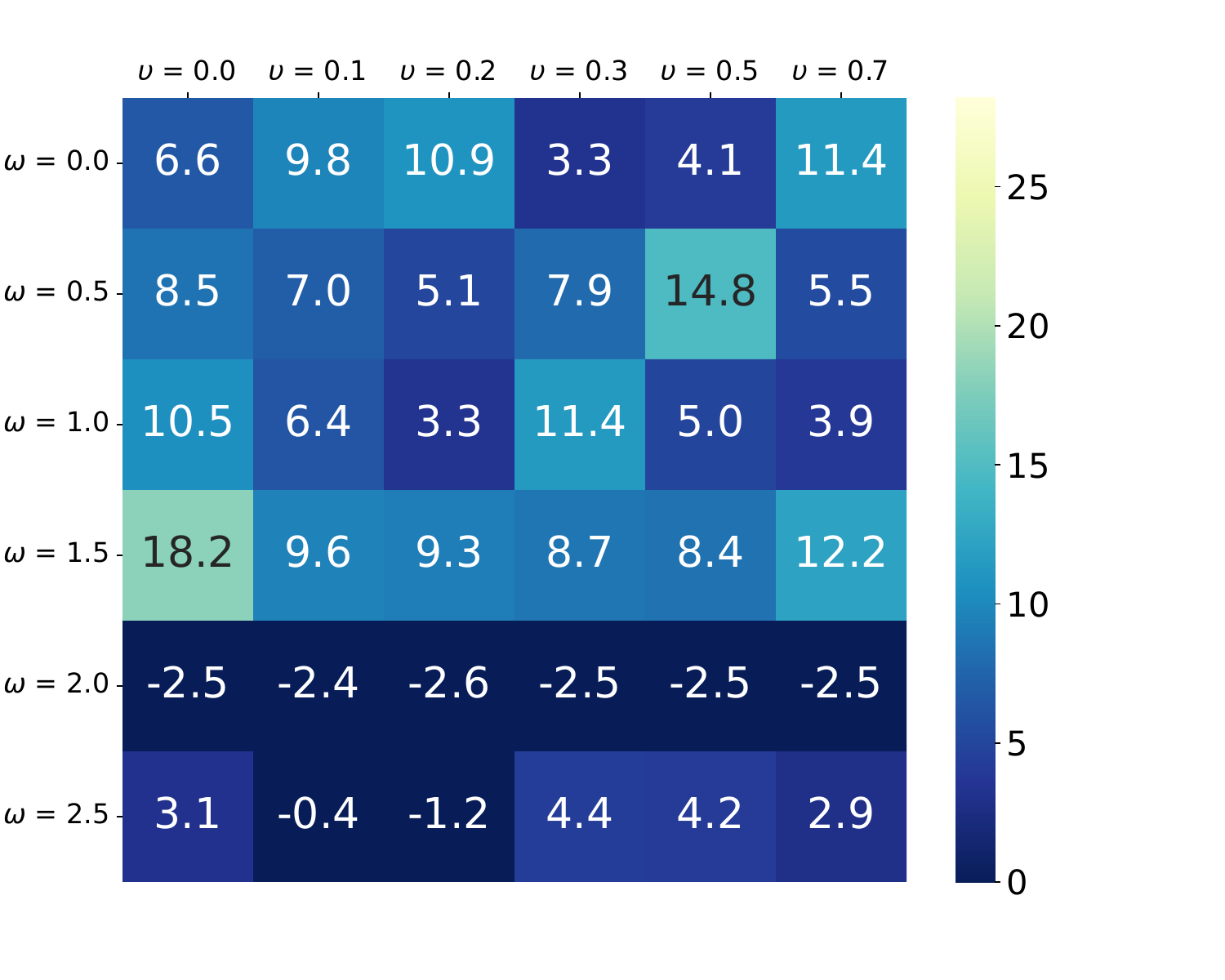}}
  \caption{Ablation study on $\upsilon$, $\omega$, and $\alpha$ in MCRQ conducted on halfcheetah-medium and halfcheetah-expert. Subfigures (a)-(f) show the results on halfcheetah-medium with $\alpha$ values of \{2.5, 5.0, 10.0, 15.0, 20.0, 25.0\}, while (g)-(l) present the corresponding results on halfcheetah-expert using the same set of $\alpha$ values. All values in the figure represent the mean performance.}
  \label{fig:D4RL_ablation}
\end{figure*}

\subsection{Comparative Visualization of OOD Actions}
\label{subsection:comparative-visualization-of-ood-actions}

To evaluate the ability to constrain OOD actions, we compare the action distributions generated by the learned policies of different algorithms with the action distribution of the offline dataset. Specifically, we train BCQ, TD3\_BC, and MCRQ, and collect 51,200 samples by running each trained policy separately. For comparison, we also randomly sample 51,200 actions from the offline dataset.
Fig.~\ref{fig:D4RL_OOD_distribution} (a)-(c) show the action distributions in halfcheetah-random with t-SNE, while Fig.~\ref{fig:D4RL_OOD_distribution} (d)-(f) depict the distributions in halfcheetah-medium-replay. The corresponding analysis and discussion are provided below.

BCQ generates almost entirely OOD actions, indicating poor adaptation to halfcheetah-random. TD3\_BC also produces predominantly OOD actions, with samples clustering unilaterally relative to those in the dataset. In contrast, MCRQ more closely aligns its action distribution with the dataset, though slight deviations from the distributional centroid remain. This suggests that while MCRQ outperforms BCQ and TD3\_BC in approximating the dataset distribution, it does not fully replicate it. Notably, since the random dataset is generated by stochastic policies containing both low and high quality actions with inherent uncertainty, strict adherence to its distribution may lead to suboptimal performance. On halfcheetah-medium-replay, all three algorithms align well with the offline dataset.

\begin{figure*}
  \centering
    \subcaptionbox{}{\includegraphics[width = 0.32\textwidth]{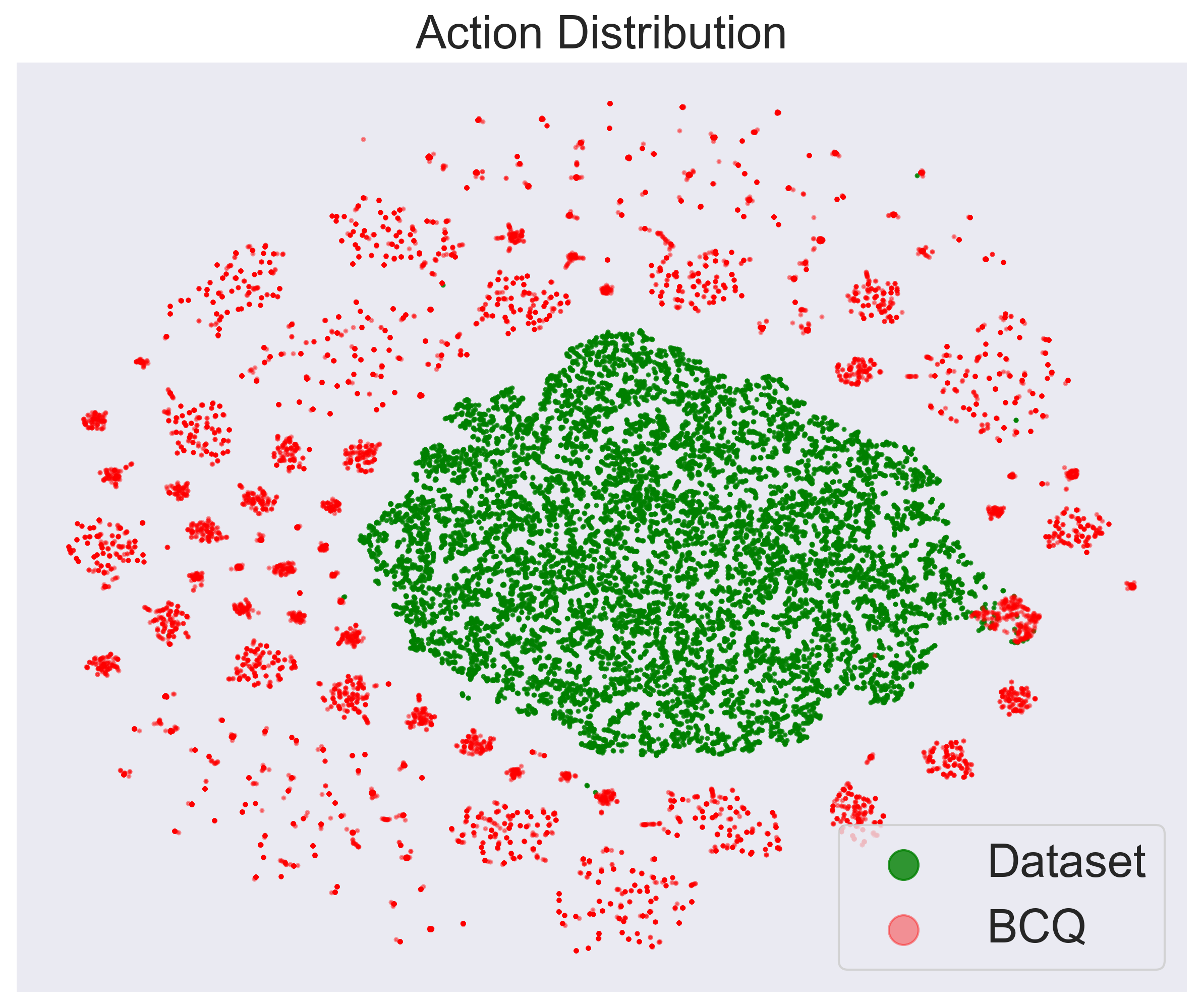}}
    \hfill
    \subcaptionbox{}{\includegraphics[width = 0.32\textwidth]{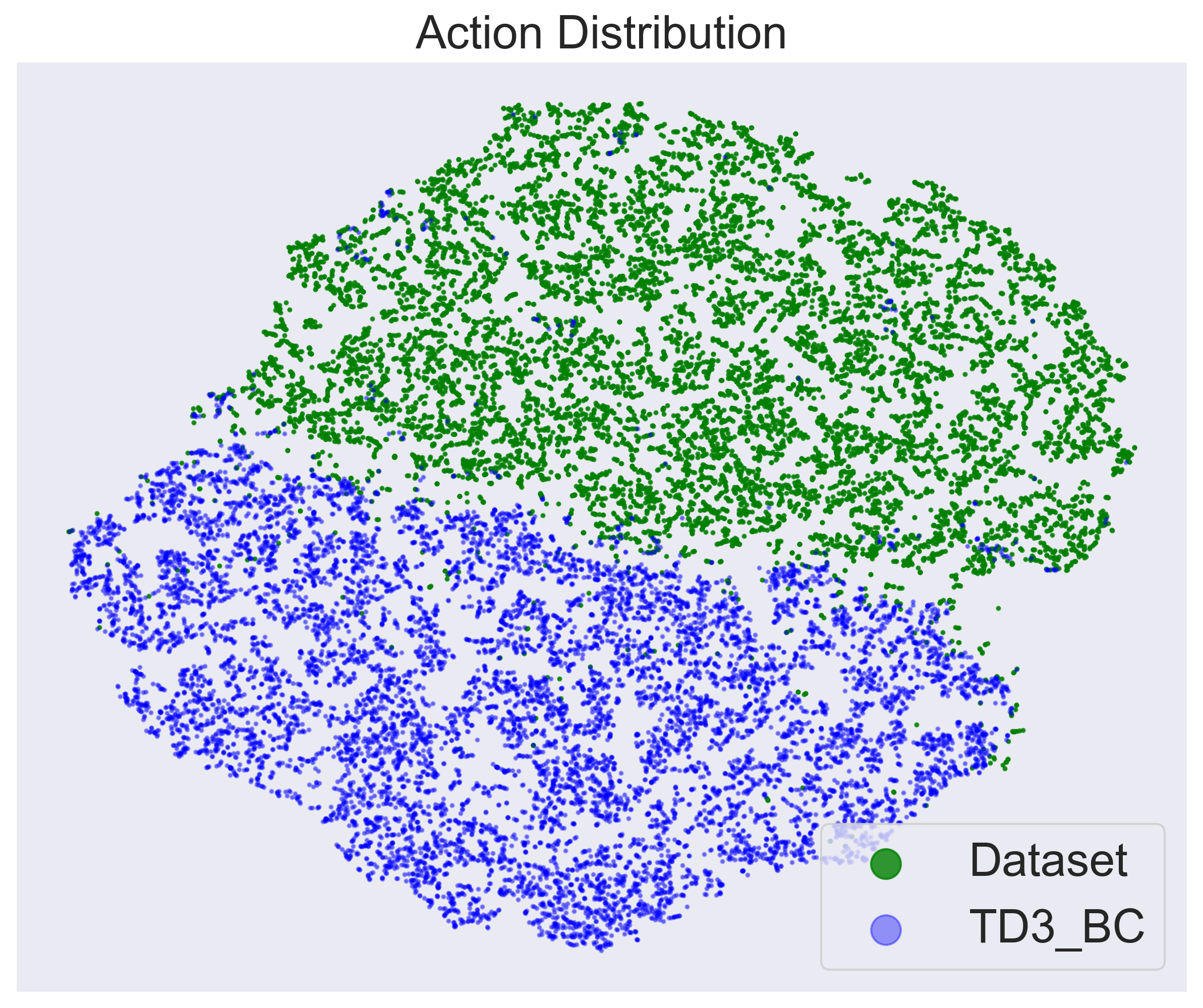}}
    \hfill
    \subcaptionbox{}{\includegraphics[width = 0.32\textwidth]{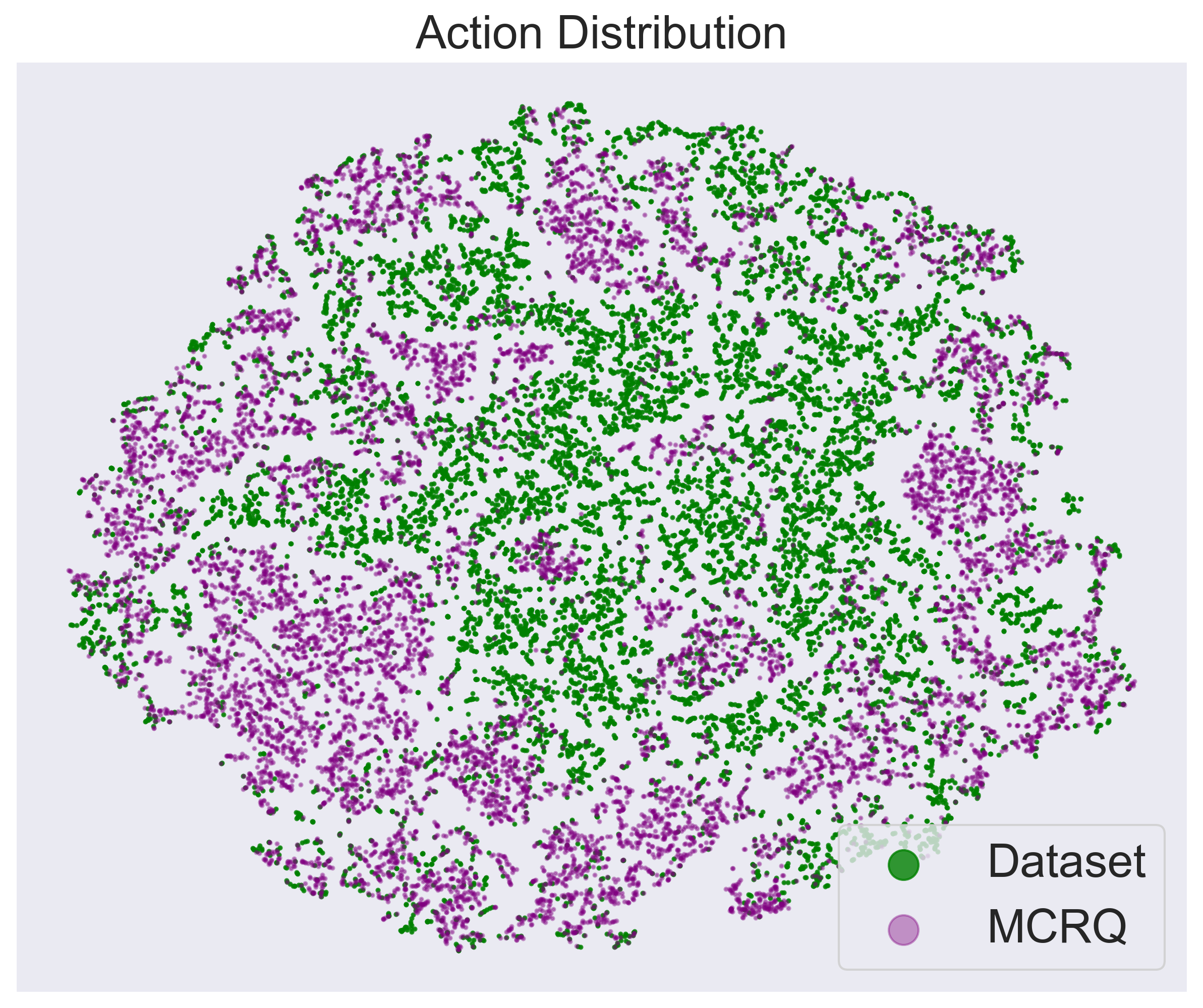}}
    \hfill
    \subcaptionbox{}{\includegraphics[width = 0.32\textwidth]{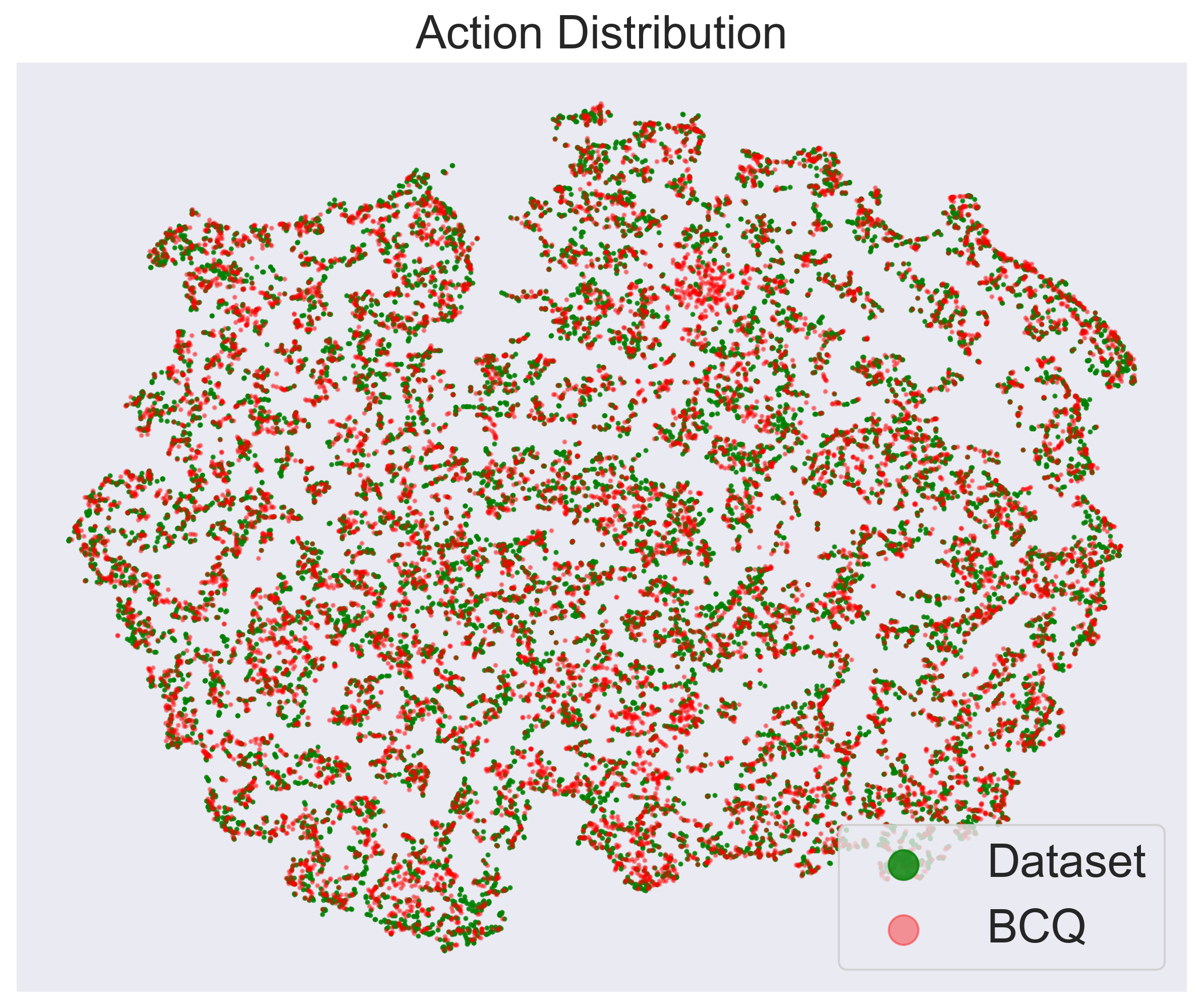}}
    \hfill
    \subcaptionbox{}{\includegraphics[width = 0.32\textwidth]{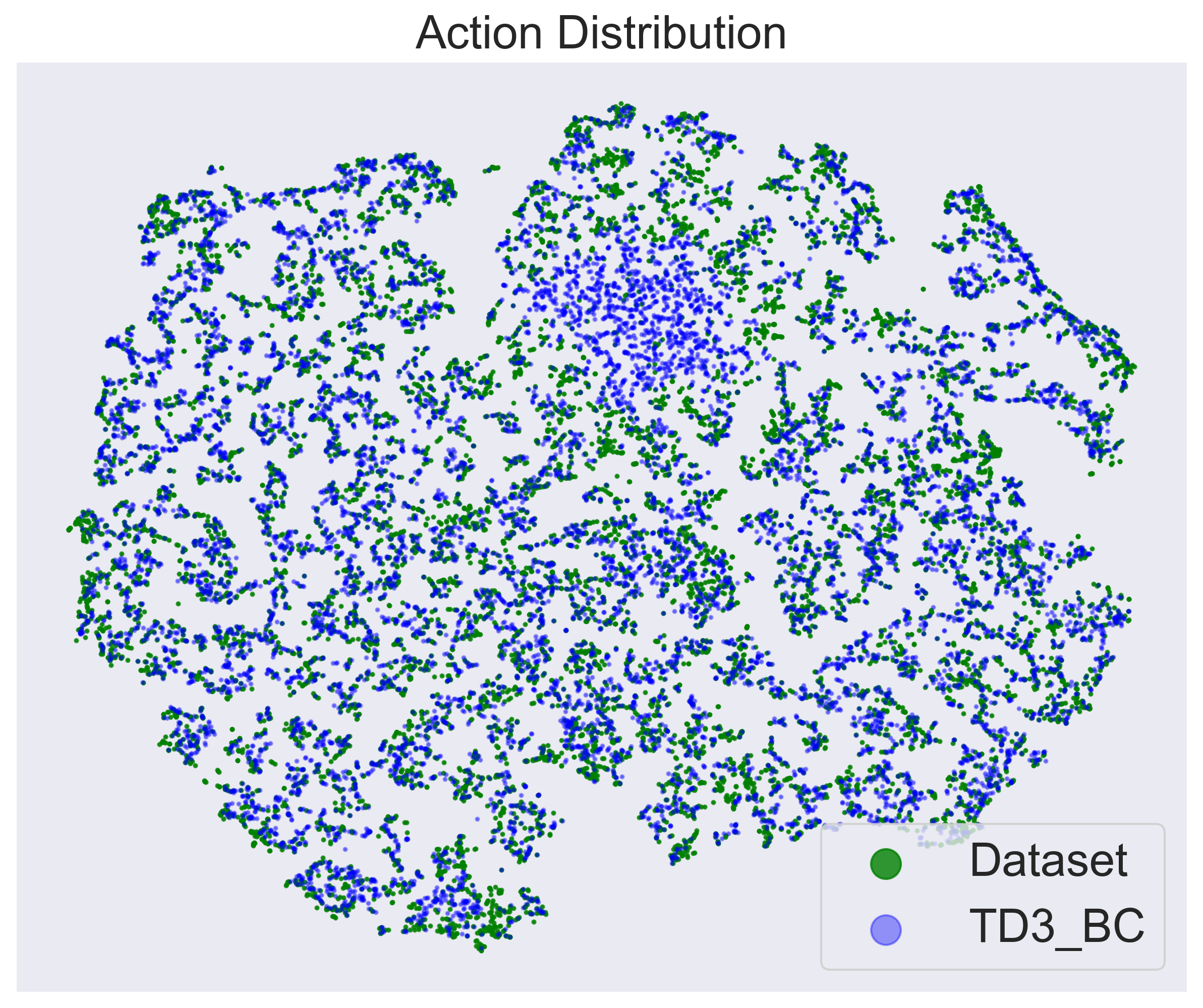}}
    \hfill
    \subcaptionbox{}{\includegraphics[width = 0.32\textwidth]{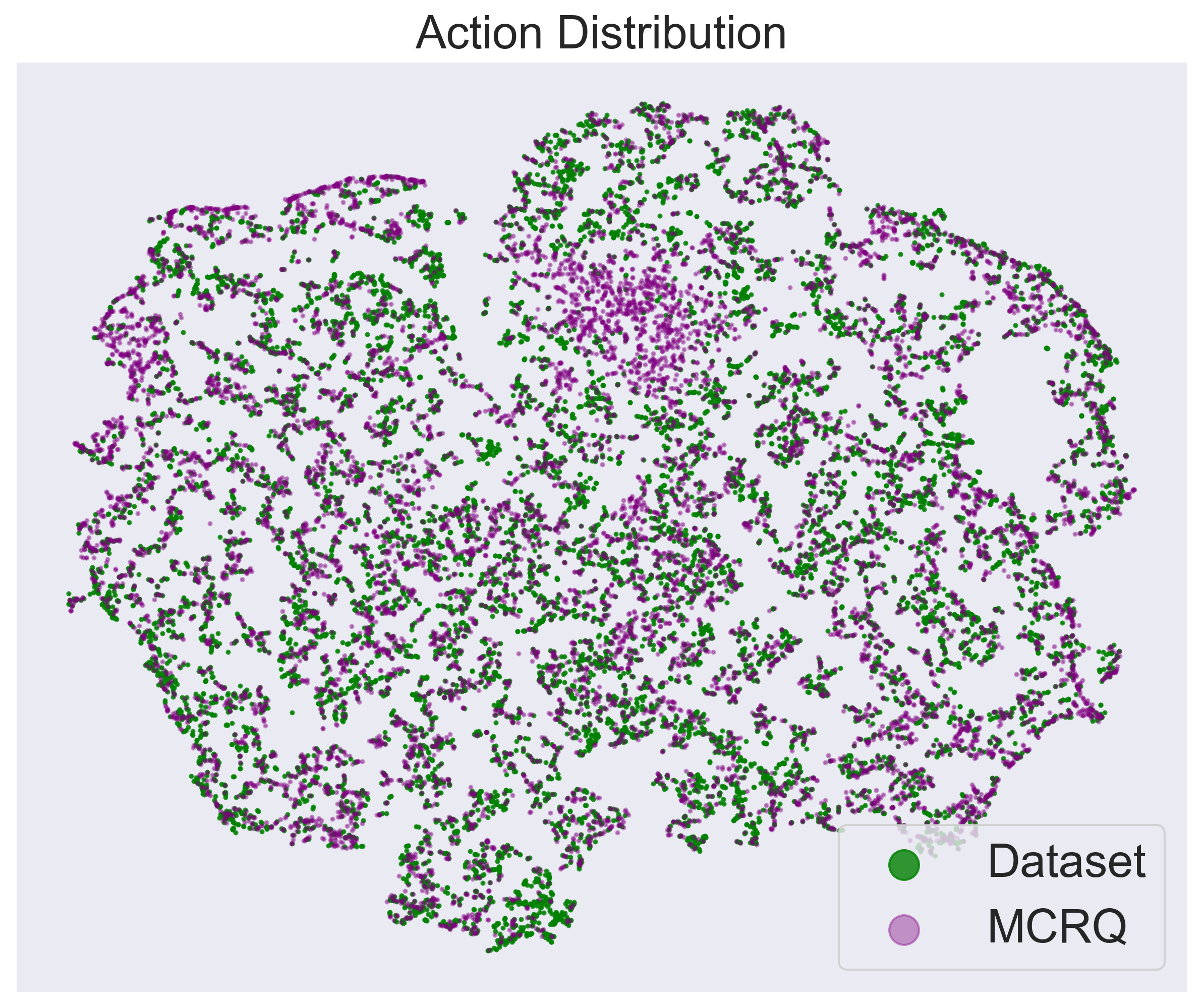}}
  \caption{Policy action visualization. (a), (b), and (c) show the action outputs of BCQ, TD3\_BC, and MCRQ compared with the actions within the halfcheetah-random. (d), (e), and (f) show the action outputs of the same algorithms compared with the actions within the halfcheetah-medium-replay. The points represent the action outputs projected onto a 2-D plane using the t-SNE algorithm.}
  \label{fig:D4RL_OOD_distribution}
\end{figure*}

\subsection{Comparison with SOTA Algorithms}
\label{subsection:comparison-with-sota-algorithms}

We further compare MCRQ with recent SOTA algorithms to demonstrate its superiority. The selected algorithms include  $\mathnormal{f}$-DVL \cite{sikchi2024dualf-DVL}, CGDT \cite{wang2024critic}, DD \cite{ajay2023isDD}, DStitch \cite{li2024diffstitch}, ODC \cite{kim2024decisionODC}, CSVE \cite{chen2023CSVE}, ACT \cite{gao2024act}, MISA \cite{xiao2023MISA}, O-DICE \cite{mao2024odiceO-DICE}, MOAC \cite{huangMildPolicyEvaluation2024}, and ORL-RC \cite{huangEfficientOfflineReinforcement2024}. 
These algorithms incorporate various design features, including critic regularization, imitation learning, decision transformers, and diffusion models.





We selected the medium, medium-replay, and medium-expert datasets to evaluate performance across various tasks. The results for each algorithm are sourced from their original papers. The same row-wise normalization used in Table~\ref{tabular:D4RL_comparison} is also applied in Table~\ref{tabular:D4RL_sota}. As shown in Table~\ref{tabular:D4RL_sota}, MCRQ does not achieve the best performance on every dataset. However, Fig.~\ref{fig:D4RL_sota_normalized} shows that MCRQ achieves the highest mean performance among all compared SOTA algorithms, using the same visualization format as 
Fig.~\ref{fig:D4RL_comparison_normalized}. In contrast, $\mathnormal{f}$-DVL exhibits the smallest variance but the lowest mean. It is reasonable to observe that different algorithms exhibit varying strengths and weaknesses across environments; thus, the choice of algorithm should be tailored to the specific environment.

\begin{table*}[t]
  \caption{Normalized average scores for for MCRQ and SOTA algorithms.} 
  \resizebox{\textwidth}{!}{
  \begin{tabular}{lcccccccccccc}
  \hline
  Algorithm & $\mathnormal{f}$-DVL  & CGDT & DD & DStitch & ODC & CSVE & ACT & MISA  & O-DICE & MOAC & ORL-RC & MCRQ \\
  \hline
  halfcheetah-m & 0.26 & 0.00 & 0.35 & 0.42 & 0.03 & 0.31 & 0.35 & 0.25 & 0.25 & 0.65 & 0.70 & 1.00 \\
  hopper-m & 0.10 & 1.00 & 0.52 & 0.00 & 0.91 & 0.99 & 0.20 & 0.19 & 0.70 & 0.63 & 0.60 & 0.92 \\
  walker2d-m & 0.22 & 0.00 & 0.36 & 0.46 & 0.15 & 0.44 & 0.19 & 0.53 & 0.62 & 0.81 & 1.00 & 0.68 \\
  \hline
  halfcheetah-m-r & 0.36 & 0.07 & 0.00 & 0.36 & 0.20 & 1.00 & 0.24 & 0.41 & 0.31 & 0.72 & 0.73 & 0.84 \\
  hopper-m-r & 0.90 & 0.68 & 1.00 & 0.00 & 0.71 & 0.59 & 0.92 & 0.93 & 0.95 & 0.90 & 0.88 & 0.35 \\
  walker2d-m-r & 0.37 & 0.65 & 0.56 & 0.99 & 0.75 & 0.65 & 0.00 & 0.89 & 0.81 & 1.00 & 0.97 & 0.96 \\
  \hline
  halfcheetah-m-e & 0.46 & 0.73 & 0.40 & 0.99 & 0.86 & 0.67 & 1.00 & 0.85 & 0.68 & 0.02 & 0.00 & 0.77 \\
  hopper-m-e & 0.00 & 0.73 & 0.95 & 0.71 & 0.94 & 0.04 & 0.93 & 0.85 & 0.90 & 0.45 & 0.43 & 0.26 \\
  walker2d-m-e & 0.18 & 0.11 & 0.00 & 0.31 & 0.02 & 0.04 & 1.00 & 0.13 & 0.44 & 0.58 & 0.67 & 0.49 \\
  \hline
  \end{tabular}
  }
  \label{tabular:D4RL_sota}
\end{table*}

\begin{figure}
  \centering
  \includegraphics[width = 0.48\textwidth]{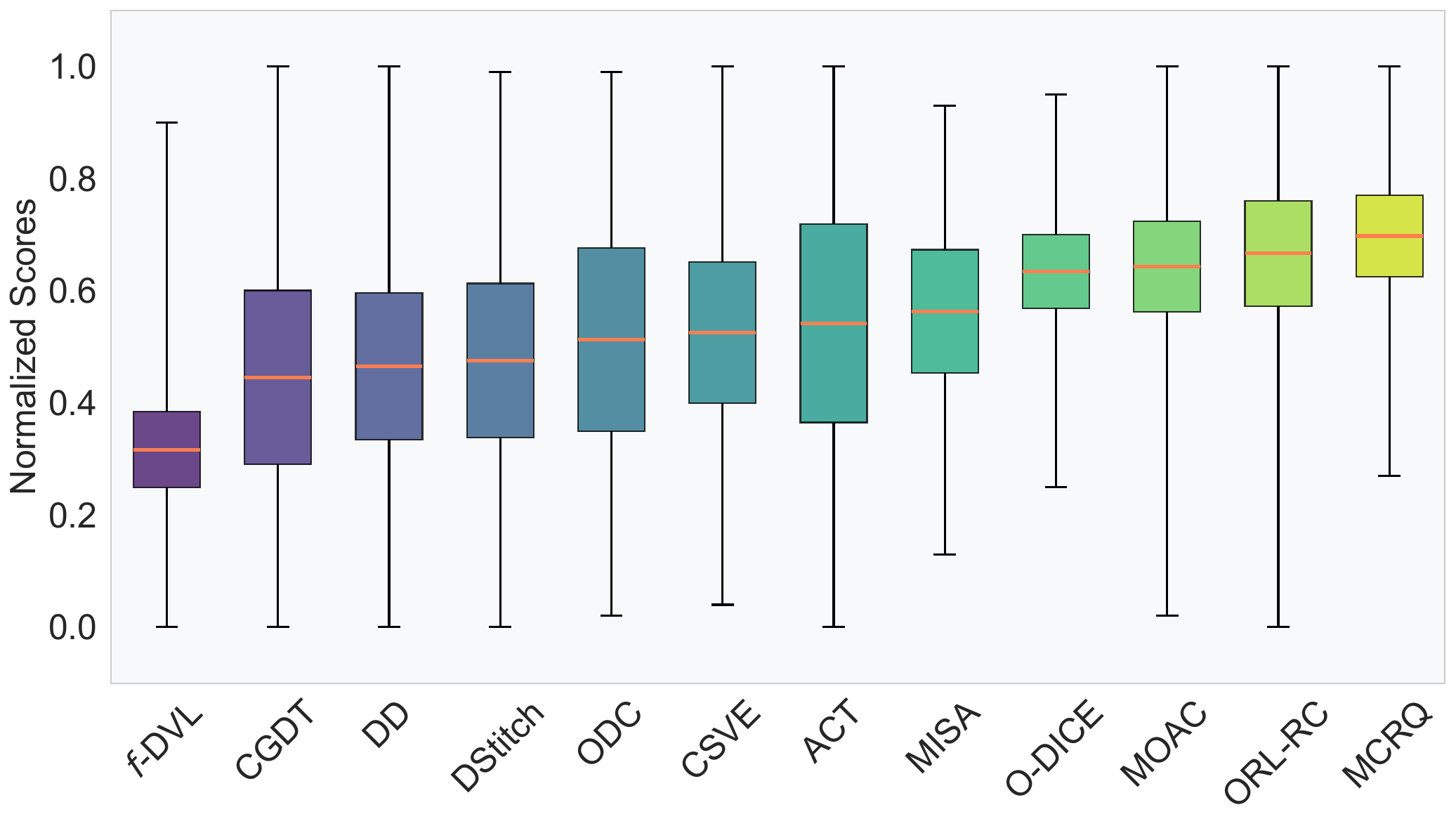}
  \hfill
  \caption{Box plot with minimum, maximum, mean, and variance of 9 normalized average scores for MCRQ and SOTA algorithms.}
  \label{fig:D4RL_sota_normalized}
\end{figure}

\subsection{Computational Efficiency}
\label{subsection:computational-efficiency-comparison}

Table~\ref{tabular:D4RL_Traning_time_scores} presents the training times of BCQ, TD3\_BC, CQL, IQL, and MCRQ on halfcheetah datasets. All experiments are conducted on a machine equipped with a single AMD EPYC 9004 series processor, 384 GB DDR5 RAM, and two NVIDIA RTX 4090 GPUs (each with 24 GB VRAM), running Ubuntu 20.04. MCRQ has a slightly longer training time than TD3\_BC. However, compared to BCQ, CQL, and IQL, it significantly improves training efficiency by minimizing computational overhead while maintaining strong performance.

\begin{table}[!t]
  \caption{Comparison of training time (single seed run).} 
  \centering
  \begin{tabular}{llllll}
    \toprule
    Algorithm & BCQ & TD3\_BC & CQL & IQL & MCRQ  \\
    \midrule
    Time (hours) & 21.09 & 12.85 & 34.21 & 19.31 & 14.05 \\
    \bottomrule
  \end{tabular}
  \label{tabular:D4RL_Traning_time_scores}
\end{table}

\section{Conclusion} \label{section:conclusion}

In this paper, we proposed MCRE to address distribution shift and OOD actions in offline RL. Our theoretical analysis shows that MCRE converges in both the presence and absence of sampling error. Moreover, the estimated Q-function, state value function, and the resulting suboptimal policy are all shown to be controllable under both conditions. Building on MCRE, we introduced MCRQ, an effective offline RL algorithm, and provided detailed implementation and experimental results on the D4RL benchmark. The experiments demonstrate that MCRQ outperforms both baseline and SOTA offline RL algorithms. Future work will explore extending the MCRE framework to other RL paradigms.

\bibliographystyle{IEEEtran}
\bibliography{IEEEabrv,refs}

\begin{thebibliography}{10}
\providecommand{\url}[1]{#1}
\csname url@samestyle\endcsname
\providecommand{\newblock}{\relax}
\providecommand{\bibinfo}[2]{#2}
\providecommand{\BIBentrySTDinterwordspacing}{\spaceskip=0pt\relax}
\providecommand{\BIBentryALTinterwordstretchfactor}{4}
\providecommand{\BIBentryALTinterwordspacing}{\spaceskip=\fontdimen2\font plus
\BIBentryALTinterwordstretchfactor\fontdimen3\font minus \fontdimen4\font\relax}
\providecommand{\BIBforeignlanguage}[2]{{%
\expandafter\ifx\csname l@#1\endcsname\relax
\typeout{** WARNING: IEEEtran.bst: No hyphenation pattern has been}%
\typeout{** loaded for the language `#1'. Using the pattern for}%
\typeout{** the default language instead.}%
\else
\language=\csname l@#1\endcsname
\fi
#2}}
\providecommand{\BIBdecl}{\relax}
\BIBdecl

\bibitem{lange2012batch}
S.~Lange, T.~Gabel, and M.~Riedmiller, ``Batch reinforcement learning,'' in \emph{Reinforcement Learning: State-of-the-Art}.\hskip 1em plus 0.5em minus 0.4em\relax Berlin, Germany: Springer, 2012, pp. 45--73.

\bibitem{huOptimizingReinforcementLearning2025}
J.~Hu, X.~Li, X.~Li, Z.~Hou, and Z.~Zhang, ``Optimizing reinforcement learning for large action spaces via generative models: {{Battery}} pattern selection,'' \emph{Pattern Recognit}, vol. 160, p. 111194, Apr. 2025.

\bibitem{liu2024diffskill}
S.~Liu, Y.~Liu, L.~Hu, Z.~Zhou, Y.~Xie, Z.~Zhao, W.~Li, and Z.~Gan, ``Diffskill: Improving reinforcement learning through diffusion-based skill denoiser for robotic manipulation,'' \emph{Knowl Based Syst}, vol. 300, p. 112190, Sep. 2024.

\bibitem{zhangTextBasedInteractiveRecommendation2022}
R.~Zhang, T.~Yu, Y.~Shen, and H.~Jin, ``Text-based interactive recommendation via offline reinforcement learning,'' in \emph{Proc. AAAI Conf. Artif. Intell.}, vol.~36, no.~10, 2022, pp. 11\,694--11\,702.

\bibitem{rezaeifar2022offline}
S.~Rezaeifar, R.~Dadashi, N.~Vieillard, L.~Hussenot, O.~Bachem, O.~Pietquin, and M.~Geist, ``Offline reinforcement learning as anti-exploration,'' in \emph{Proc. AAAI Conf. Artif. Intell.}, vol.~36, no.~7, 2022, pp. 8106--8114.

\bibitem{sikchi2024dualf-DVL}
H.~Sikchi, Q.~Zheng, A.~Zhang, and S.~Niekum, ``Dual {RL}: Unification and new methods for reinforcement and imitation learning,'' in \emph{Proc. Int. Conf. Learn. Represent.}, 2024.

\bibitem{pessimismofflineRLzhang2024}
D.~Zhang, B.~Lyu, S.~Qiu, M.~Kolar, and T.~Zhang, ``Pessimism meets risk: Risk-sensitive offline reinforcement learning,'' in \emph{Proc. Int. Conf. Mach. Learn.}, vol. 235, 2024, pp. 59\,459--59\,489.

\bibitem{rao2025isfors}
J.~Rao, C.~Wang, M.~Liu, J.~Lei, and W.~Giernacki, ``Isfors-mix: Multi-agent reinforcement learning with importance-sampling-free off-policy learning and regularized-softmax mixing network,'' \emph{Knowl Based Syst}, vol. 309, p. 112881, Jan. 2025.

\bibitem{yangHundredsGuideMillions2024}
Q.~Yang, S.~Wang, Q.~Zhang, G.~Huang, and S.~Song, ``Hundreds guide millions: Adaptive offline reinforcement learning with expert guidance,'' \emph{IEEE Trans. Neural Netw. Learn. Syst.}, vol.~35, no.~11, pp. 16\,288--16\,300, Nov. 2024.

\bibitem{lyuMildlyConservativeQLearning2022}
J.~Lyu, X.~Ma, X.~Li, and Z.~Lu, ``Mildly conservative q-learning for offline reinforcement learning,'' in \emph{Proc. Adv. Neural Inf. Process. Syst.}, vol.~35, 2022, pp. 1711--1724.

\bibitem{huangEfficientOfflineReinforcement2024}
L.~Huang, B.~Dong, and W.~Zhang, ``Efficient offline reinforcement learning with relaxed conservatism,'' \emph{IEEE Trans. Pattern Anal. Mach. Intell.}, vol.~46, no.~8, pp. 5260--5272, Aug. 2024.

\bibitem{Fakoor2021ContinuousDoublyConstrained}
R.~Fakoor, J.~W. Mueller, K.~Asadi, P.~Chaudhari, and A.~J. Smola, ``Continuous doubly constrained batch reinforcement learning,'' in \emph{Proc. Adv. Neural Inf. Process. Syst.}, vol.~34, 2021, pp. 11\,260--11\,273.

\bibitem{brandfonbrener2021offline}
D.~Brandfonbrener, W.~Whitney, R.~Ranganath, and J.~Bruna, ``Offline rl without off-policy evaluation,'' in \emph{Proc. Adv. Neural Inf. Process. Syst.}, vol.~34, 2021, pp. 4933--4946.

\bibitem{Fujimoto2021td3bc}
S.~Fujimoto and S.~S. Gu, ``A minimalist approach to offline reinforcement learning,'' in \emph{Proc. Adv. Neural Inf. Process. Syst.}, vol.~34, 2021, pp. 20\,132--20\,145.

\bibitem{kumarConservativeQLearningOffline2020}
A.~Kumar, A.~Zhou, G.~Tucker, and S.~Levine, ``Conservative q-learning for offline reinforcement learning,'' in \emph{Proc. Adv. Neural Inf. Process. Syst.}, vol.~33, 2020, pp. 1179--1191.

\bibitem{kostrikovOfflineReinforcementLearning2021}
I.~Kostrikov, R.~Fergus, J.~Tompson, and O.~Nachum, ``Offline reinforcement learning with fisher divergence critic regularization,'' in \emph{Proc. Int. Conf. Mach. Learn.}, vol. 139, 2021, pp. 5774--5783.

\bibitem{Kostrikovimplicitq-learning2021}
I.~Kostrikov, A.~Nair, and S.~Levine, ``Offline reinforcement learning with implicit q-learning,'' \emph{arXiv preprint arXiv:2110.06169}, 2021.

\bibitem{bai2022pessimistic}
C.~Bai, L.~Wang, Z.~Yang, Z.-H. Deng, A.~Garg, P.~Liu, and Z.~Wang, ``Pessimistic bootstrapping for uncertainty-driven offline reinforcement learning,'' in \emph{Proc. Int. Conf. Learn. Represent.}, 2022.

\bibitem{huangOfflineReinforcementLearning2024}
L.~Huang, B.~Dong, W.~Xie, and W.~Zhang, ``Offline reinforcement learning with behavior value regularization,'' \emph{IEEE Trans. Cybern.}, vol.~54, no.~6, pp. 3692--3704, Jun. 2024.

\bibitem{ran2023policy}
Y.~Ran, Y.-C. Li, F.~Zhang, Z.~Zhang, and Y.~Yu, ``Policy regularization with dataset constraint for offline reinforcement learning,'' in \emph{Proc. Int. Conf. Mach. Learn.}, 2023, pp. 28\,701--28\,717.

\bibitem{fujimotoOffPolicyDeepReinforcement2019}
S.~Fujimoto, D.~Meger, and D.~Precup, ``Off-policy deep reinforcement learning without exploration,'' in \emph{Proc. Int. Conf. Mach. Learn.}, vol.~97, 2019, pp. 2052--2062.

\bibitem{kumarStabilizingOffPolicyQLearning2019}
A.~Kumar, J.~Fu, M.~Soh, G.~Tucker, and S.~Levine, ``Stabilizing off-policy q-learning via bootstrapping error reduction,'' in \emph{Proc. Adv. Neural Inf. Process. Syst.}, vol.~32, 2019.

\bibitem{wuUncertaintyWeightedActorCritic2021}
Y.~Wu, S.~Zhai, N.~Srivastava, J.~M. Susskind, J.~Zhang, R.~Salakhutdinov, and H.~Goh, ``Uncertainty weighted actor-critic for offline reinforcement learning,'' in \emph{Proc. Int. Conf. Mach. Learn.}, vol. 139, 2021, pp. 11\,319--11\,328.

\bibitem{nair2020awac}
A.~Nair, A.~Gupta, M.~Dalal, and S.~Levine, ``Awac: Accelerating online reinforcement learning with offline datasets,'' \emph{arXiv preprint arXiv:2006.09359}, 2020.

\bibitem{Tarasov2023revisitingminimalist}
D.~Tarasov, V.~Kurenkov, A.~Nikulin, and S.~Kolesnikov, ``Revisiting the minimalist approach to offline reinforcement learning,'' in \emph{Proc. Adv. Neural Inf. Process. Syst.}, vol.~36, 2023, pp. 11\,592--11\,620.

\bibitem{silverMasteringGameGo2016}
D.~Silver, A.~Huang, C.~J. Maddison, A.~Guez, L.~Sifre, G.~Van Den~Driessche, J.~Schrittwieser, I.~Antonoglou, V.~Panneershelvam, M.~Lanctot \emph{et~al.}, ``Mastering the game of go with deep neural networks and tree search,'' \emph{Nature}, vol. 529, no. 7587, pp. 484--489, 2016.

\bibitem{zhao2025RLBook}
S.~Zhao, \emph{Mathematical Foundations of Reinforcement Learning}.\hskip 1em plus 0.5em minus 0.4em\relax Springer Nature Press and Tsinghua University Press, 2025.

\bibitem{mao2023SupportedValueRegularization}
Y.~Mao, H.~Zhang, C.~Chen, Y.~Xu, and X.~Ji, ``Supported value regularization for offline reinforcement learning,'' in \emph{Proc. Adv. Neural Inf. Process. Syst.}, vol.~36, 2023, pp. 40\,587--40\,609.

\bibitem{huangMildPolicyEvaluation2024}
L.~Huang, B.~Dong, J.~Lu, and W.~Zhang, ``Mild policy evaluation for offline actor–critic,'' \emph{IEEE Trans. Neural Netw. Learn. Syst.}, vol.~35, no.~12, pp. 17\,950--17\,964, Dec. 2024.

\bibitem{levin2017markov}
D.~A. Levin and Y.~Peres, \emph{Markov chains and mixing times}.\hskip 1em plus 0.5em minus 0.4em\relax American Mathematical Soc., 2017, vol. 107.

\bibitem{fujimotoAddressingFunctionApproximation}
S.~Fujimoto, H.~van Hoof, and D.~Meger, ``Addressing function approximation error in actor-critic methods,'' in \emph{Proc. Int. Conf. Mach. Learn.}, 2018, pp. 1--15.

\bibitem{fu2020d4rl}
J.~Fu, A.~Kumar, O.~Nachum, G.~Tucker, and S.~Levine, ``D4rl: Datasets for deep data-driven reinforcement learning,'' \emph{arXiv preprint arXiv:2004.07219}, 2020.

\bibitem{todorovMuJoCoPhysicsEngine2012}
E.~Todorov, T.~Erez, and Y.~Tassa, ``Mujoco: A physics engine for model-based control.'' in \emph{Proc. IEEE/RSJ Int. Conf. Intell. Robots Syst. (IROS)}.\hskip 1em plus 0.5em minus 0.4em\relax IEEE, Oct. 2012, pp. 5026--5033.

\bibitem{brockmanOpenAIGym2016}
G.~Brockman, V.~Cheung, L.~Pettersson, J.~Schneider, J.~Schulman, J.~Tang, and W.~Zaremba, ``Openai gym,'' \emph{arXiv preprint arXiv:1606.01540}, 2016.

\bibitem{wang2024critic}
Y.~Wang, C.~Yang, Y.~Wen, Y.~Liu, and Y.~Qiao, ``Critic-guided decision transformer for offline reinforcement learning,'' in \emph{Proc. AAAI Conf. Artif. Intell.}, vol.~38, no.~14, 2024, pp. 15\,706--15\,714.

\bibitem{ajay2023isDD}
A.~Ajay, Y.~Du, A.~Gupta, J.~B. Tenenbaum, T.~S. Jaakkola, and P.~Agrawal, ``Is conditional generative modeling all you need for decision making?'' in \emph{Proc. Int. Conf. Learn. Represent.}, 2023.

\bibitem{li2024diffstitch}
G.~Li, Y.~Shan, Z.~Zhu, T.~Long, and W.~Zhang, ``Diffstitch: Boosting offline reinforcement learning with diffusion-based trajectory stitching,'' in \emph{Proc. Int. Conf. Mach. Learn.}, vol. 235, 2024, pp. 28\,597--28\,609.

\bibitem{kim2024decisionODC}
J.~Kim, S.~Lee, W.~Kim, and Y.~Sung, ``Decision convformer: Local filtering in metaformer is sufficient for decision making,'' in \emph{Proc. Int. Conf. Learn. Represent.}, 2024.

\bibitem{chen2023CSVE}
L.~Chen, J.~Yan, Z.~Shao, L.~Wang, Q.~Lin, S.~Rajmohan, T.~Moscibroda, and D.~Zhang, ``Conservative state value estimation for offline reinforcement learning,'' in \emph{Proc. Adv. Neural Inf. Process. Syst.}, vol.~36, 2023, pp. 35\,064--35\,083.

\bibitem{gao2024act}
C.-X. Gao, C.~Wu, M.~Cao, R.~Kong, Z.~Zhang, and Y.~Yu, ``Act: Empowering decision transformer with dynamic programming via advantage conditioning,'' in \emph{Proc. AAAI Conf. Artif. Intell.}, vol.~38, no.~11, 2024, pp. 12\,127--12\,135.

\bibitem{xiao2023MISA}
X.~Ma, B.~Kang, Z.~Xu, M.~Lin, and S.~Yan, ``Mutual information regularized offline reinforcement learning,'' in \emph{Proc. Adv. Neural Inf. Process. Syst.}, vol.~36, 2023, pp. 19\,058--19\,072.

\bibitem{mao2024odiceO-DICE}
L.~Mao, H.~Xu, W.~Zhang, and X.~Zhan, ``{ODICE}: Revealing the mystery of distribution correction estimation via orthogonal-gradient update,'' in \emph{Proc. Int. Conf. Learn. Represent.}, 2024.

\end{thebibliography}

\end{document}